\documentclass{article} 
\usepackage{salesforce,times}

\usepackage{amsmath,amsfonts,bm}

\def\eqref#1{equation~\ref{#1}}

\def\1{\bm{1}}

\DeclareMathAlphabet{\mathsfit}{\encodingdefault}{\sfdefault}{m}{sl}
\SetMathAlphabet{\mathsfit}{bold}{\encodingdefault}{\sfdefault}{bx}{n}

\newcommand{\KL}{D_{\mathrm{KL}}}

\usepackage{hyperref}
\usepackage{url}
\usepackage{bbm}
\usepackage{enumitem}
\usepackage{amsthm}
\usepackage{amssymb}
\usepackage{graphicx}
\usepackage{subcaption}
\usepackage{wrapfig}
\usepackage{algorithm}
\usepackage{algorithmic}
\usepackage{multirow}
\usepackage{booktabs}
\usepackage[table]{xcolor}

\newtheorem{theorem}{Theorem}
\newtheorem{proposition}[theorem]{Proposition}
\newtheorem{corollary}[theorem]{Corollary}
\newtheorem{lemma}[theorem]{Lemma}
\newtheorem{remark}{Remark}

\newcommand{\dt}[1]{{\scriptsize\color{gray}#1}}

\title{RISE: Recursive Improvement via Self-Extrapolating Policy Distillation}

\author{Yang Li, Semih Yavuz, Shafiq Joty}
\sfsetaffiliation{Salesforce AI Research}
\sfsetauthornote{\texttt{\{yli2,syavuz,sjoty\}@salesforce.com}}

\begin{document}

\maketitle

\begin{abstract}
On-policy distillation (OPD) provides dense, per-token supervision for language model post-training, but its effectiveness is bottlenecked by teacher quality: external teachers suffer from distribution mismatch, while self-distillation with privileged conditioning is limited by in-context learning capacity. We propose \textbf{RISE} (\textbf{R}ecursive \textbf{I}mprovement via \textbf{S}elf-\textbf{E}xtrapolating Policy Distillation), which constructs a synthetic teacher directly from the model's own RLVR training trajectory. By extrapolating the displacement between the current checkpoint and a trailing anchor---in parameter space or output logit space---RISE converts a sparse outcome-induced parameter update into a dense token-level target, without any external model or privileged conditioning. RISE combines RLVR and OPD in a complementary loop: outcome rewards ground the extrapolation toward correct reasoning, while the extrapolated teacher refines token-level decisions. Moreover, since the teacher is refreshed every iteration as the student improves, distillation becomes a recursive improvement mechanism rather than a one-shot compression step. Experiments spanning mathematical reasoning, multi-domain STEM, code generation, and multi-turn agentic tasks show that RISE outperforms RLVR-only training and on-policy self-distillation across all settings.
\end{abstract}

\section{Introduction}
\label{sec:intro}

A central aspiration of artificial intelligence is to build systems that can \emph{recursively improve themselves}---becoming increasingly capable from their own experience with minimal human supervision~\citep{yang2026selfimprovement,tao2024survey}. For large language models (LLMs), this means updating parameters from self-generated data via reinforcement learning from human feedback or verifiable rewards (RLHF/RLVR)~\citep{ouyang2022training,shao2024deepseekmath,guo2025deepseek,yu2026dapo}, self-play~\citep{chen2024self,liu2025spiral,huang2025r,fang2026serl}, or rejection sampling fine-tuning~\citep{zelikman2022star,gulcehre2023reinforced,xiong2025minimalist}. All these approaches share a common limitation: the learning signal is \emph{sequence-level}---an outcome reward, a binary accept/reject label, or a scalar preference---providing no guidance on \emph{which tokens} were responsible for success or failure.

On-policy distillation (OPD) offers a richer signal for weight-based self-improvement. Rather than a single scalar per response, OPD provides dense per-token supervision: a teacher specifies a full next-token distribution at every position, telling the model not just \emph{whether} its response was correct but \emph{how} to improve each token-level decision~\citep{song2026survey}. However, OPD's promise hinges on a critical question: \emph{where does a reliable teacher come from?} External teachers suffer from distribution mismatch~\citep{li2026rethinking,zhu2026many}; on-policy self-distillation (OPSD) with privileged conditioning often fails to reflect token-level correctness, due to limited in-context learning (ICL) capability and uninformative privileged information~\citep{zhao2026self,hubotter2026reinforcement,li2026learning}; and subsequent heuristic remedies---token-level gating~\citep{xu2026tip,lu2026self}, divergence mixing~\citep{jung2025todi,jin2026entropy}, DAgger-style sampling~\citep{li2026revisiting,zhao2026decoupling}, trajectory refinement~\citep{yang2026reasoning}---treat symptoms rather than the root cause. All these approaches accept a flawed teacher as given; none addresses the fundamental question: \emph{what should the teacher be?}

We propose a shift in perspective. A natural candidate teacher for a policy $\pi_\theta$ is its own \emph{converged future self}---the policy $\pi^*$ that training is converging toward. While $\pi^*$ is unknown, the training trajectory reveals the direction of convergence. Let $\varphi$ map a policy to a vector space where linear operations are meaningful (e.g., logits or parameters).
The displacement $\varphi(\pi_{\theta'}) - \varphi(\pi_\theta)$ between the current checkpoint $\theta'$ and an earlier anchor $\theta$ captures the direction of recent improvement. We can \textbf{extrapolate} this displacement to synthesize a future teacher:
\begin{equation}
    \varphi(\pi_{\text{future}}) = \varphi(\pi_{\theta}) + \beta \cdot \big(\varphi(\pi_{\theta'}) - \varphi(\pi_{\theta})\big), \quad \beta > 1.
    \label{eq:rise}
\end{equation}
When $\beta = 1$ the teacher equals the current checkpoint (no distillation signal); when $\beta > 1$ it amplifies the model's most recent update, projecting beyond its current state along the training trajectory. Recent analyses reveal that post-training updates are dominated by a low-rank subspace and evolve near-linearly~\citep{cai2025predictability,wang2026not}, making extrapolation along the training direction a principled approximation (\S\ref{sec:background}). This construction eliminates the core pathologies of prior OPD: reduced distribution mismatch (the teacher extrapolates the student's own optimization path, so it remains nearby in weight and distribution space), no privileged conditioning, and no external model---only previous checkpoints, naturally available during training.

The construction above is iterative by design: each distillation step refines the current policy, which in turn provides a fresh displacement for the next extrapolation. For this recursion to converge, the extrapolation direction must point toward genuine improvement---yet extrapolation itself is direction-agnostic, amplifying whatever update the model made regardless of quality. We resolve this by using RLVR to produce the update $\theta \to \theta'$: outcome rewards ground the displacement in verified improvement. On-policy distillation then applies the extrapolated teacher to refine $\theta'$, providing the per-token supervision that outcome rewards alone cannot. The two signals are complementary: RLVR ensures the direction is meaningful; OPD ensures the refinement is fine-grained (Fig.~\ref{fig:rise-overview}).

\begin{figure}[t]
\centering
\begin{subfigure}[b]{0.44\textwidth}
\centering
\includegraphics[width=\textwidth]{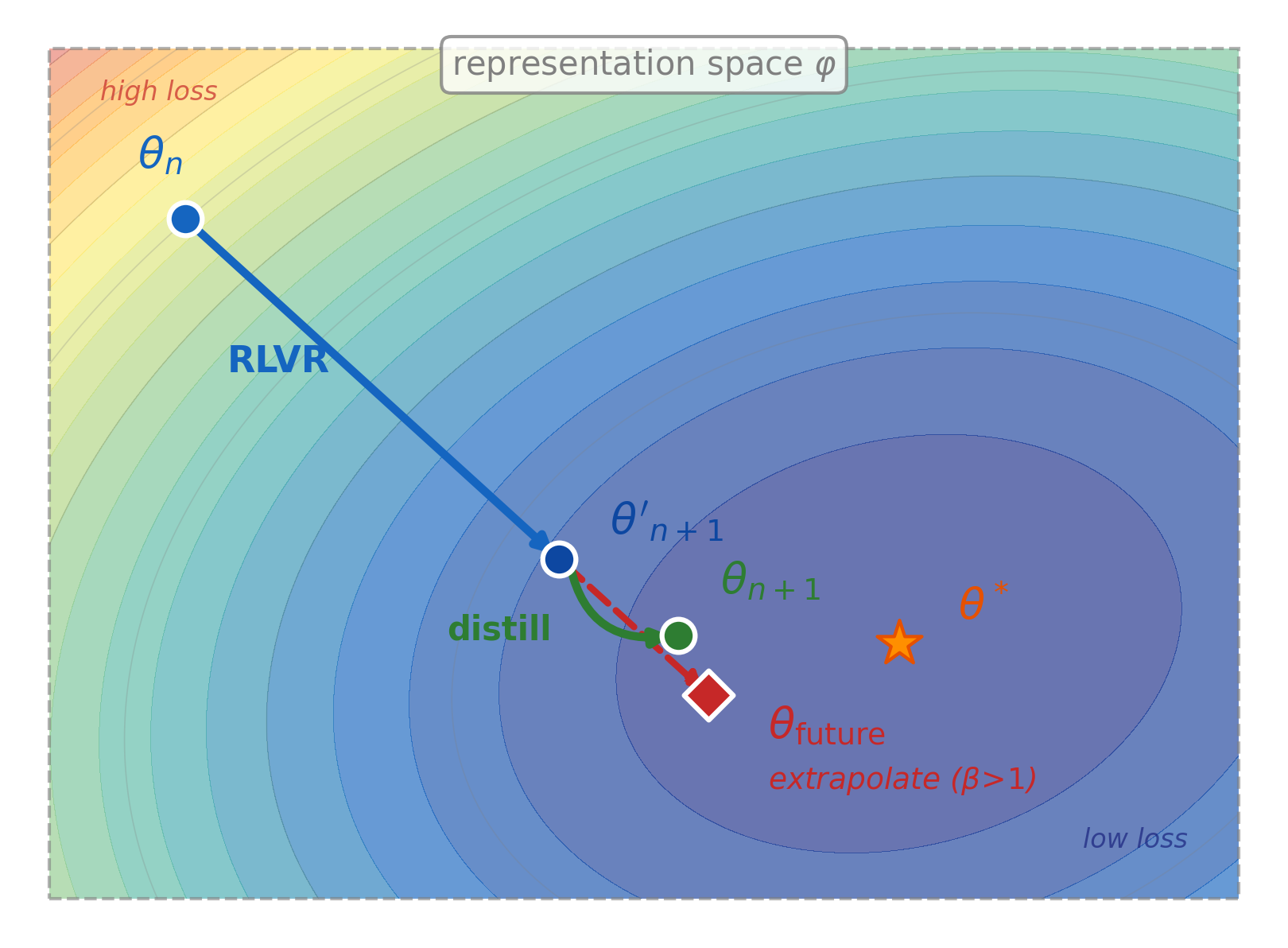}
\caption{Extrapolation geometry.}
\label{fig:extrapolation}
\end{subfigure}
\hfill
\begin{subfigure}[b]{0.54\textwidth}
\centering
\includegraphics[width=\textwidth]{figures/figure1b.png}
\caption{The RISE training loop.}
\label{fig:pipeline}
\end{subfigure}

\vspace{-2pt}
\caption{\textbf{RISE overview.} (a)~RLVR updates $\theta_n \to \theta_{n\text{+}1}'$ (blue); extrapolation amplifies this displacement to construct $\theta_{\text{future}}$ (red); OPD distills $\theta_{\text{future}}$ into the student, yielding $\theta_{n\text{+}1}$ (green). (b)~The training loop: RLVR grounds the direction, while OPD refines token-level decisions.}
\label{fig:rise-overview}
\vspace{-10pt}
\end{figure}

Our contributions are threefold. \textbf{(1)}~We identify teacher quality as the central bottleneck of OPD and propose \textbf{RISE}, which constructs the teacher by extrapolating the model's own RLVR trajectory, requiring no external model and no privileged context. Because the teacher is refreshed every iteration from the student's latest update, distillation becomes a recursive improvement loop rather than a one-shot compression step (\S\ref{sec:method}). \textbf{(2)}~We unify this construction under a representation map $\varphi$ and instantiate it in two spaces---\emph{logit-space} (geometric mixture of output distributions) and \emph{weight-space} (task arithmetic)---characterizing their computational and statistical trade-offs and ablating both (\S\ref{sec:method}, \S\ref{sec:experiments}). \textbf{(3)}~We show that RISE outperforms RLVR and OPSD baselines across mathematical reasoning, multi-domain STEM, code generation, and multi-turn agentic tasks, with improved sample efficiency, at no additional sampling cost and a modest 1.3--1.6$\times$ wall-time overhead (\S\ref{sec:experiments}).

\section{Background and Related Work}
\label{sec:background}

\subsection{Reinforcement Learning with Verifiable Rewards}
\label{sec:bg-rlvr}

RLVR optimizes a language model policy $\pi_\theta$ to maximize expected reward on tasks with objectively checkable outputs. Given a prompt $x$, the model generates a response $y = (y_1, \ldots, y_T) \sim \pi_\theta(\cdot \mid x)$ and receives an outcome reward $R(x, y) \in [0, 1]$. Policy gradient algorithms such as GRPO~\citep{guo2025deepseek} and DAPO~\citep{yu2026dapo} estimate per-token advantages from group-normalized outcome rewards and update the policy via clipped surrogate objectives:
\begin{equation}\textstyle
    \mathcal{L}_{\text{PG}} = -\mathbb{E}\left[\sum_{t=1}^{T} \hat{A}_t \cdot \min\!\Big(\rho_t, \operatorname{clip}(\rho_t, 1\!-\!\epsilon, 1\!+\!\epsilon)\Big)\right], \quad \rho_t = \frac{\pi_\theta(y_t \mid x, y_{<t})}{\pi_{\text{old}}(y_t \mid x, y_{<t})},
    \label{eq:pg}
\end{equation}
where $\hat{A}_t$ is the advantage estimate (constant across tokens within a response in GRPO/DAPO). While effective, the outcome-level reward assigns identical advantage to every token in a response, creating a credit assignment bottleneck: the gradient signal cannot distinguish helpful reasoning steps from irrelevant or harmful ones.

\subsection{On-policy Distillation}
\label{sec:bg-opd}

On-policy distillation (OPD) augments policy gradient training with per-token distributional supervision from a teacher model $\pi_T$. At each position $t$, the student minimizes a divergence between its distribution and the teacher's:
\begin{equation}\textstyle
    \mathcal{L}_{\text{OPD}} = \mathbb{E}_{y \sim \pi_\theta}\left[\sum_{t=1}^{T} \KL\!\left(\pi_\theta(\cdot \mid x, y_{<t}) \;\|\;\pi_T(\cdot \mid x, y_{<t})\right)\right].
    \label{eq:opd}
\end{equation}
Unlike the outcome reward, this loss provides a rich gradient at every token: the teacher's full distribution over the vocabulary specifies not only which token is best, but the relative desirability of all alternatives. In practice, computing the divergence over the full vocabulary is expensive, so most implementations resort to either a top-$K$ truncation or a REINFORCE-style sample-based estimator~\citep{lu2025onpolicydistillation,agarwal2024policy}. The critical challenge is the choice of teacher $\pi_T$.
Existing approaches fall into two categories, each with fundamental limitations.

\textbf{External teacher OPD.~}
The most direct approach uses a separate, stronger model as $\pi_T$---for example, distilling from a larger model in the same family~\citep{mimo2025flash,zeng2026glm}. The teacher evaluates $\pi_T(\cdot \mid x, y_{<t})$ conditioned on the student's generated prefix $y_{<t}$. As training progresses, the student explores increasingly diverse reasoning paths, many unseen by the teacher. The teacher's distribution conditioned on such out-of-distribution prefixes becomes unreliable: it reflects how the teacher would continue from an alien context, not whether the student's reasoning is sound. This \emph{prefix distribution mismatch} degrades teacher quality precisely when the student needs guidance most---at novel or challenging reasoning steps~\citep{li2026rethinking,zhu2026many}.

\textbf{On-policy self-distillation (OPSD).~}
An alternative formulation, on-policy self-distillation (OPSD), uses the \emph{same} base model as both teacher and student, conditioning the teacher on privileged information such as correct solutions or environmental feedback~\citep{zhao2026self,hubotter2026reinforcement}. The teacher distribution becomes $\pi_T(\cdot \mid x, C, y_{<t})$, where $C$ is the privileged context. The hope is that ICL enables the model to produce a more informed distribution when conditioned on $C$. However, this approach has its own limitations. First, the model's ICL capability may be insufficient to meaningfully incorporate the privileged context~\citep{li2026learning}. Second, the teacher and student see different inputs, creating a different form of distribution mismatch: the teacher's distribution may reflect the privileged context in ways that are not transferable or even harmful to the student's unconditioned setting~\citep{kim2026does,harne2026privileged}.

\textbf{Heuristic Remedies.~}
Recognizing these limitations, recent work has proposed several heuristic strategies. \textbf{Token-level gating} selectively weights the per-token loss using the teacher--student probability gap~\citep{xu2026tip,lu2026self,xing2026trust}, token entropy~\citep{xu2026tip,ko2026scaling}, or teacher confidence~\citep{liu2026prefix}. \textbf{Divergence mixing} interpolates between forward and reverse KL or selects adaptively per token, trading mode coverage against mode seeking~\citep{hubotter2026reinforcement,jung2025todi,jin2026entropy,jia2026asymmetric,jang2026stable}. \textbf{DAgger-style rollout mixing}~\citep{ross2011reduction} interleaves teacher and student tokens during generation to keep prefixes on-distribution~\citep{li2026revisiting,agrawal2026reinforcement,zhao2026decoupling}. \textbf{Trajectory refinement} refines rollouts before distillation so the teacher conditions on higher-quality prefixes~\citep{yang2026reasoning,jiang2026trajectory}. All these remedies accept a flawed teacher and engineer ways to tolerate its noise, rather than questioning whether a better teacher exists. In \S\ref{sec:method}, we show that the model's own training trajectory can be used to construct such a teacher.

\subsection{Task Arithmetic and Linear Trajectories}
\label{sec:bg-task-arithmetic}

RISE builds on convergent empirical evidence that post-training trajectories are low-dimensional and approximately linear. In the model merging literature, \textbf{task vectors} $\boldsymbol{\tau} = \theta_{\text{ft}} - \theta_{\text{pre}}$ encode task-specific knowledge in a composable, approximately linear fashion~\citep{ilharco2022editing}, and fine-tuning trajectories exhibit approximate \textbf{linear mode connectivity}---checkpoints along the training path can be linearly interpolated without loss barriers~\citep{frankle2020linear}. The related \textbf{model soups} and \textbf{weight averaging} literature~\citep{wortsman2022model,izmailov2018averaging} demonstrates that uniform or stochastic averaging of checkpoints along (or near) the training trajectory consistently improves generalization, reinforcing that these trajectories lie in well-behaved, low-curvature regions of parameter space amenable to linear operations.

More recently, analyses of RLVR training reveal that parameter updates are dominated by a \textbf{rank-1 subspace} whose projection coefficient evolves near-linearly throughout training~\citep{cai2025predictability,wei2026you}, and that these low-rank weight dynamics propagate to linear evolution in output log-probabilities~\citep{wang2026not}. A parallel finding holds for OPD, whose cumulative updates rapidly lock into a narrow low-dimensional channel~\citep{shen2026geometry}. Together, these properties justify extrapolation along the training direction: if $\theta_0 \to \theta_n$ is approximately linear and confined to a low-dimensional subspace, then $\theta_0 + \beta \cdot (\theta_n - \theta_0)$ with $\beta > 1$ is a principled estimate of a more capable future checkpoint. Whereas model soups exploit linearity through interpolation ($\beta \in [0,1]$) for robustness, RISE exploits the same structure through extrapolation ($\beta > 1$) for capability improvement. RISE does not, however, require strict linearity: confinement to a low-dimensional subspace limits how far extrapolation can deviate from the true trajectory, and our own runs confirm this---three directions capture ${\sim}87\%$ of the variance (Appendix~\ref{app:linearity}).

\subsection{Joint RLVR and OPD Training}
\label{sec:bg-combine}

RLVR and OPD provide complementary signals, and recent work seeks to combine them. The OPD gradient is dense but bounded by teacher quality, while the RLVR gradient is sparse but can drive the student beyond the teacher's capability ceiling~\citep{song2026survey}. The most direct approach augments the policy gradient objective with an auxiliary KL divergence term toward a teacher distribution~\citep{xu2025kdrl,ramos2026combining,zhang2026reinforcement}. ExOPD~\citep{yang2026learning} and others~\citep{wang2026openclaw,oh2026kl} formalize an equivalence between standard OPD and dense KL-constrained RL, where the log-probability ratio between teacher and student serves as a per-token reward. RLSD~\citep{yang2026self} uses OPD to reweight the advantage from GRPO for finer-grained credit assignment. An alternative sequential strategy allocates RL to the teacher for capability discovery and OPD to the student for compression~\citep{xu2026beyond}. Whether combining with an external teacher or a privileged self-teacher, all these approaches take the teacher as given and focus on how to best integrate its signal with RL. RISE departs from this paradigm by eliminating the need for an external or privileged-conditioned teacher altogether: the teacher is constructed by extrapolating the model's own training trajectory.

Among these, ExOPD's reward extrapolation resembles RISE structurally: its optimal policy is the same geometric mixture as RISE's logit-space formulation (Eq.~\ref{eq:logit-space}). The key distinction is not merely teacher source but teacher dynamics. ExOPD fixes a static external policy as teacher before distillation begins; no matter how well the student trains, the teacher gap $J(\pi^*) - J(\pi_T)$ in Theorem~\ref{thm:suboptimality} remains constant---a hard ceiling. RISE's teacher is non-stationary by construction: as the student improves via RLVR, the displacement vector updates, and the extrapolated teacher advances in lockstep. This eliminates the static teacher ceiling and converts OPD from a one-shot compression step into a recursive improvement loop. A direct empirical comparison is inapplicable because ExOPD requires an external stronger model---precisely what RISE eliminates.

\section{Method}
\label{sec:method}

Theorem~\ref{thm:suboptimality} (Appendix~\ref{app:theory}) formalizes \S\ref{sec:intro}'s intuition: the optimal teacher for $\pi_{\theta_n}$ is $\pi^*$, but since $\pi^*$ is unknown, RISE approximates it by extrapolating the model's own RLVR-grounded training trajectory, then distills that teacher's per-token distribution into the student.

\subsection{Self-Extrapolated Policy Distillation}
\label{sec:method-rise}

Consider a policy $\pi_{\theta_n}$ updated by RLVR to $\pi_{\theta_{n+1}'}$. Let $\varphi$ be a representation map into a vector space where linear operations are meaningful. The self-extrapolated teacher is defined as:
\begin{equation}
    \varphi(\pi_{\text{future}}) = \varphi(\pi_{\theta_n}) + \beta \cdot \big(\varphi(\pi_{\theta_{n+1}'}) - \varphi(\pi_{\theta_n})\big), \quad \beta > 1.
    \label{eq:unified-extrapolation}
\end{equation}
The extrapolation scale $\beta$ controls how far beyond the current policy we project: $\beta = 1$ recovers $\pi_{\theta_{n+1}'}$, and $\beta > 1$ continues along the improvement direction. The choice of $\varphi$ yields a family of instantiations with different computational and statistical properties.

\textbf{Weight-space extrapolation ($\varphi = \theta$).~}
Setting $\varphi$ to the identity on parameters gives
\begin{equation}\textstyle
    \theta_{\text{future}} = \theta_n + \beta \cdot (\theta_{n+1}' - \theta_n),
    \label{eq:weight-space}
\end{equation}%
which is precisely task arithmetic~\citep{ilharco2022editing} with an extrapolation coefficient. The teacher is then the model defined by $\theta_{\text{future}}$.

\textbf{Logit-space extrapolation ($\varphi = \log \pi$).~}
Let $s_t \triangleq (x, y_{<t})$ denote the context at position $t$. Setting $\varphi$ to map each policy to its per-token output log-probabilities gives:
\begin{equation}
    \log \pi_{\text{future}}(\cdot \mid s_t) = \log \pi_{\theta_n}(\cdot \mid s_t) + \beta \cdot \big(\log \pi_{\theta_{n+1}'}(\cdot \mid s_t) - \log \pi_{\theta_n}(\cdot \mid s_t)\big) + \text{const},
    \label{eq:logit-space}
\end{equation}
where the constant ensures normalization. Equivalently, $\pi_{\text{future}} \propto \pi_{\theta_n}^{1-\beta} \cdot \pi_{\theta_{n+1}'}^{\beta}$---a \emph{geometric mixture} that amplifies the probability ratio $\pi_{\theta_{n+1}'} / \pi_{\theta_n}$.

\begin{remark}[Connection to KL regularization]
\label{rem:kl}
Given the geometric mixture, the distillation loss decomposes as
\begin{equation}
    \KL(\pi_\theta \| \pi_{\text{future}}) = -(\beta-1)\,\KL(\pi_\theta \| \pi_{\theta_n}) + \beta\,\KL(\pi_\theta \| \pi_{\theta_{n+1}'}) + \log Z,
    \label{eq:kl-decomp}
\end{equation}
where $Z$ is the normalization constant of $\pi_{\text{future}}$, and both $\pi_{\theta_n}$ and $\pi_{\theta_{n+1}'}$ are treated as fixed (stop-gradient) when constructing $\pi_{\text{future}}$. For $\beta > 1$, the first term has a negative coefficient and is repulsive---it pushes the current policy away from the anchor, continuing the RLVR improvement direction; the second term regularizes it toward the post-RLVR checkpoint $\pi_{\theta_{n+1}'}$, preventing overshoot during OPD. Both terms operate at the \emph{token level}, providing fine-grained credit assignment that a scalar outcome reward cannot.
\end{remark}

\textbf{Relationships between instantiations.~}
Let $f(\theta)$ denote the mapping from parameters to output logits.
Weight-space extrapolation computes $f(\theta_n + \beta \cdot \Delta\theta)$, while logit-space computes $f(\theta_n) + \beta \cdot \Delta f$---the \emph{first-order Taylor approximation} of weight-space extrapolation around $\theta_n$.
The two coincide exactly when $f$ is linear; for neural networks they diverge. Weight-space produces a coherent model whose cross-position predictions are jointly consistent, at the cost of materializing parameters and one teacher forward pass per OPD step; logit-space needs no parameter manipulation and fixes the teacher before OPD begins.

\subsection{Distillation Loss Design}
\label{sec:method-design}

Computing a divergence over the full vocabulary is prohibitively expensive. Following prior works~\citep{hubotter2026reinforcement,zhao2026self}, we use a top-$K$ approximation. For logit-space extrapolation, the unbiased procedure would extrapolate over the full vocabulary and then select the top-$K$ of $\pi_{\text{future}}$; however, $\pi_{\text{future}}$ does not exist as a model---it is defined only through the logit-space formula---so obtaining its full-vocabulary logits would require keeping two extra model copies in memory or caching $O(V)$ logits per position. Instead, letting $S = \text{Top}_K(\pi_{\theta_{n+1}'})$, we project both $\pi_{\theta_{n+1}'}$ and $\pi_{\theta_n}$ onto the same $(K\!+\!1)$-simplex (retaining log-probabilities at $S$ and appending a tail bucket for the remaining mass, each renormalized), then apply Eq.~\ref{eq:logit-space} to all $K\!+\!1$ log-probabilities:
\begin{equation}
    \log \pi_{\text{future}}(v \mid \cdot) = \log \pi_{\theta_n}(v \mid \cdot) + \beta \cdot \big(\log \pi_{\theta_{n+1}'}(v \mid \cdot) - \log \pi_{\theta_n}(v \mid \cdot)\big), \quad v \in S \cup \{\text{tail}\}.
    \label{eq:topk-extrapolation}
\end{equation}
With $K\!=\!100$, the top-$K$ tokens account for essentially all probability mass under typical LLM distributions, so any bias from using $S$ instead of $\text{Top}_K(\pi_{\text{future}})$ is negligible (Appendix~\ref{app:topk-bias}). For weight-space extrapolation, a forward pass through $\theta_{\text{future}}$ (Eq.~\ref{eq:weight-space}) yields $\pi_{\text{future}}$ exactly, and we evaluate it at the student's top-$K$ indices with a tail bucket appended and renormalized.

In both cases, $\pi_{\text{future}}$ is held fixed (stop-gradient) throughout all OPD gradient steps. The RISE distillation loss is:
\begin{equation}\textstyle
    \mathcal{L}_{\text{RISE}} = \mathbb{E}_{y \sim \pi_{\theta}}\!\left[\sum_{t=1}^{T} \KL\!\left(\pi_{\theta}(\cdot \mid s_t) \;\|\; \mathrm{sg}\!\left[\pi_{\text{future}}(\cdot \mid s_t)\right]\right)\right],
    \label{eq:rise-loss}
\end{equation}
where the KL divergence is computed over the $(K\!+\!1)$-dimensional simplex.

\textbf{Choice of divergence.~} Our analysis (Remark~\ref{rem:kl}, Theorem~\ref{thm:suboptimality}) uses reverse KL for an exact trust-region decomposition. In practice we replace KL with the Jensen--Shannon divergence (JSD) in Eq.~\ref{eq:rise-loss}, which is bounded by $\log 2$ thus avoiding the numerical instability of reverse KL. Since $\mathrm{JSD}(P \| Q) \leq \tfrac{1}{2}\KL(P \| Q)$, the trust-region guarantees from the KL analysis carry over as conservative bounds. The two losses also share the same unique minimizer ($\pi_\theta = \pi_{\text{future}}$), so the teacher targeted is identical.

\subsection{Training Procedure}
\label{sec:method-procedure}

Each RISE iteration alternates between two phases:
\begin{enumerate}[leftmargin=*, itemsep=1pt, topsep=0pt]
    \item \textbf{RLVR phase.} Sample rollouts from the current policy $\pi_{\theta_n}$, compute outcome rewards, and update the policy via a policy gradient method (e.g., GRPO) to obtain $\theta_{n+1}'$.
    \item \textbf{OPD phase.} Construct $\pi_{\text{future}}$ from $\pi_{\theta_{n+1}'}$ and anchor $\pi_{\theta_n}$ via logit-space (Eq.~\ref{eq:logit-space}) or weight-space (Eq.~\ref{eq:weight-space}) extrapolation. Distill $\pi_{\text{future}}$ into $\pi_{\theta_{n+1}'}$ by minimizing $\mathcal{L}_{\text{RISE}}$ (Eq.~\ref{eq:rise-loss}), yielding $\theta_{n+1}$.
\end{enumerate}
The two phases are complementary: RLVR discovers capability improvements via sparse outcome rewards, while OPD compresses the extrapolated teacher's token-level distribution into the current policy (Algorithms~\ref{alg:rise} and~\ref{alg:rise-weight}). Both phases operate on the \emph{same} set of rollouts---the responses $y \sim \pi_{\theta_n}$ sampled for RLVR are reused for OPD, which therefore adds no additional sampling cost. Reusing pre-RLVR rollouts introduces mild off-policy-ness (contexts come from pre-RLVR), but the distributional loss is well-defined for any input sequence without importance sampling correction. We confirm empirically that this shift has negligible impact (\S\ref
{sec:exp-ablations}).

Algorithm~\ref{alg:rise} gives the full pseudocode for logit-space extrapolation, where the teacher distribution is built by extrapolating cached top-$K$ logits from the post-RLVR checkpoint and the anchor without materializing a separate model. Algorithm~\ref{alg:rise-weight} gives the weight-space variant, where the extrapolated parameters $\theta_{\text{future}}$ are materialized explicitly and a forward pass through them produces the teacher distribution. Both variants share the same two-phase structure and differ only in how $\pi_{\text{future}}$ is computed.

\begin{algorithm}[t]
\caption{RISE (logit-space). The teacher is built \emph{once} per iteration, before the OPD loop.}
\label{alg:rise}
\begin{algorithmic}[1]
\REQUIRE Initial policy $\pi_{\theta_0}$, extrapolation scale $\beta_0$, top-$K$, anchor rate $\eta$, iterations $N$
\STATE Initialize anchor $\theta_{\text{anchor}} \leftarrow \theta_0$
\FOR{$n = 0, 1, \ldots, N-1$}
    \STATE $\beta_n \leftarrow 1 + (\beta_0 - 1) \cdot (1 - n/N)$ \hfill $\triangleright$ Extrapolation decay
    \STATE \textbf{// RLVR phase}
    \STATE Sample rollouts $\mathcal{D} = \{(x_i, y_i)\} \sim \pi_{\theta_n}$, compute rewards $R(x_i, y_i)$
    \STATE $\theta_{n+1}' \leftarrow \text{PolicyGradient}\big(\theta_n, \{(x_i, y_i, R_i)\}\big)$
    \STATE \textbf{// Teacher construction} \hfill $\triangleright$ two forward passes over $\mathcal{D}$
    \STATE At every position of $\mathcal{D}$: set $S \leftarrow \text{Top}_K(\pi_{\theta_{n+1}'})$ and project $\pi_{\theta_{n+1}'}, \pi_{\theta_{\text{anchor}}}$ onto $S \cup \{\text{tail}\}$
    \STATE Cache $\log \pi_{\text{future}} \leftarrow \log \pi_{\theta_{\text{anchor}}} + \beta_n \cdot (\log \pi_{\theta_{n+1}'} - \log \pi_{\theta_{\text{anchor}}})$ \hfill $\triangleright$ Eq.~\ref{eq:topk-extrapolation}
    \STATE \textbf{// OPD phase}
    \STATE $\theta \leftarrow \theta_{n+1}'$
    \FOR{each minibatch $\mathcal{B} \subset \mathcal{D}$}
        \STATE Retrieve cached $\pi_{\text{future}}$ at the positions of $\mathcal{B}$ \hfill $\triangleright$ no teacher forward pass
        \STATE $\theta \leftarrow \mathrm{Optimizer}\!\left(\theta,\; \nabla_{\theta}\,\mathcal{L}_{\text{RISE}}(\mathcal{B})\right)$ \hfill $\triangleright$ Eq.~\ref{eq:rise-loss}
    \ENDFOR
    \STATE $\theta_{n+1} \leftarrow \theta$
    \STATE Update anchor: $\theta_{\text{anchor}} \leftarrow (1\!-\!\eta)\,\theta_{\text{anchor}} + \eta\,\theta_{n+1}$ \hfill $\triangleright$ $\eta\!=\!1$: previous ckpt; $\eta\!<\!1$: EMA
\ENDFOR
\RETURN $\pi_{\theta_N}$
\end{algorithmic}
\end{algorithm}

\begin{algorithm}[t]
\caption{RISE (weight-space). The teacher is evaluated \emph{inside} the OPD loop.}
\label{alg:rise-weight}
\begin{algorithmic}[1]
\REQUIRE Initial policy $\pi_{\theta_0}$, extrapolation scale $\beta_0$, top-$K$, anchor rate $\eta$, iterations $N$
\STATE Initialize anchor $\theta_{\text{anchor}} \leftarrow \theta_0$
\FOR{$n = 0, 1, \ldots, N-1$}
    \STATE $\beta_n \leftarrow 1 + (\beta_0 - 1) \cdot (1 - n/N)$ \hfill $\triangleright$ Extrapolation decay
    \STATE \textbf{// RLVR phase}
    \STATE Sample rollouts $\mathcal{D} = \{(x_i, y_i)\} \sim \pi_{\theta_n}$, compute rewards $R(x_i, y_i)$
    \STATE $\theta_{n+1}' \leftarrow \text{PolicyGradient}\big(\theta_n, \{(x_i, y_i, R_i)\}\big)$
    \STATE \textbf{// Teacher construction}
    \STATE Materialize $\theta_{\text{future}} \leftarrow \theta_{\text{anchor}} + \beta_n \cdot (\theta_{n+1}' - \theta_{\text{anchor}})$ \hfill $\triangleright$ Eq.~\ref{eq:weight-space}
    \STATE \textbf{// OPD phase}
    \STATE $\theta \leftarrow \theta_{n+1}'$
    \FOR{each minibatch $\mathcal{B} \subset \mathcal{D}$}
        \STATE At every position of $\mathcal{B}$: set $S \leftarrow \text{Top}_K(\pi_{\theta})$ from the student forward pass
        \STATE Forward pass through $\theta_{\text{future}}$ to get $\pi_{\text{future}}$ on $S \cup \{\text{tail}\}$ \hfill $\triangleright$ one teacher forward per step
        \STATE $\theta \leftarrow \mathrm{Optimizer}\!\left(\theta,\; \nabla_{\theta}\,\mathcal{L}_{\text{RISE}}(\mathcal{B})\right)$ \hfill $\triangleright$ Eq.~\ref{eq:rise-loss}
    \ENDFOR
    \STATE $\theta_{n+1} \leftarrow \theta$
    \STATE Update anchor: $\theta_{\text{anchor}} \leftarrow (1\!-\!\eta)\,\theta_{\text{anchor}} + \eta\,\theta_{n+1}$
\ENDFOR
\RETURN $\pi_{\theta_N}$
\end{algorithmic}
\end{algorithm}

\textbf{Extrapolation Decay.~}
A fixed $\beta$ is unsuitable across the full training trajectory. Under the linear-trajectory model (Proposition~\ref{prop:extrapolation-gap}, Appendix~\ref{app:theory}), the safe $\beta$ range narrows as the policy approaches the optimum: beyond a $\beta$-threshold the extrapolated teacher overshoots $\pi^*$ and the suboptimality bound worsens. We therefore apply a monotonically decreasing schedule $\beta_n \to 1$ (e.g., linear: $\beta_n = 1 + (\beta_0 - 1)(1 - n/N)$). Large $\beta_n$ early in training extrapolates aggressively when the policy is far from optimal; as training progresses, decreasing $\beta_n$ keeps the teacher within the valid range.

\textbf{Anchor Dynamics.~}
The anchor $\theta_{\text{anchor}}$---the reference checkpoint used to compute the displacement vector---is by default the previous checkpoint $\theta_n$. Alternatively, an EMA anchor $\theta_{\text{anchor}} \leftarrow (1-\eta)\theta_{\text{anchor}} + \eta\,\theta_{n+1}$ both smooths and enlarges the displacement: the anchor lags behind the current policy, averaging the direction over multiple iterations while increasing $\|\theta_{n+1}' - \theta_{\text{anchor}}\|$. The previous-checkpoint anchor is the special case $\eta=1$. We ablate this choice in \S\ref{sec:exp-ablations}.

\textbf{Role of OPD.~}
Why distill from $\pi_{\text{future}}$ rather than adopt it directly as the next policy? The extrapolated point lies beyond the policy's trust region: adopting it wholesale risks degenerate behavior at large $\beta$, and amplifies any noise in the RLVR step. OPD instead acts as a \emph{trust-region projection}---it moves the policy toward $\pi_{\text{future}}$'s token-level distribution while the divergence term in $\mathcal{L}_{\text{RISE}}$ keeps the update anchored near $\pi_{\theta_{n+1}'}$; we validate this choice and the need for RLVR grounding in \S\ref{sec:exp-analysis}. Importantly, RISE does not introduce new task information beyond what RLVR provides---no external data or model enters the system. Rather, it exploits the model's own inductive structure to convert a sparse outcome-induced parameter update into a dense token-level target, redistributing the same training signal into a form that enables finer-grained credit assignment.

\section{Experiments}
\label{sec:experiments}

We evaluate RISE across multiple model scales, architectures, and domains to answer five questions: (1)~Does RISE improve over RLVR-only training and existing OPSD methods? (2)~Does the improvement hold across model scales and families? (3)~Does RISE preserve performance on out-of-distribution tasks while improving in-domain accuracy? (4)~What mechanisms drive the improvement? (5)~Which design choices matter most?

\subsection{Setup}
\label{sec:exp-setup}

\textbf{Models and datasets.~}
We evaluate RISE on four task families. \emph{Mathematical reasoning}: Qwen3-8B, Qwen3-1.7B, and Qwen3-1.7B-Base~\citep{qwen3technicalreport} on DAPOMath~\citep{yu2026dapo}, and OLMo3-7B-Instruct-SFT~\citep{olmo2025olmo3} on OpenR1-Math-46K~\citep{yan2026learning}, with GPQA-Diamond, IFEval, and MMLU-Pro as out-of-distribution checks. \emph{Multi-domain (math + STEM)}: Qwen3-4B-Base on a mixed corpus following Guru~\citep{cheng2026revisiting}, using only their STEM data and replacing their math split with DAPOMath (the original problems are low complexity). \emph{Code generation}: Qwen3-8B-Base on Skywork-OR1-Code~\citep{he2025skywork}. \emph{Agentic tasks}: Qwen2.5-3B-Instruct~\citep{qwen2.5} on ALFWorld~\citep{shridhar2020alfworld} and WebShop~\citep{yao2022webshop} following GIGPO~\citep{feng2025group}. Evaluation benchmarks, sample counts, and full training details are in Appendix~\ref{sec:app-exp-setup}.

\textbf{Baselines.~}
We report the base/SFT model to establish the starting point and GRPO~\citep{guo2025deepseek} as the RLVR-only baseline. To isolate the contribution of RISE's teacher construction, we compare against three methods that also pair GRPO's sequence-level signal with per-token self-distillation from the \emph{same} privileged teacher---the student conditioned on a sibling correct solution---but differ in how they integrate it: GRPO+SDPO~\citep{hubotter2026reinforcement} adds an auxiliary KL loss, SDAR~\citep{lu2026self} gates the GRPO advantage by the teacher--student probability gap, and RLSD~\citep{yang2026self} reweights advantage estimates for finer-grained credit assignment. We exclude external-teacher OPD, which requires a separate stronger model, whereas RISE and all OPSD baselines use only the model itself (Appendix~\ref{sec:app-baseline}).

\textbf{RISE configurations.~}
We report both logit-space and weight-space RISE. Default hyperparameters: $\beta_0 = 1.2$, linear decay to $\beta_N = 1$, $K = 100$ ($K = 20$ for code, where the output distribution is more peaked); for the anchor we use $\eta = 0.1$ (EMA) for Qwen models and $\eta = 1$ (previous checkpoint) for OLMo (ablated in \S\ref{sec:exp-ablations}); the distillation loss uses Jensen--Shannon divergence (cf.\ \S\ref{sec:method-design}). All other training hyperparameters (learning rate, batch size, rollout length) match the GRPO baseline for fair comparison. All methods are trained for one epoch on the respective training set.

\subsection{Main Results}
\label{sec:exp-results}

\begin{table}[t]
\centering
\caption{\textbf{Mathematical reasoning results.} Accuracy (\%) across model configurations. We evaluate on in-domain math benchmarks and out-of-distribution (OOD) tasks to assess preservation. Best in \textbf{bold}, second best \underline{underlined}.}
\label{tab:main-math}
\small
\setlength{\tabcolsep}{2.5pt}
\begin{tabular}{l|cccccc>{\columncolor{yellow!15}}c|ccc>{\columncolor{yellow!15}}c}
\toprule
\multirow{2}{*}{\textbf{Method}} & \multicolumn{7}{c|}{\textbf{In-Domain Math}} & \multicolumn{4}{c}{\textbf{OOD}} \\
& {\scriptsize MATH500} & {\scriptsize AIME24} & {\scriptsize AIME25} & {\scriptsize AMC23} & {\scriptsize Minerva} & {\scriptsize OlyBench} & {\scriptsize Avg.} & {\scriptsize GPQA} & {\scriptsize IFEval} & {\scriptsize MMLU} & {\scriptsize Avg.} \\
\midrule
\rowcolor{gray!15} \multicolumn{12}{l}{\textbf{Qwen3-8B (DAPOMath)}} \\
Base & 73.2 & 27.1 & 23.1 & 66.1 & 21.1 & 48.6 & 43.2 & 50.8 & 82.4 & 68.4 & 67.2 \\
GRPO & 83.8 & 54.4 & 42.9 & 89.4 & 31.5 & 57.9 & 60.0 & 57.0 & 81.5 & 73.3 & 70.6 \\
GRPO+SDPO & 83.2 & 42.3 & 32.7 & 84.4 & 30.6 & 62.5 & 55.9 & 55.7 & \underline{82.8} & 69.4 & 69.3 \\
SDAR & \underline{84.5} & 50.2 & 38.3 & 89.4 & 31.2 & \textbf{64.7} & 59.7 & 55.5 & 82.4 & 70.1 & 69.3 \\
RLSD & 83.0 & 53.1 & 40.0 & 90.6 & 31.3 & 60.5 & 59.8 & 56.2 & 82.6 & 72.6 & 70.5 \\
\textbf{RISE (logit)} & \textbf{84.8} & \underline{56.9} & \textbf{46.7} & \underline{91.6} & \underline{32.5} & 62.4 & \underline{62.5} & \underline{58.2} & 81.3 & \textbf{74.3} & \underline{71.3} \\
\textbf{RISE (weight)} & 84.4 & \textbf{58.1} & \underline{45.8} & \textbf{91.9} & \textbf{32.6} & \underline{63.3} & \textbf{62.7} & \textbf{59.0} & \textbf{83.4} & \underline{73.5} & \textbf{72.0} \\
\midrule
\rowcolor{gray!15} \multicolumn{12}{l}{\textbf{Qwen3-1.7B (DAPOMath)}} \\
Base & 64.6 & 11.5 & 12.1 & 41.9 & 17.6 & 40.3 & 31.3 & 34.9 & 67.7 & 48.3 & 50.3 \\
GRPO & 75.6 & 30.0 & 26.3 & 65.6 & 23.9 & 51.1 & 45.4 & 32.3 & 68.8 & 53.4 & 51.5 \\
GRPO+SDPO & \underline{76.0} & 21.5 & 23.1 & 65.0 & 23.1 & 50.3 & 43.2 & \underline{34.9} & 69.0 & 52.0 & 52.0 \\
SDAR & 75.0 & 26.0 & 26.0 & 71.3 & 23.6 & 50.8 & 45.5 & \textbf{35.3} & \underline{69.1} & 53.2 & \underline{52.5} \\
RLSD & 75.3 & 26.7 & 25.2 & 59.4 & 23.0 & 49.8 & 43.2 & 33.4 & 67.5 & \underline{54.1} & 51.7 \\
\textbf{RISE (logit)} & 75.2 & \textbf{36.3} & \textbf{33.3} & \textbf{78.8} & \underline{24.9} & \underline{52.9} & \textbf{50.2} & 34.8 & 68.2 & 53.7 & 52.2 \\
\textbf{RISE (weight)} & \textbf{77.3} & \underline{32.9} & \underline{31.3} & \underline{75.0} & \textbf{25.7} & \textbf{53.3} & \underline{49.2} & 34.8 & \textbf{70.2} & \textbf{56.3} & \textbf{53.8} \\
\midrule
\rowcolor{gray!15} \multicolumn{12}{l}{\textbf{OLMo3-7B-Instruct-SFT (OpenR1)}} \\
Base & 57.0 & 6.0 & 8.1 & 43.8 & 19.6 & 31.5 & 27.7 & 33.0 & 77.8 & 35.4 & 48.7 \\
GRPO & 77.5 & 30.2 & 28.3 & 70.6 & 25.2 & 53.5 & 47.6 & 31.0 & 76.0 & \underline{46.6} & 51.2 \\
GRPO+SDPO & 77.7 & 33.8 & 28.5 & 70.6 & 25.3 & 53.4 & 48.2 & 37.4 & 77.1 & 46.1 & 53.5 \\
SDAR & 79.8 & 35.2 & 27.1 & 73.8 & 25.3 & 56.3 & 49.6 & \underline{39.0} & 77.8 & 44.6 & \underline{53.8} \\
RLSD & 77.2 & 28.3 & 24.0 & 68.1 & 24.6 & 52.9 & 45.9 & 37.6 & \underline{78.4} & 44.1 & 53.4 \\
\textbf{RISE (logit)} & \textbf{82.9} & \textbf{46.9} & \textbf{36.0} & \textbf{83.1} & \underline{27.6} & \textbf{61.9} & \textbf{56.4} & \textbf{40.7} & \textbf{79.3} & 46.4 & \textbf{55.5} \\
\textbf{RISE (weight)} & \underline{82.8} & \underline{42.5} & \underline{32.5} & \underline{76.9} & \textbf{27.8} & \underline{59.6} & \underline{53.7} & 37.1 & 77.1 & \textbf{46.9} & 53.7 \\
\bottomrule
\end{tabular}
\end{table}

\textbf{Mathematical Reasoning.~}
Table~\ref{tab:main-math} presents results across three model configurations spanning different scales (1.7B--8B), architectures (Qwen, OLMo), and training corpora (DAPOMath, OpenR1).

\emph{RISE consistently outperforms all baselines.} Both RISE variants beat every baseline on in-domain Math Avg in all three settings, and at least one ranks first or second on every individual in-domain benchmark. The gains are most pronounced on competition benchmarks: on OLMo3-7B, RISE (logit) improves AIME'24 from 30.2 to 46.9 (+16.7) and Math Avg from 47.6 to 56.4 (+8.8); on Qwen3-1.7B, RISE (logit) lifts Math Avg from 45.4 to 50.2 (+4.8); and on Qwen3-8B, RISE (weight) lifts it from 60.0 to 62.7 (+2.7). Neither extrapolation space consistently dominates, suggesting the gains stem from the extrapolation principle itself. Multi-seed experiments confirm reproducibility: RISE (weight) Math Avg = $62.4 \pm 0.2$ vs.\ GRPO $60.1 \pm 0.2$ on Qwen3-8B, and RISE (logit) $49.6 \pm 0.4$ vs.\ GRPO $45.1 \pm 0.4$ on Qwen3-1.7B, across three seeds (Appendix~\ref{sec:app-seed-variance}).

\emph{Privileged-conditioning baselines provide limited or negative gains.} GRPO+SDPO, which jointly optimizes the GRPO loss and a KL divergence toward the privileged teacher, underperforms GRPO on both Qwen3-8B (55.9 vs.\ 60.0) and Qwen3-1.7B (43.2 vs.\ 45.4). SDAR and RLSD fare better but remain within about two points of GRPO in most settings. This pattern is consistent with our analysis in \S\ref{sec:background}: privileged-conditioning teachers are limited by ICL capability, capping the benefit of per-token supervision regardless of integration strategy.

\emph{Out-of-distribution preservation.} RISE maintains or slightly improves OOD performance across all models---Qwen3-8B OOD Avg rises from 70.6 (GRPO) to 72.0 (RISE weight), OLMo3-7B from 51.2 to 55.5 (RISE logit)---so token-level refinement does not degrade general capabilities.

\begin{figure}[h]
\centering
\captionsetup[subfigure]{skip=2pt}
\begin{subfigure}[t]{0.325\linewidth}
    \centering
    \includegraphics[width=\linewidth]{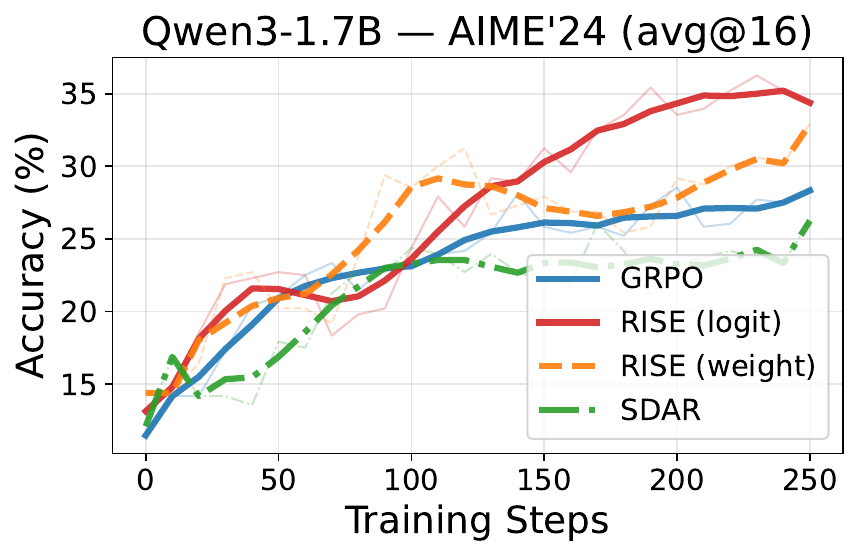}
    \caption{1.7B AIME'24}
    \label{fig:conv-1.7b-aime24}
\end{subfigure}
\hfill
\begin{subfigure}[t]{0.325\linewidth}
    \centering
    \includegraphics[width=\linewidth]{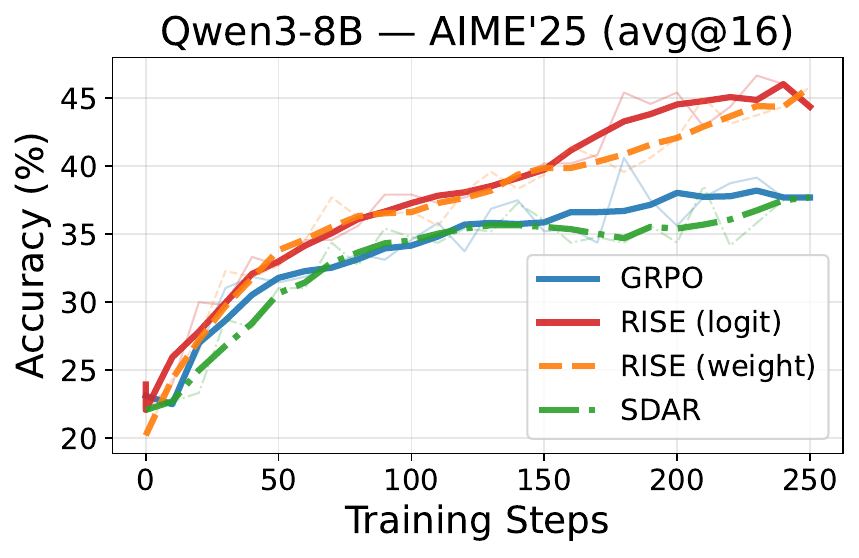}
    \caption{8B AIME'25}
    \label{fig:conv-8b-aime25}
\end{subfigure}
\hfill
\begin{subfigure}[t]{0.325\linewidth}
    \centering
    \includegraphics[width=\linewidth]{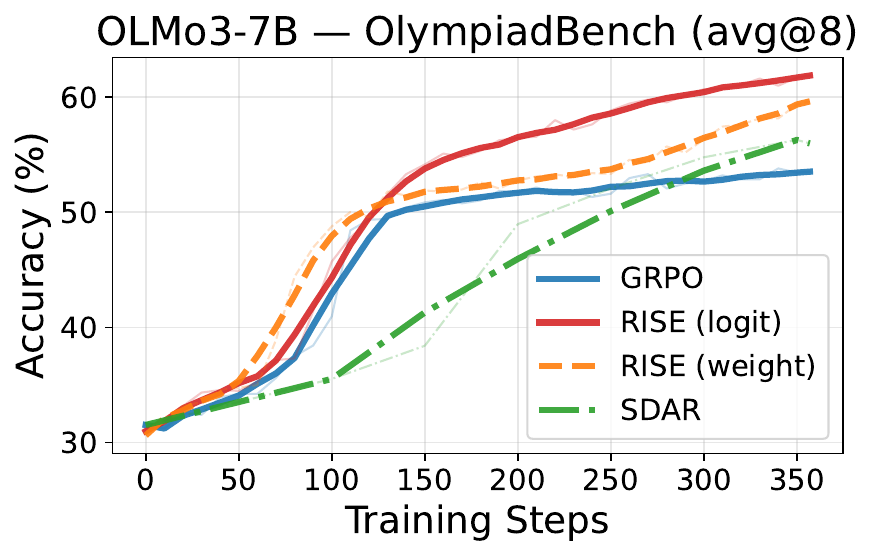}
    \caption{OLMo OlyBench}
    \label{fig:conv-olmo-olympiad}
\end{subfigure}
\vspace{-6pt}
\caption{RISE reaches higher accuracy in fewer steps across three model scales, demonstrating improved sample efficiency.}
\label{fig:convergence-math}
\vspace{-8pt}
\end{figure}

\emph{Sample efficiency.} RISE also learns faster: Figure~\ref{fig:convergence-math} plots evaluation accuracy at successive checkpoints across three configurations, and RISE reaches higher accuracy earlier throughout, with the gap widest in early training where the extrapolated teacher supplies dense signal before RL advantage estimates stabilize (further curves in Appendix~\ref{sec:app-convergence-curves}).

\begin{table}[t]
\centering
\caption{\textbf{Multi-domain results (Qwen3-4B-Base, mixed math + STEM training).} Accuracy (\%) on math reasoning and STEM benchmarks. Best in \textbf{bold}, second best \underline{underlined}.}
\label{tab:multidomain}
\small
\setlength{\tabcolsep}{2.5pt}
\begin{tabular}{l|cccccc>{\columncolor{yellow!15}}c|cccc>{\columncolor{yellow!15}}c}
\toprule
\multirow{2}{*}{\textbf{Method}} & \multicolumn{7}{c|}{\textbf{Math}} & \multicolumn{5}{c}{\textbf{STEM}} \\
& {\tiny MATH500} & {\tiny AIME24} & {\tiny AIME25} & {\tiny AMC23} & {\tiny Minerva} & {\tiny OlyBench} & {\tiny Avg.} & {\tiny GPQA} & {\tiny SuperGPQA} & {\tiny MMLU} & {\tiny ThrmQA} & {\tiny Avg.} \\
\midrule
Base & 48.2 & 9.8 & 10.4 & 35.0 & 12.1 & 27.5 & 23.8 & 14.1 & 8.0 & 6.6 & 27.0 & 13.9 \\
GRPO & 69.1 & 23.8 & 20.6 & 59.4 & \underline{26.6} & 42.0 & 40.2 & 42.4 & \underline{32.5} & 59.5 & 47.8 & 45.5 \\
GRPO+SDPO & 70.5 & 24.8 & 20.4 & 68.1 & 25.9 & 43.1 & 42.1 & \underline{44.7} & 31.0 & \underline{61.7} & 50.5 & 47.0 \\
SDAR & 69.1 & 22.9 & 20.0 & 65.6 & 23.7 & 41.2 & 40.4 & 40.6 & 30.3 & 58.7 & 49.1 & 44.7 \\
RLSD & 68.9 & 23.1 & 20.0 & 60.0 & 24.6 & 39.9 & 39.4 & 33.7 & 28.8 & 55.1 & 46.5 & 41.0 \\
\textbf{RISE (logit)} & \textbf{71.9} & \underline{25.6} & \underline{22.1} & \underline{70.6} & 25.9 & \underline{43.8} & \underline{43.3} & 44.1 & 32.1 & \textbf{61.9} & \underline{51.3} & \underline{47.4} \\
\textbf{RISE (weight)} & \underline{71.3} & \textbf{28.8} & \textbf{25.4} & \textbf{71.3} & \textbf{26.9} & \textbf{44.9} & \textbf{44.8} & \textbf{44.7} & \textbf{32.9} & 60.9 & \textbf{51.5} & \textbf{47.5} \\
\bottomrule
\end{tabular}
\end{table}

\textbf{Multi-Domain Results.~}
Table~\ref{tab:multidomain} tests whether RISE holds when the displacement aggregates improvements across heterogeneous domains. On Qwen3-4B-Base trained with mixed math and STEM data, RISE (weight) achieves the highest Math Avg (44.8 vs.\ GRPO's 40.2) and STEM Avg (47.5 vs.\ 45.5), improving competition math (AIME'24: 28.8, +5.0) while simultaneously lifting STEM (TheoremQA: 51.5, +3.7)---extrapolation along a multi-domain trajectory does not dilute gains in any single domain. GRPO+SDPO is more competitive here (STEM Avg 47.0) than in the math-only setting, consistent with STEM providing more informative privileged context~\citep{hubotter2026reinforcement}, yet RISE still outperforms it without privileged information.

\begin{figure}[h]
\centering
\captionsetup[subfigure]{skip=2pt}
\begin{minipage}[t]{0.49\textwidth}
    \centering
    \begin{subfigure}[t]{0.48\linewidth}
        \centering
        \includegraphics[width=\linewidth]{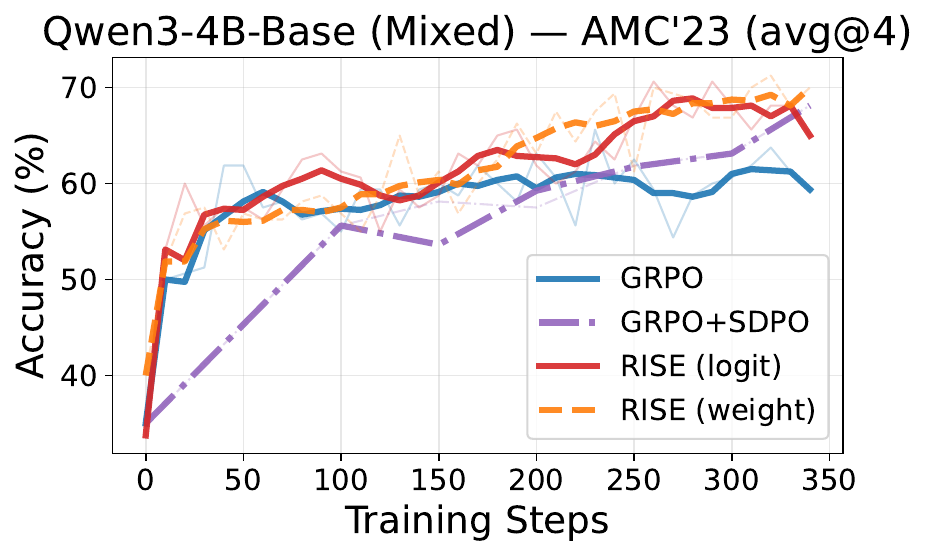}
        \caption{AMC'23}
        \label{fig:conv-4b-amc23}
    \end{subfigure}
    \hfill
    \begin{subfigure}[t]{0.5\linewidth}
        \centering
        \includegraphics[width=\linewidth]{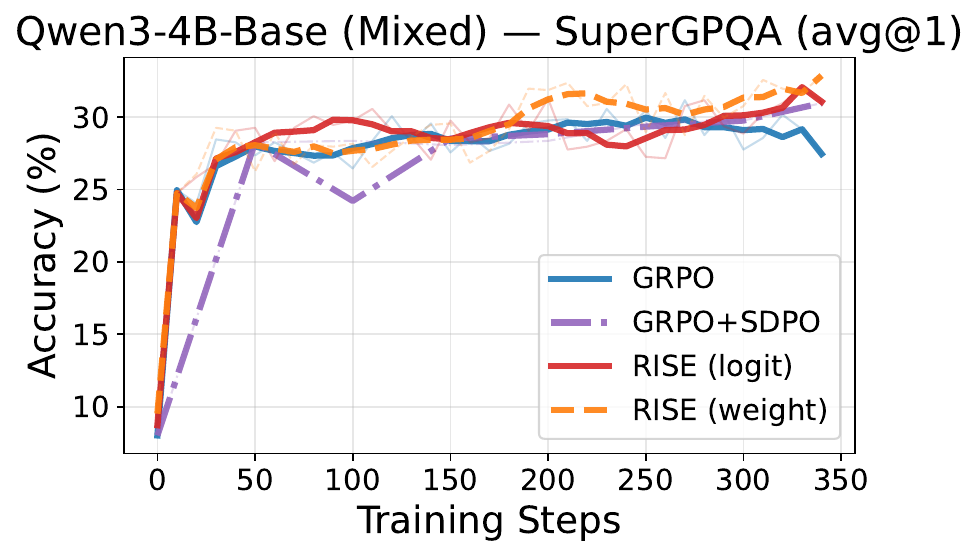}
        \caption{SuperGPQA}
        \label{fig:conv-4b-supergpqa}
    \end{subfigure}
    \vspace{-4pt}
    \caption{RISE reaches higher accuracy on both math and STEM benchmarks.}
    \label{fig:convergence-mixed}
\end{minipage}
\hfill
\begin{minipage}[t]{0.49\textwidth}
    \centering
    \begin{subfigure}[t]{0.5\linewidth}
        \centering
        \includegraphics[width=\linewidth]{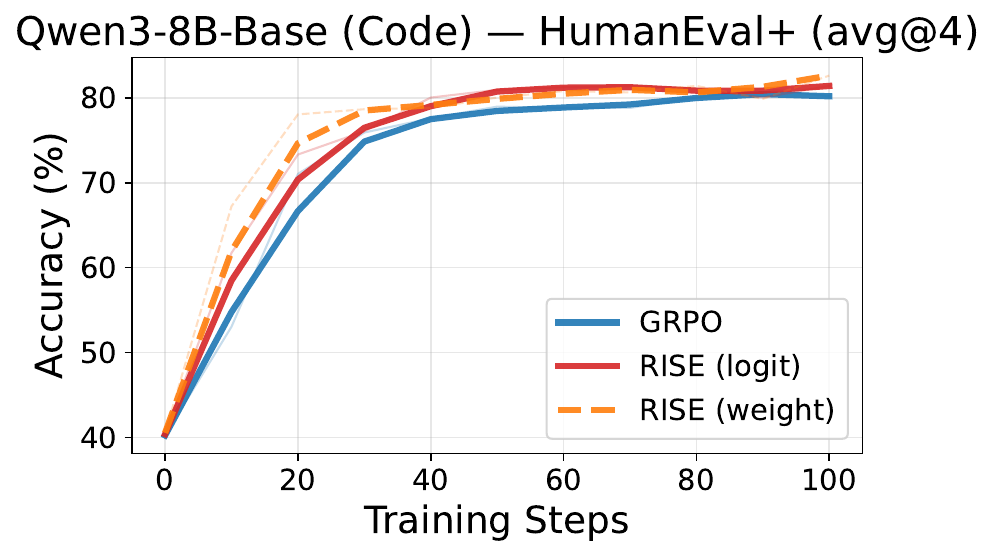}
        \caption{HumanEval+}
        \label{fig:code-humaneval}
    \end{subfigure}
    \hfill
    \begin{subfigure}[t]{0.48\linewidth}
        \centering
        \includegraphics[width=\linewidth]{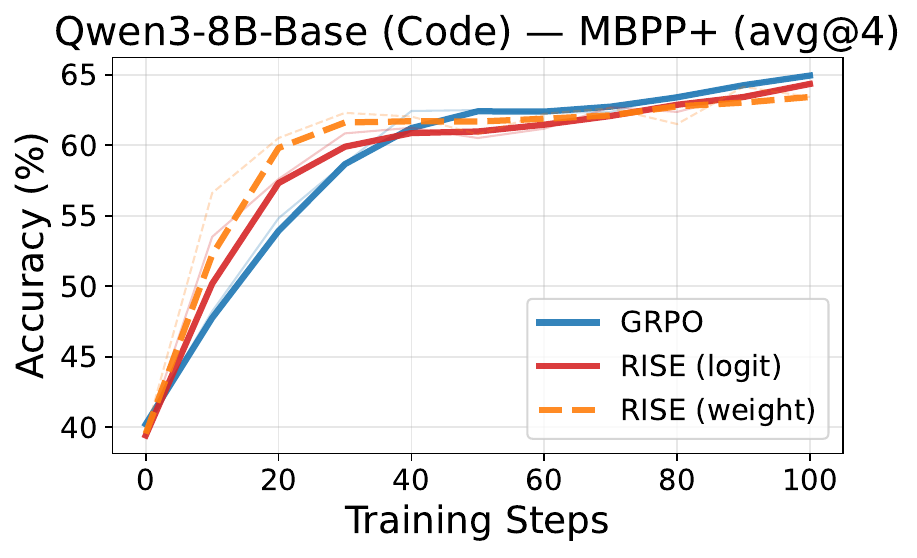}
        \caption{MBPP+}
        \label{fig:code-mbpp}
    \end{subfigure}
    \vspace{-4pt}
    \caption{RISE converges faster than GRPO on code generation.}
    \label{fig:code-convergence}
\end{minipage}
\vspace{-8pt}
\end{figure}

The sample-efficiency advantage extends to this setting. On AMC'23 (Fig.~\ref{fig:conv-4b-amc23}) both RISE variants stay above GRPO throughout training, with RISE (weight) reaching 70.0\% while GRPO rises mid-training but declines to 59.4\%; on SuperGPQA (Fig.~\ref{fig:conv-4b-supergpqa}) RISE (weight) holds a consistent edge, so the signal benefits both math and STEM.

\textbf{Code Generation.~}
We evaluate Qwen3-8B-Base on code generation (Fig.~\ref{fig:code-convergence}). Both RISE variants converge faster than GRPO early and reach comparable final accuracy---RISE (logit) matches GRPO's final HumanEval+ accuracy at step~50 versus step~90---so the extrapolation principle transfers beyond mathematical reasoning. Gains are more modest than in math, likely because all methods saturate quickly on these benchmarks. Full curves (avg@4 and pass@4) are in Appendix~\ref{sec:app-code}.

\begin{wraptable}{r}{0.33\textwidth}
\vspace{-14pt}
\centering
\caption{Success rate on ALFWorld, Score/Acc on WebShop.}
\label{tab:agentic}
\vspace{-4pt}
\scriptsize
\setlength{\tabcolsep}{3pt}
\begin{tabular}{lccc}
\toprule
\textbf{Method} & \textbf{ALF} & \textbf{WS-S} & \textbf{WS-A} \\
\midrule
Base & 21.9 & 6.7 & 0.8 \\
GRPO & 75.0 & \underline{79.8} & 63.3 \\
\textbf{RISE (logit)} & \underline{78.1} & 78.8 & \underline{68.0} \\
\textbf{RISE (weight)} & \textbf{84.4} & \textbf{86.3} & \textbf{74.2} \\
\bottomrule
\end{tabular}
\vspace{-14pt}
\end{wraptable}

\textbf{Agentic Tasks.~}
Table~\ref{tab:agentic} reports results on two multi-turn agentic benchmarks using Qwen2.5-3B-Instruct as the base model, following the GIGPO training setup~\citep{feng2025group}. RISE (weight) substantially outperforms GRPO on both ALFWorld (+9.4) and WebShop (+10.9 Acc), demonstrating that the extrapolation principle extends to sequential decision-making tasks where the reward signal is sparse and delayed.

\subsection{Analysis: How Does Extrapolation Help?}
\label{sec:exp-analysis}

Beyond aggregate accuracy, we ask \emph{why} extrapolation helps. RISE couples two ingredients---an RLVR phase that supplies the displacement direction, and an OPD phase that projects the extrapolated teacher back onto the policy. We first isolate each phase by removing it (Fig.~\ref{fig:analysis}), then test the extrapolation premise directly on saved checkpoints (Fig.~\ref{fig:extrap-sweep}).

\begin{figure}[t]
\centering
\begin{subfigure}[b]{0.32\textwidth}
    \includegraphics[width=\linewidth]{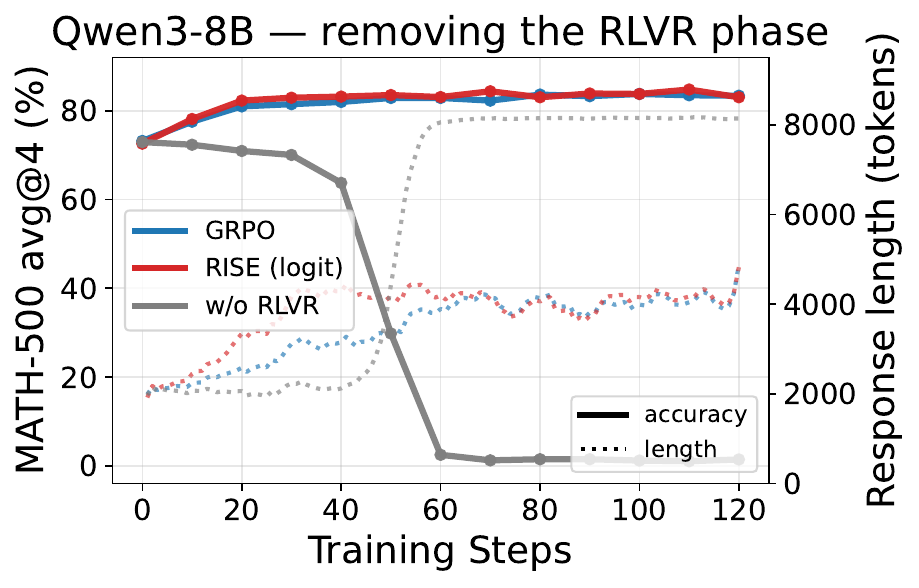}
    \caption{Removing RLVR (8B)}
    \label{fig:analysis-grounding}
\end{subfigure}
\hfill
\begin{subfigure}[b]{0.32\textwidth}
    \includegraphics[width=\linewidth]{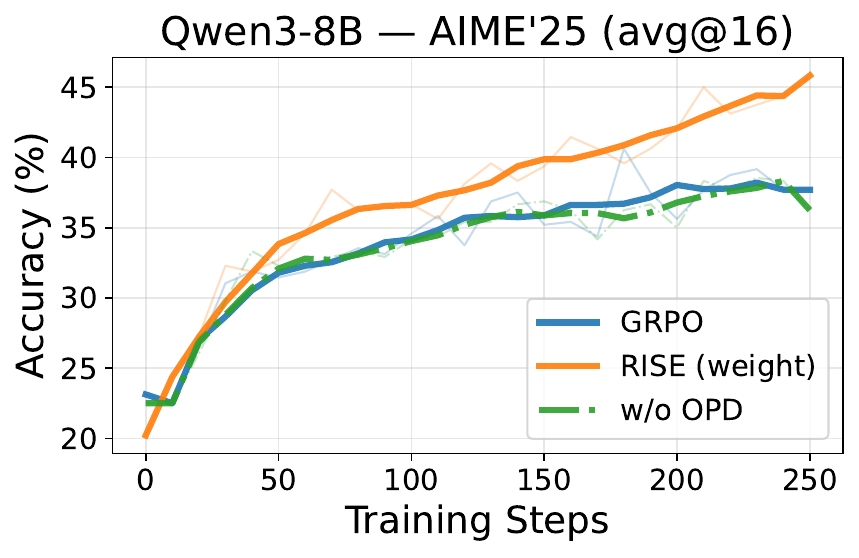}
    \caption{Removing OPD (8B)}
    \label{fig:analysis-trust-8b}
\end{subfigure}
\hfill
\begin{subfigure}[b]{0.32\textwidth}
    \includegraphics[width=\linewidth]{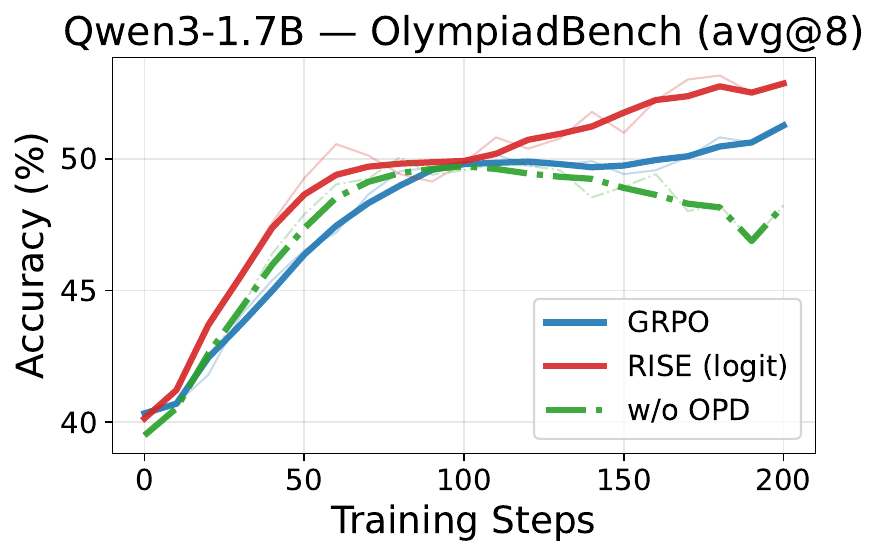}
    \caption{Removing OPD (1.7B)}
    \label{fig:analysis-trust-17b}
\end{subfigure}
\vspace{-5pt}
\caption{\textbf{Removing either phase hurts.} (a)~Without RLVR, accuracy collapses (solid) as generation length explodes (dotted). (b,~c)~Without OPD, training is stable but gains vanish.}
\label{fig:analysis}
\vspace{-10pt}
\end{figure}

\textbf{Extrapolation is only safe when the direction is grounded in verifiable reward.~}
We remove the RLVR phase entirely and extrapolate along the displacement induced by self-distillation alone. Training collapses within 60 steps (Fig.~\ref{fig:analysis-grounding}): MATH-500 accuracy drops from the base level to 2.4\%, response length explodes from 2K to the 8K context cap, and training reward drops to zero (Fig.~\ref{fig:app-grounding-reward-entropy}). GRPO and RISE show no such behaviour. Without an outcome reward to anchor it, the displacement is purely self-referential---each iteration extrapolates along the model's prior move---and repeated amplification drives the policy into degenerate non-terminating generation.

\textbf{The gain comes from distillation, not from taking a longer step.~}
We next remove the OPD phase, adopting the extrapolated $\theta_{\text{future}}$ directly as the next policy. This variant is stable but does not improve the \emph{average}: on Qwen3-8B Math Avg moves from 60.0 (GRPO) to 60.3 (w/o OPD), and on Qwen3-1.7B from 45.4 to 45.6---negligible gains compared to RISE's $+2.7$ and $+4.8$ (Table~\ref{tab:app-no-opd}). Individual benchmarks shift in opposite directions at different scales (Figs.~\ref{fig:analysis-trust-8b},~\ref{fig:analysis-trust-17b}), but these redistributions do not translate into consistent improvement. Taking a longer step in weight space \emph{shifts} the policy without reliably improving it; it is the OPD phase---projecting the extrapolated teacher's per-token distribution back onto the policy---that converts the extrapolation direction into consistent gains across benchmarks and scales.

\begin{wrapfigure}{r}{0.36\textwidth}
\vspace{-16pt}
\centering
\includegraphics[width=\linewidth]{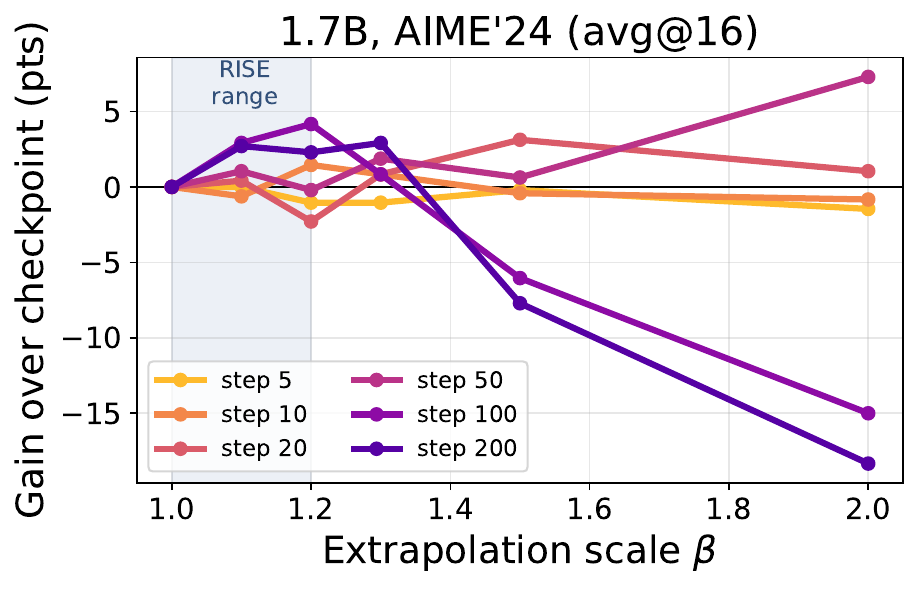}
\caption{Gain on AIME'24 (avg@16) from extrapolating Qwen3-1.7B GRPO checkpoints at varying $\beta$, relative to $\beta\!=\!1$. Shaded: RISE's operating range ($\beta_0\!=\!1.2$, decaying to~1).}
\label{fig:extrap-sweep}
\vspace{-13pt}
\end{wrapfigure}

\textbf{The safe extrapolation range narrows over training.~}
We test the extrapolation premise directly by materializing $\theta_{\text{base}} + \beta(\theta_n - \theta_{\text{base}})$ from GRPO checkpoints of Qwen3-1.7B and evaluating on AIME'24 (Fig.~\ref{fig:extrap-sweep}). Two patterns emerge. First, within RISE's operating range ($\beta\!\leq\!1.2$, shaded), extrapolation is safe throughout: gains are near-neutral early and reach $+2$--$4$ points at steps~100--200, while never degrading accuracy. Second, the \emph{tolerable range} of $\beta$ contracts sharply as training proceeds: at step~50, even $\beta\!=\!2.0$ adds $+7.3$ points, but by step~100 the same $\beta$ is catastrophic ($-15$ points), and by step~200 $\beta\!=\!1.5$ already costs $7.7$ points. This is the empirical counterpart of Proposition~\ref{prop:extrapolation-gap} and the direct justification for the decaying schedule: a fixed aggressive $\beta$ would eventually destroy the teacher, whereas RISE's conservative, decaying $\beta$ stays inside the safe region throughout.

\textbf{RISE broadens solution coverage, not just average quality.~}
A natural concern with distillation is that it sharpens the policy around solutions it already finds, improving mean accuracy at the expense of coverage. We test this by comparing avg@16 with pass@16 (best-of-16) on the AIME benchmarks (Appendix~\ref{sec:app-avg-vs-pass}). RISE improves pass@16 over GRPO in nearly every setting, with the effect strongest at 1.7B: RISE (logit) lifts AIME'24 pass@16 by $+9.6$ points versus $+6.3$ on avg@16, indicating that the extrapolated teacher expands the set of solvable problems rather than merely sharpening existing solutions. At 8B the coverage gains are more modest, consistent with less headroom when the base policy is already stronger.

\subsection{Ablation Studies}
\label{sec:exp-ablations}

We conduct ablations primarily on Qwen3-8B and Qwen3-1.7B(-Base) to isolate the contribution of each design choice.

\begin{wraptable}{r}{0.42\textwidth}
\vspace{-14pt}
\centering
\caption{\textbf{$\beta_0$ and schedule ablation} (Qwen3-8B, Math Avg \%).}
\label{tab:ablation-beta-schedule}
\vspace{-4pt}
\scriptsize
\setlength{\tabcolsep}{2pt}
\begin{tabular}{l|ccc|cccc}
\toprule
& \multicolumn{3}{c|}{$\beta_0$ (linear decay)} & \multicolumn{4}{c}{Schedule ($\beta_0\!=\!1.2$)} \\
& 1.2 & 1.3 & 1.5 & cosine & linear & exp & fixed \\
\midrule
Math Avg & 62.5 & 62.3 & 61.9 & \textbf{63.0} & 62.5 & 62.0 & 61.8 \\
\bottomrule
\end{tabular}
\vspace{-8pt}
\end{wraptable}

\textbf{Extrapolation scale $\beta$ and decay schedule.~}
Table~\ref{tab:ablation-beta-schedule} reports sensitivity to $\beta_0$ and the decay schedule on Qwen3-8B (logit-space extrapolation). Performance is stable across $\beta_0 \in [1.2, 1.5]$, with $\beta_0\!=\!1.2$ slightly ahead; $\beta_0\!=\!2.0$ diverged in preliminary experiments, consistent with the narrowing safe range shown in Fig.~\ref{fig:extrap-sweep}: although $\beta\!=\!2.0$ is benign early in training, it becomes catastrophic once the policy nears optimality, and linear decay from $\beta_0\!=\!2.0$ does not reduce $\beta$ fast enough to avoid this regime. All decay schedules perform comparably, though any form of decay edges out a fixed $\beta$---consistent with the prediction that $\beta$ should shrink as the policy approaches optimality. On Qwen3-1.7B-Base the pattern is similar: $\beta_0\!=\!1.3$ (linear) gives the best average (29.4), while $\beta_0\!=\!1.5$ degrades to 28.5 (per-benchmark breakdowns in Appendix~\ref{sec:app-beta-schedule}).

\begin{wrapfigure}{r}{0.28\textwidth}
    \vspace{-13pt}
    \centering
    \includegraphics[width=\linewidth]{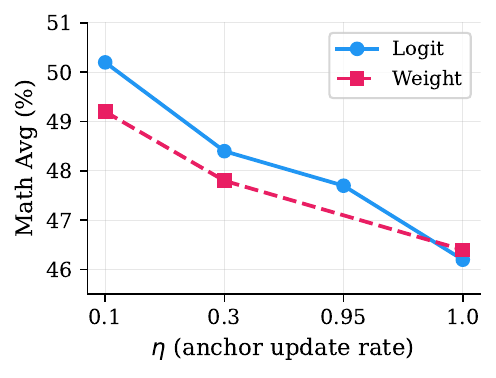}
    \vspace{-18pt}
    \caption{Anchor ablation on Qwen3-1.7B. Lower $\eta$ improves Math Avg.}
    \label{fig:anchor-ablation}
    \vspace{-12pt}
\end{wrapfigure}

\textbf{Anchor dynamics ($\eta$).~}
The EMA anchor ($\eta < 1$) smooths the displacement direction over multiple iterations, which helps when per-step RLVR updates are noisy. Fig.~\ref{fig:anchor-ablation} ablates $\eta$ on Qwen3-1.7B. In logit space, reducing $\eta$ monotonically improves Math Avg: from 46.2 ($\eta\!=\!1.0$) to 48.4 ($\eta\!=\!0.3$) to 50.2 ($\eta\!=\!0.1$); in weight space the trend is consistent ($+2.8$ from $\eta\!=\!1.0$ to $\eta\!=\!0.1$). However, on OLMo3-7B $\eta\!=\!1.0$ outperforms $\eta\!=\!0.1$ (56.4 vs.\ 50.1), likely because GRPO on OLMo already produces large, stable per-step directions, making EMA smoothing unnecessary and its lag counterproductive. We therefore default to $\eta\!=\!0.1$ for Qwen and $\eta\!=\!1$ for OLMo (Appendix~\ref{sec:app-anchor}).

\textbf{Resample after RLVR vs.\ rollout reuse.~}
Generating fresh rollouts from $\pi_{\theta_{n+1}'}$ for the OPD phase eliminates the mild off-policy-ness discussed in \S\ref{sec:method-procedure}, but does not improve results: on Qwen3-8B the resample variant matches rollout reuse exactly (62.5 Math Avg), and on Qwen3-1.7B-Base it slightly underperforms (28.4 vs.\ 29.2). Rollout reuse is therefore the practical default, halving the per-iteration sampling cost (per-benchmark details in Appendix~\ref{sec:app-resample}).

\textbf{Compute-matched comparison.~}
Since RISE applies two gradient phases per iteration (RLVR + OPD), we compare against GRPO that performs a second inner-loop gradient pass on the same rollouts within each iteration (GRPO-2$\times$), matching the total gradient budget. The extra pass helps only marginally---$+1.9$ Math Avg on Qwen3-1.7B (47.3 vs.\ 45.4) and $+0.5$ on Qwen3-8B (60.5 vs.\ 60.0)---while RISE gains $+4.8$ and $+2.7$ over GRPO at the same budget, outperforming GRPO-2$\times$ by $+2.9$ and $+2.2$ respectively (per-benchmark details in Appendix~\ref{sec:app-compute-matched}). The gains therefore stem from the quality of extrapolated supervision rather than additional optimization steps.

\textbf{Training cost.~}
RISE reuses the RLVR rollouts for OPD, so the overhead comes from the OPD gradient phase and the anchor forward pass. On Qwen3-8B, RISE runs at ${\sim}$1.6$\times$ GRPO wall time per iteration; on Qwen3-1.7B, ${\sim}$1.3$\times$. The smaller relative overhead at 1.7B reflects that rollout generation---shared between GRPO and RISE---dominates at smaller scale (73\% of GRPO wall time vs.\ 63\% at 8B).

\section{Conclusion}
\label{sec:conclusion}

We introduced RISE, which constructs a per-token teacher by extrapolating the model's own RLVR training trajectory---in logit space or weight space---then distills the resulting future policy back into the current student. The two phases are complementary: outcome rewards ground the extrapolation in verified improvement, while distillation converts the teacher's token-level distribution into durable policy gains. Across four task families (math, STEM, code, and agentic tasks) spanning 1.7B--8B parameters, RISE consistently outperforms RLVR and privileged-conditioning baselines, with the largest margins on challenging competition benchmarks. Analysis confirms that both phases are necessary, that the safe extrapolation range narrows over training, and that RISE broadens solution coverage alongside mean accuracy---all at 1.3--1.6$\times$ GRPO wall time with no additional sampling cost. More broadly, RISE demonstrates that a model's own training trajectory contains sufficient structure to serve as a self-improving teacher---converting sparse outcome signals into dense token-level supervision without any external knowledge. We believe this principle extends naturally to other post-training paradigms such as preference optimization and multi-agent settings.

\textbf{Limitations.~}
RISE relies on the training trajectory being sufficiently low-dimensional for linear extrapolation to remain meaningful at modest $\beta$. The $\beta$-decay schedule and EMA anchor provide empirical robustness, yet principled detection of when extrapolation becomes unreliable remains open. Additionally, RISE inherits any biases in the RLVR reward signal: if the reward is hackable, extrapolation amplifies the spurious direction. Integrating reward model uncertainty or multi-objective rewards to qualify the extrapolation is a promising avenue for future work.

\bibliography{main}

@article{ouyang2022training,
  title={Training language models to follow instructions with human feedback},
  author={Ouyang, Long and Wu, Jeffrey and Jiang, Xu and Almeida, Diogo and Wainwright, Carroll and Mishkin, Pamela and Zhang, Chong and Agarwal, Sandhini and Slama, Katarina and Ray, Alex and others},
  journal={Advances in neural information processing systems},
  volume={35},
  pages={27730--27744},
  year={2022}
}

@article{yang2026selfimprovement,
  title={Self-Improvement of Large Language Models: A Technical Overview and Future Outlook},
  author={Yang, Haoyan and Xerri, Mario and Park, Solha and Zhang, Huajian and Feng, Yiyang and Kogilathota, Sai Akhil and Zhou, Jiawei},
  journal={arXiv preprint arXiv:2603.25681},
  year={2026}
}

@article{tao2024survey,
  title={A survey on self-evolution of large language models},
  author={Tao, Zhengwei and Lin, Ting-En and Chen, Xiancai and Li, Hangyu and Wu, Yuchuan and Li, Yongbin and Jin, Zhi and Huang, Fei and Tao, Dacheng and Zhou, Jingren},
  journal={arXiv preprint arXiv:2404.14387},
  year={2024}
}

@article{shao2024deepseekmath,
  title={Deepseekmath: Pushing the limits of mathematical reasoning in open language models},
  author={Shao, Zhihong and Wang, Peiyi and Zhu, Qihao and Xu, Runxin and Song, Junxiao and Bi, Xiao and Zhang, Haowei and Zhang, Mingchuan and Li, YK and Wu, Yang and others},
  journal={arXiv preprint arXiv:2402.03300},
  year={2024}
}

@article{guo2025deepseek,
  title={Deepseek-r1: Incentivizing reasoning capability in llms via reinforcement learning},
  author={Guo, Daya and Yang, Dejian and Zhang, Haowei and Song, Junxiao and Wang, Peiyi and Zhu, Qihao and Xu, Runxin and Zhang, Ruoyu and Ma, Shirong and Bi, Xiao and others},
  journal={arXiv preprint arXiv:2501.12948},
  year={2025}
}

@article{yu2026dapo,
  title={Dapo: An open-source llm reinforcement learning system at scale},
  author={Yu, Qiying and Zhang, Zheng and Zhu, Ruofei and Yuan, Yufeng and Zuo, Xiaochen and Yue, Yu and Dai, Weinan and Fan, Tiantian and Liu, Gaohong and Liu, Lingjun and others},
  journal={Advances in Neural Information Processing Systems},
  volume={38},
  pages={113222--113244},
  year={2026}
}

@article{fang2026serl,
  title={Serl: Self-play reinforcement learning for large language models with limited data},
  author={Fang, Wenkai and Liu, Shunyu and Zhou, Yang and Zhang, Kongcheng and Zheng, Tongya and Chen, Kaixuan and Song, Mingli and Tao, Dacheng},
  journal={Advances in Neural Information Processing Systems},
  volume={38},
  pages={103706--103738},
  year={2026}
}

@article{chen2024self,
  title={Self-play fine-tuning converts weak language models to strong language models},
  author={Chen, Zixiang and Deng, Yihe and Yuan, Huizhuo and Ji, Kaixuan and Gu, Quanquan},
  journal={arXiv preprint arXiv:2401.01335},
  year={2024}
}

@article{liu2025spiral,
  title={Spiral: Self-play on zero-sum games incentivizes reasoning via multi-agent multi-turn reinforcement learning},
  author={Liu, Bo and Guertler, Leon and Yu, Simon and Liu, Zichen and Qi, Penghui and Balcells, Daniel and Liu, Mickel and Tan, Cheston and Shi, Weiyan and Lin, Min and others},
  journal={arXiv preprint arXiv:2506.24119},
  year={2025}
}

@article{huang2025r,
  title={R-zero: Self-evolving reasoning llm from zero data},
  author={Huang, Chengsong and Yu, Wenhao and Wang, Xiaoyang and Zhang, Hongming and Li, Zongxia and Li, Ruosen and Huang, Jiaxin and Mi, Haitao and Yu, Dong},
  journal={arXiv preprint arXiv:2508.05004},
  year={2025}
}

@article{zelikman2022star,
  title={Star: Bootstrapping reasoning with reasoning},
  author={Zelikman, Eric and Wu, Yuhuai and Mu, Jesse and Goodman, Noah},
  journal={Advances in Neural Information Processing Systems},
  volume={35},
  pages={15476--15488},
  year={2022}
}

@article{xiong2025minimalist,
  title={A minimalist approach to llm reasoning: from rejection sampling to reinforce},
  author={Xiong, Wei and Yao, Jiarui and Xu, Yuhui and Pang, Bo and Wang, Lei and Sahoo, Doyen and Li, Junnan and Jiang, Nan and Zhang, Tong and Xiong, Caiming and others},
  journal={arXiv preprint arXiv:2504.11343},
  year={2025}
}

@article{gulcehre2023reinforced,
  title={Reinforced self-training (rest) for language modeling},
  author={Gulcehre, Caglar and Paine, Tom Le and Srinivasan, Srivatsan and Konyushkova, Ksenia and Weerts, Lotte and Sharma, Abhishek and Siddhant, Aditya and Ahern, Alex and Wang, Miaosen and Gu, Chenjie and others},
  journal={arXiv preprint arXiv:2308.08998},
  year={2023}
}

@article{song2026survey,
  title={A survey of on-policy distillation for large language models},
  author={Song, Mingyang and Zheng, Mao},
  journal={arXiv preprint arXiv:2604.00626},
  year={2026}
}

@article{li2026rethinking,
  title={Rethinking on-policy distillation of large language models: Phenomenology, mechanism, and recipe},
  author={Li, Yaxuan and Zuo, Yuxin and He, Bingxiang and Zhang, Jinqian and Xiao, Chaojun and Qian, Cheng and Yu, Tianyu and Gao, Huan-ang and Yang, Wenkai and Liu, Zhiyuan and others},
  journal={arXiv preprint arXiv:2604.13016},
  year={2026}
}

@article{zhu2026many,
  title={The many faces of on-policy distillation: Pitfalls, mechanisms, and fixes},
  author={Zhu, Siqi and Ye, Xuyan and Lu, Hongyu and Shi, Weiye and Liu, Ge},
  journal={arXiv preprint arXiv:2605.11182},
  year={2026}
}

@article{zhao2026self,
  title={Self-Distilled Reasoner: On-Policy Self-Distillation for Large Language Models},
  author={Zhao, Siyan and Xie, Zhihui and Liu, Mengchen and Huang, Jing and Pang, Guan and Chen, Feiyu and Grover, Aditya},
  journal={arXiv preprint arXiv:2601.18734},
  year={2026}
}

@article{hubotter2026reinforcement,
  title={Reinforcement Learning via Self-Distillation},
  author={H{\"u}botter, Jonas and L{\"u}beck, Frederike and Behric, Lejs and Baumann, Anton and Bagatella, Marco and Marta, Daniel and Hakimi, Ido and Shenfeld, Idan and Buening, Thomas Kleine and Guestrin, Carlos and others},
  journal={arXiv preprint arXiv:2601.20802},
  year={2026}
}

@article{li2026learning,
  title={Learning from Language Feedback via Variational Policy Distillation},
  author={Li, Yang and Nijkamp, Erik and Yavuz, Semih and Joty, Shafiq Rayhan},
  journal={arXiv preprint arXiv:2605.15113},
  year={2026}
}

@article{lu2026self,
  title={Self-distilled agentic reinforcement learning},
  author={Lu, Zhengxi and Yao, Zhiyuan and Han, Zhuowen and Wang, Zi-Han and Wu, Jinyang and Gu, Qi and Cai, Xunliang and Lu, Weiming and Xiao, Jun and Zhuang, Yueting and others},
  journal={arXiv preprint arXiv:2605.15155},
  year={2026}
}

@article{yang2026self,
  title={Self-distilled rlvr},
  author={Yang, Chenxu and Qin, Chuanyu and Si, Qingyi and Chen, Minghui and Gu, Naibin and Yao, Dingyu and Lin, Zheng and Wang, Weiping and Wang, Jiaqi and Duan, Nan},
  journal={arXiv preprint arXiv:2604.03128},
  year={2026}
}

@article{xu2026tip,
  title={Tip: Token importance in on-policy distillation},
  author={Xu, Yuanda and Sang, Hejian and Zhou, Zhengze and He, Ran and Wang, Zhipeng and Geramifard, Alborz},
  journal={arXiv preprint arXiv:2604.14084},
  year={2026}
}

@article{jiang2026trajectory,
  title={Trajectory-Refined Distillation},
  author={Jiang, Li and Xu, Haoran and Ding, Yichuan and Zhang, Amy},
  journal={arXiv preprint arXiv:2606.08432},
  year={2026}
}

@inproceedings{jung2025todi,
  title={Todi: Token-wise distillation via fine-grained divergence control},
  author={Jung, Seongryong and Yoon, Suwan and Kim, DongGeon and Lee, Hwanhee},
  booktitle={Proceedings of the 2025 Conference on Empirical Methods in Natural Language Processing},
  pages={8089--8102},
  year={2025}
}

@article{jin2026entropy,
  title={Entropy-Aware On-Policy Distillation of Language Models},
  author={Jin, Woogyeol and Min, Taywon and Yang, Yongjin and Kadhe, Swanand Ravindra and Zhou, Yi and Wei, Dennis and Baracaldo, Nathalie and Lee, Kimin},
  journal={arXiv preprint arXiv:2603.07079},
  year={2026}
}

@article{jia2026asymmetric,
  title={Asymmetric on-policy distillation: Bridging exploitation and imitation at the token level},
  author={Jia, Nan and Yang, Haojin and Ma, Xing and Lian, Jiesong and Zhang, Shuailiang and Zhang, Weipeng and Zeng, Ke and Cai, Xunliang and Sun, Zequn},
  journal={arXiv preprint arXiv:2605.06387},
  year={2026}
}

@article{xing2026trust,
  title={Trust Region On-Policy Distillation},
  author={Xing, Xingrun and Wang, Haoqing and Gao, Boyan and Li, Ziheng and Tang, Yehui},
  journal={arXiv preprint arXiv:2606.01249},
  year={2026}
}

@article{li2026revisiting,
  title={Revisiting DAgger in the Era of LLM-Agents},
  author={Li, Changhao and Qiang, Rushi and Huang, Jiawei and Gao, Chenxiao and Zhang, Chao and He, Niao and Dai, Bo},
  journal={arXiv preprint arXiv:2605.12913},
  year={2026}
}

@article{yang2026reasoning,
  title={Reasoning Compression with Mixed-Policy Distillation},
  author={Yang, Han and Wu, Mingyan and He, Bailan and Cao, Zeyu and Yan, Sikuan and Lin, Kevin Qinghong and Ding, Zifeng},
  journal={arXiv preprint arXiv:2605.08776},
  year={2026}
}

@article{jang2026stable,
  title={Stable On-Policy Distillation through Adaptive Target Reformulation},
  author={Jang, Ijun and Yeom, Jewon and Yeo, Juan and Lim, Hyunggu and Kim, Taesup},
  journal={arXiv preprint arXiv:2601.07155},
  year={2026}
}

@article{liu2026prefix,
  title={Prefix Teach, Suffix Fade: Local Teachability Collapse in Strong-to-Weak On-Policy Distillation},
  author={Liu, Kaiyuan and Zhuang, Ziyuan and Bai, Yang and Wang, Bing and Weng, Rongxiang and Ye, Jieping},
  journal={arXiv preprint arXiv:2605.13643},
  year={2026}
}

@article{agrawal2026reinforcement,
  title={Reinforcement Learning from Rich Feedback with Distributional DAgger},
  author={Agrawal, Rishabh and Fein-Ashley, Jacob and Rashidinejad, Paria},
  journal={arXiv preprint arXiv:2606.05152},
  year={2026}
}

@article{zhao2026decoupling,
  title={Decoupling kl and trajectories: A unified perspective for sft, dagger, offline rl, and opd in llm distillation},
  author={Zhao, Anhao and Xin, Haoran and Fan, Yingqi and Tong, Junlong and Li, Wenjie and Shen, Xiaoyu},
  journal={arXiv preprint arXiv:2605.16826},
  year={2026}
}

@article{lu2025onpolicydistillation,
  author = {Kevin Lu and Thinking Machines Lab},
  title = {On-Policy Distillation},
  journal = {Thinking Machines Lab: Connectionism},
  year = {2025},
  note = {https://thinkingmachines.ai/blog/on-policy-distillation},
  doi = {10.64434/tml.20251026},
}

@inproceedings{agarwal2024policy,
  title={On-policy distillation of language models: Learning from self-generated mistakes},
  author={Agarwal, Rishabh and Vieillard, Nino and Zhou, Yongchao and Stanczyk, Piotr and Ramos Garea, Sabela and Geist, Matthieu and Bachem, Olivier},
  booktitle={International Conference on Learning Representations},
  volume={2024},
  pages={21246--21263},
  year={2024}
}

@misc{mimo2025flash,
  title={MiMo-V2-Flash Technical Report},
  author={LLM-Core Xiaomi},
  year={2025},
  url={https://github.com/XiaomiMiMo/MiMo-V2-Flash/paper.pdf}
}

@article{zeng2026glm,
  title={Glm-5: from vibe coding to agentic engineering},
  author={Zeng, Aohan and Lv, Xin and Hou, Zhenyu and Du, Zhengxiao and Zheng, Qinkai and Chen, Bin and Yin, Da and Ge, Chendi and Huang, Chenghua and Xie, Chengxing and others},
  journal={arXiv preprint arXiv:2602.15763},
  year={2026}
}

@article{kim2026does,
  title={Why Does Self-Distillation (Sometimes) Degrade the Reasoning Capability of LLMs?},
  author={Kim, Jeonghye and Luo, Xufang and Kim, Minbeom and Lee, Sangmook and Kim, Dohyung and Jeon, Jiwon and Li, Dongsheng and Yang, Yuqing},
  journal={arXiv preprint arXiv:2603.24472},
  year={2026}
}

@article{ko2026scaling,
  title={Scaling reasoning efficiently via relaxed on-policy distillation},
  author={Ko, Jongwoo and Abdali, Sara and Kim, Young Jin and Chen, Tianyi and Cameron, Pashmina},
  journal={arXiv preprint arXiv:2603.11137},
  year={2026}
}

@inproceedings{ross2011reduction,
  title={A reduction of imitation learning and structured prediction to no-regret online learning},
  author={Ross, St{\'e}phane and Gordon, Geoffrey and Bagnell, Drew},
  booktitle={Proceedings of the fourteenth international conference on artificial intelligence and statistics},
  pages={627--635},
  year={2011},
  organization={JMLR Workshop and Conference Proceedings}
}

@article{ilharco2022editing,
  title={Editing models with task arithmetic},
  author={Ilharco, Gabriel and Ribeiro, Marco Tulio and Wortsman, Mitchell and Gururangan, Suchin and Schmidt, Ludwig and Hajishirzi, Hannaneh and Farhadi, Ali},
  journal={arXiv preprint arXiv:2212.04089},
  year={2022}
}

@inproceedings{frankle2020linear,
  title={Linear mode connectivity and the lottery ticket hypothesis},
  author={Frankle, Jonathan and Dziugaite, Gintare Karolina and Roy, Daniel and Carbin, Michael},
  booktitle={International conference on machine learning},
  pages={3259--3269},
  year={2020},
  organization={PMLR}
}

@article{cai2025predictability,
  title={On predictability of reinforcement learning dynamics for large language models},
  author={Cai, Yuchen and Cao, Ding and Xu, Xin and Yao, Zijun and Huang, Yuqing and Tan, Zhenyu and Zhang, Benyi and Sun, Guangzhong and Liu, Guiquan and Fang, Junfeng},
  journal={arXiv preprint arXiv:2510.00553},
  year={2025}
}

@article{wei2026you,
  title={You Only Need Minimal RLVR Training: Extrapolating LLMs via Rank-1 Trajectories},
  author={Wei, Zhepei and Zhu, Xinyu and Chen, Wei-Lin and Huang, Chengsong and Huang, Jiaxin and Meng, Yu},
  journal={arXiv preprint arXiv:2605.21468},
  year={2026}
}

@article{wang2026not,
  title={Not All Steps are Informative: On the Linearity of LLMs' RLVR Training},
  author={Wang, Tianle and Wu, Zhongyuan and Jin, Shenghao and Xu, Hao and Chen, Wei and Miao, Ning},
  journal={arXiv preprint arXiv:2601.04537},
  year={2026}
}

@article{shen2026geometry,
  title={On the Geometry of On-Policy Distillation},
  author={Shen, Zhennan and Li, Yanshu and Yin, Qingyu and Leong, Chak Tou and Wang, Zhilin and Chen, Yanxu and Han, Rongduo and Lee, Sunbowen and Fung, Yi R},
  journal={arXiv preprint arXiv:2606.07082},
  year={2026}
}

@article{yang2026learning,
  title={Learning beyond teacher: Generalized on-policy distillation with reward extrapolation},
  author={Yang, Wenkai and Liu, Weijie and Xie, Ruobing and Yang, Kai and Yang, Saiyong and Lin, Yankai},
  journal={arXiv preprint arXiv:2602.12125},
  year={2026}
}

@article{wang2026openclaw,
  title={Openclaw-rl: Train any agent simply by talking},
  author={Wang, Yinjie and Chen, Xuyang and Jin, Xiaolong and Wang, Mengdi and Yang, Ling},
  journal={arXiv preprint arXiv:2603.10165},
  year={2026}
}

@article{xu2025kdrl,
  title={Kdrl: Post-training reasoning llms via unified knowledge distillation and reinforcement learning},
  author={Xu, Hongling and Zhu, Qi and Deng, Heyuan and Li, Jinpeng and Hou, Lu and Wang, Yasheng and Shang, Lifeng and Xu, Ruifeng and Mi, Fei},
  journal={arXiv preprint arXiv:2506.02208},
  year={2025}
}

@article{ramos2026combining,
  title={Combining On-Policy Optimization and Distillation for Long-Context Reasoning in Large Language Models},
  author={Ramos, Miguel Moura and Alves, Duarte M and Martins, Andr{\'e} FT},
  journal={arXiv preprint arXiv:2605.12227},
  year={2026}
}

@article{zhang2026reinforcement,
  title={Reinforcement-aware knowledge distillation for llm reasoning},
  author={Zhang, Zhaoyang and Jiang, Shuli and Shen, Yantao and Zhang, Yuting and Ram, Dhananjay and Yang, Shuo and Tu, Zhuowen and Xia, Wei and Soatto, Stefano},
  journal={arXiv preprint arXiv:2602.22495},
  year={2026}
}

@article{oh2026kl,
  title={KL for a KL: On-Policy Distillation with Control Variate Baseline},
  author={Oh, Minjae and Song, Sangjun and Choi, Gyubin and Choi, Yunho and Jo, Yohan},
  journal={arXiv preprint arXiv:2605.07865},
  year={2026}
}

@article{xu2026beyond,
  title={Beyond GRPO and On-Policy Distillation: An Empirical Sparse-to-Dense Reward Principle for Language-Model Post-Training},
  author={Xu, Yuanda and Sang, Hejian and Zhou, Zhengze and He, Ran and Wang, Zhipeng and Geramifard, Alborz},
  journal={arXiv preprint arXiv:2605.12483},
  year={2026}
}

@inproceedings{kakade2002approximately,
  title={Approximately optimal approximate reinforcement learning},
  author={Kakade, Sham and Langford, John},
  booktitle={Proceedings of the nineteenth international conference on machine learning},
  pages={267--274},
  year={2002}
}

@inproceedings{wortsman2022model,
  title={Model soups: averaging weights of multiple fine-tuned models improves accuracy without increasing inference time},
  author={Wortsman, Mitchell and Ilharco, Gabriel and Gadre, Samir Yitzhak and Roelofs, Rebecca and Gontijo-Lopes, Raphael and Morcos, Ari S and Namkoong, Hongseok and Farhadi, Ali and Carlin, Yair and Kornblith, Simon and others},
  booktitle={International conference on machine learning},
  pages={23965--23998},
  year={2022},
  organization={PMLR}
}

@inproceedings{izmailov2018averaging,
  title={Averaging weights leads to wider optima and better generalization},
  author={Izmailov, Pavel and Podoprikhin, Dmitrii and Garipov, Timur and Vetrov, Dmitry and Wilson, Andrew Gordon},
  booktitle={34th Conference on Uncertainty in Artificial Intelligence},
  pages={876--885},
  year={2018}
}

@article{harne2026privileged,
  title={Privileged, but Biased: How PI-Conditioned Teachers Break Self-Distillation},
  author={Harne, Sarthak and Karkar, Chinmay and Pandya, Yash and Awadallah, Ahmed and Nambi, Akshay},
  journal={arXiv preprint arXiv:2608.04794},
  year={2026}
}

@article{yan2026learning,
  title={Learning to reason under off-policy guidance},
  author={Yan, Jianhao and Li, Yafu and Hu, Zican and Wang, Zhi and Cui, Ganqu and Qu, Xiaoye and Cheng, Yu and Zhang, Yue},
  journal={Advances in Neural Information Processing Systems},
  volume={38},
  pages={117157--117186},
  year={2026}
}

@article{cheng2026revisiting,
  title={Revisiting reinforcement learning for llm reasoning from a cross-domain perspective},
  author={Cheng, Jorge Zhoujun and Hao, Shibo and Liu, Tianyang and Zhou, Fan and Xie, Yutao and Yao, Feng and Bian, Yuexin and Dey, Nilabjo and Zhuang, Yonghao and Zha, Yuheng and others},
  journal={Advances in Neural Information Processing Systems},
  volume={38},
  year={2026}
}

@article{he2025skywork,
  title={Skywork Open Reasoner 1 Technical Report},
  author={He, Jujie and Liu, Jiacai and Liu, Chris Yuhao and Yan, Rui and Wang, Chaojie and Cheng, Peng and Zhang, Xiaoyu and Zhang, Fuxiang and Xu, Jiacheng and Shen, Wei and Li, Siyuan and Zeng, Liang and Wei, Tianwen and Cheng, Cheng and An, Bo and Liu, Yang and Zhou, Yahui},
  journal={arXiv preprint arXiv:2505.22312},
  year={2025}
}

@article{feng2025group,
  title={Group-in-Group Policy Optimization for LLM Agent Training},
  author={Feng, Lang and Xue, Zhenghai and Liu, Tingcong and An, Bo},
  journal={arXiv preprint arXiv:2505.10978},
  year={2025}
}

@inproceedings{lightman2024let,
  title={Let's verify step by step},
  author={Lightman, Hunter and Kosaraju, Vineet and Burda, Yuri and Edwards, Harrison and Baker, Bowen and Lee, Teddy and Leike, Jan and Schulman, John and Sutskever, Ilya and Cobbe, Karl},
  booktitle={International Conference on Learning Representations},
  volume={2024},
  pages={39578--39601},
  year={2024}
}

@misc{qwen3technicalreport,
      title={Qwen3 Technical Report}, 
      author={Qwen Team},
      year={2025},
      eprint={2505.09388},
      archivePrefix={arXiv},
      primaryClass={cs.CL},
      url={https://arxiv.org/abs/2505.09388}, 
}

@misc{olmo2025olmo3,
title={Olmo 3},
author={Team Olmo and Allyson Ettinger and Amanda Bertsch and Bailey Kuehl and David Graham and David Heineman and Dirk Groeneveld and Faeze Brahman and Finbarr Timbers and Hamish Ivison and Jacob Morrison and Jake Poznanski and Kyle Lo and Luca Soldaini and Matt Jordan and Mayee Chen and Michael Noukhovitch and Nathan Lambert and Pete Walsh and Pradeep Dasigi and Robert Berry and Saumya Malik and Saurabh Shah and Scott Geng and Shane Arora and Shashank Gupta and Taira Anderson and Teng Xiao and Tyler Murray and Tyler Romero and Victoria Graf and Akari Asai and Akshita Bhagia and Alexander Wettig and Alisa Liu and Aman Rangapur and Chloe Anastasiades and Costa Huang and Dustin Schwenk and Harsh Trivedi and Ian Magnusson and Jaron Lochner and Jiacheng Liu and Lester James V. Miranda and Maarten Sap and Malia Morgan and Michael Schmitz and Michal Guerquin and Michael Wilson and Regan Huff and Ronan Le Bras and Rui Xin and Rulin Shao and Sam Skjonsberg and Shannon Zejiang Shen and Shuyue Stella Li and Tucker Wilde and Valentina Pyatkin and Will Merrill and Yapei Chang and Yuling Gu and Zhiyuan Zeng and Ashish Sabharwal and Luke Zettlemoyer and Pang Wei Koh and Ali Farhadi and Noah A. Smith and Hannaneh Hajishirzi},
year={2025},
eprint={2512.13961},
archivePrefix={arXiv},
primaryClass={cs.CL},
url={https://arxiv.org/abs/2512.13961},
}

@article{shridhar2020alfworld,
  title={Alfworld: Aligning text and embodied environments for interactive learning},
  author={Shridhar, Mohit and Yuan, Xingdi and C{\^o}t{\'e}, Marc-Alexandre and Bisk, Yonatan and Trischler, Adam and Hausknecht, Matthew},
  journal={arXiv preprint arXiv:2010.03768},
  year={2020}
}

@article{yao2022webshop,
  title={Webshop: Towards scalable real-world web interaction with grounded language agents},
  author={Yao, Shunyu and Chen, Howard and Yang, John and Narasimhan, Karthik},
  journal={Advances in Neural Information Processing Systems},
  volume={35},
  pages={20744--20757},
  year={2022}
}

@misc{qwen2.5,
    title = {Qwen2.5: A Party of Foundation Models},
    url = {https://qwenlm.github.io/blog/qwen2.5/},
    author = {Qwen Team},
    month = {September},
    year = {2024}
}

@inproceedings{evalplus,
  title = {Is Your Code Generated by Chat{GPT} Really Correct? Rigorous Evaluation of Large Language Models for Code Generation},
  author = {Liu, Jiawei and Xia, Chunqiu Steven and Wang, Yuyao and Zhang, Lingming},
  booktitle = {Thirty-seventh Conference on Neural Information Processing Systems},
  year = {2023},
  url = {https://openreview.net/forum?id=1qvx610Cu7},
}

@inproceedings{jain2025livecodebench,
  title={Livecodebench: Holistic and contamination free evaluation of large language models for code},
  author={Jain, Naman and Gu, Alex and Li, Wen-Ding and Yan, Fanjia and Zhang, Tianjun and Wang, Sida and Solar-Lezama, Armando and Sen, Koushik and Stoica, Ion},
  booktitle={International Conference on Learning Representations},
  volume={2025},
  pages={58791--58831},
  year={2025}
}

@article{rein2023gpqa,
  title={Gpqa: A graduate-level google-proof q\&a benchmark},
  author={Rein, David and Hou, Betty Li and Stickland, Asa Cooper and Petty, Jackson and Pang, Richard Yuanzhe and Dirani, Julien and Michael, Julian and Bowman, Samuel R},
  journal={arXiv preprint arXiv:2311.12022},
  year={2023}
}

@article{zhou2023instruction,
  title={Instruction-following evaluation for large language models},
  author={Zhou, Jeffrey and Lu, Tianjian and Mishra, Swaroop and Brahma, Siddhartha and Basu, Sujoy and Luan, Yi and Zhou, Denny and Hou, Le},
  journal={arXiv preprint arXiv:2311.07911},
  year={2023}
}

@article{wang2024mmlu,
  title={Mmlu-pro: A more robust and challenging multi-task language understanding benchmark},
  author={Wang, Yubo and Ma, Xueguang and Zhang, Ge and Ni, Yuansheng and Chandra, Abhranil and Guo, Shiguang and Ren, Weiming and Arulraj, Aaran and He, Xuan and Jiang, Ziyan and others},
  journal={Advances in Neural Information Processing Systems},
  volume={37},
  pages={95266--95290},
  year={2024}
}

@article{du2026supergpqa,
  title={Supergpqa: Scaling llm evaluation across 285 graduate disciplines},
  author={Du, Xeron and Yao, Yifan and Ma, Kaijing and Wang, Bingli and Zheng, Tianyu and Liu, Minghao and Liang, Yiming and Jin, Xiaolong and Wei, Zhenlin and Zheng, Chujie and others},
  journal={Advances in Neural Information Processing Systems},
  volume={38},
  year={2026}
}

@inproceedings{chen2023theoremqa,
  title={Theoremqa: A theorem-driven question answering dataset},
  author={Chen, Wenhu and Yin, Ming and Ku, Max and Lu, Pan and Wan, Yixin and Ma, Xueguang and Xu, Jianyu and Wang, Xinyi and Xia, Tony},
  booktitle={Proceedings of the 2023 Conference on Empirical Methods in Natural Language Processing},
  pages={7889--7901},
  year={2023}
}

@article{lewkowycz2022solving,
  title={Solving quantitative reasoning problems with language models},
  author={Lewkowycz, Aitor and Andreassen, Anders and Dohan, David and Dyer, Ethan and Michalewski, Henryk and Ramasesh, Vinay and Slone, Ambrose and Anil, Cem and Schlag, Imanol and Gutman-Solo, Theo and others},
  journal={Advances in neural information processing systems},
  volume={35},
  pages={3843--3857},
  year={2022}
}

@article{he2024olympiadbench,
  title={Olympiadbench: A challenging benchmark for promoting agi with olympiad-level bilingual multimodal scientific problems},
  author={He, Chaoqun and Luo, Renjie and Bai, Yuzhuo and Hu, Shengding and Thai, Zhen Leng and Shen, Junhao and Hu, Jinyi and Han, Xu and Huang, Yujie and Zhang, Yuxiang and others},
  journal={arXiv preprint arXiv:2402.14008},
  year={2024}
}

@misc{aime_2024,
  author = {Maxwell Jia},
  title = {AIME Problem Set 2024},
  year = {2024},
  publisher = {Huggingface},
  url = {https://huggingface.co/datasets/Maxwell-Jia/AIME_2024}
}

@misc{aime_2025,
  author = {math-ai},
  title = {AIME Problem Set 2025},
  year = {2025},
  publisher = {Huggingface},
  url = {https://huggingface.co/datasets/math-ai/aime25}
}

@misc{amc2023,
  title = {{AMC} 10/12 2023},
  author = {{MAA}},
  year = {2023},
  url = {https://www.maa.org/math-competitions/amc-1012},
}
\bibliographystyle{salesforce}

\appendix
\counterwithin{figure}{section}
\counterwithin{table}{section}
\counterwithin{equation}{section}
\newpage

\section{Theoretical Analysis}
\label{app:theory}

\subsection{Suboptimality Decomposition and Optimal Teacher}

We first state the performance difference lemma adapted to the autoregressive setting~\citep{kakade2002approximately}.

\begin{lemma}[Performance difference]
\label{lem:perf-diff}
For two policies $\pi_1, \pi_2$ generating responses of length $T$ with terminal reward $R(x,y) \in [0,1]$:
\begin{equation}
    J(\pi_1) - J(\pi_2) = \sum_{t=1}^{T} \mathbb{E}_{s_t \sim \rho^t_{\pi_1}}\!\left[\sum_{v}\left(\pi_1(v \mid s_t) - \pi_2(v \mid s_t)\right) Q^{\pi_2}(s_t, v)\right],
\end{equation}
where $s_t = (x, y_{<t})$ is the context at position $t$, $\rho^t_{\pi_1}$ is the distribution over position-$t$ contexts induced by $\pi_1$, and $Q^{\pi_2}(s_t, v) = \mathbb{E}_{y_{t+1:T} \sim \pi_2}[R(x, y) \mid s_t, y_t = v] \in [0,1]$ is the action-value function under $\pi_2$.
\end{lemma}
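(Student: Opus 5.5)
The plan is the standard telescoping (hybrid-policy) argument specialized to the finite-horizon autoregressive setting. For $k=0,\dots,T$ define the hybrid policy $\sigma_k$ that samples tokens $y_1,\dots,y_k$ from $\pi_1$ and tokens $y_{k+1},\dots,y_T$ from $\pi_2$. Then $\sigma_T=\pi_1$ and $\sigma_0=\pi_2$, so
\begin{equation*}
J(\pi_1)-J(\pi_2)=\sum_{t=1}^{T}\big(J(\sigma_t)-J(\sigma_{t-1})\big).
\end{equation*}
It therefore suffices to show that each consecutive difference equals the $t$-th summand of the lemma. Throughout, the prompt $x$ is drawn from the same prompt distribution under both policies, and all expectations are finite sums or bounded integrands, because the vocabulary is finite and $R\in[0,1]$.

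Fix $t$. Under both $\sigma_t$ and $\sigma_{t-1}$, the first $t-1$ tokens are generated by $\pi_1$, so the context $s_t=(x,y_{<t})$ has law $\rho^t_{\pi_1}$ in both cases. After position $t$, both hybrids continue with $\pi_2$. They differ only in the distribution of $y_t$: it is $\pi_1(\cdot\mid s_t)$ under $\sigma_t$ and $\pi_2(\cdot\mid s_t)$ under $\sigma_{t-1}$. Conditioning on $s_t$ and $y_t=v$, the expected terminal reward under either hybrid is $\mathbb{E}_{y_{t+1:T}\sim\pi_2}[R(x,y)\mid s_t,y_t=v]=Q^{\pi_2}(s_t,v)$. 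Applying the tower rule,
\begin{equation*}
J(\sigma_t)-J(\sigma_{t-1})=\mathbb{E}_{s_t\sim\rho^t_{\pi_1}}\Big[\sum_v\big(\pi_1(v\mid s_t)-\pi_2(v\mid s_t)\big)Q^{\pi_2}(s_t,v)\Big].
\end{equation*}
Summing over $t$ gives the claim. The bound $Q^{\pi_2}\in[0,1]$ follows immediately from $R\in[0,1]$.

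The only delicate step is the bookkeeping for variable-length responses, which occur when an EOS token is emitted before position $T$. I would handle this by padding: after EOS, every policy deterministically emits a padding token. This makes all responses have length exactly $T$, leaves $R$ unchanged, and makes the terms for post-EOS positions vanish, because $\pi_1$ and $\pi_2$ agree there.

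One should also check that $Q^{\pi_2}$ at $t=T$ is simply $R(x,y)$; with the convention $y_{T+1:T}=\emptyset$, this holds. With these conventions, the identity is exact and requires no approximation.
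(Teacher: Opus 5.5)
Your argument is correct, but it takes a different route from the paper's. The paper does not prove the identity from scratch. It quotes the Kakade--Langford performance difference lemma in its finite-horizon, undiscounted advantage form, $J(\pi_1)-J(\pi_2)=\sum_{t}\mathbb{E}_{s_t\sim\rho^t_{\pi_1}}\mathbb{E}_{a_t\sim\pi_1}[A^{\pi_2}(s_t,a_t)]$. It then does a one-line rewrite: expand $A^{\pi_2}=Q^{\pi_2}-V^{\pi_2}$, use $\sum_v \pi_1(v\mid s_t)=1$, and substitute $V^{\pi_2}(s_t)=\sum_v \pi_2(v\mid s_t)Q^{\pi_2}(s_t,v)$.

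Your hybrid-policy telescoping derives the $Q$-difference form directly, without advantages or value functions. The identity $V^{\pi_2}(s_t)=\sum_v\pi_2(v\mid s_t)Q^{\pi_2}(s_t,v)$ appears only implicitly, as the conditional value of $\sigma_{t-1}$ given $s_t$.

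The paper's version is shorter, but it takes as known that the discounted infinite-horizon lemma ``specializes'' to the finite-horizon autoregressive case. Your version is self-contained and shows explicitly why contexts are drawn from $\rho^t_{\pi_1}$. That orientation is what Theorem~\ref{thm:suboptimality} later uses to place the expectation under the student's own distribution. Your padding convention also covers responses that end early with EOS, which the paper avoids by assuming fixed length $T$.
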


\begin{proof}
The performance difference lemma~\citep{kakade2002approximately}, originally stated for the discounted infinite-horizon setting, specializes to the finite-horizon undiscounted case as
\begin{equation}
J(\pi_1) - J(\pi_2) = \sum_{t=1}^{T} \mathbb{E}_{s_t \sim P_t^{\pi_1}}[\mathbb{E}_{a_t \sim \pi_1}[A^{\pi_2}(s_t, a_t)]],
\end{equation}
where $A^{\pi_2}(s,a) = Q^{\pi_2}(s,a) - V^{\pi_2}(s)$ is the advantage under $\pi_2$.
In the autoregressive setting, states are $s_t = (x, y_{<t})$ and actions are tokens $v$.
Expanding $A^{\pi_2} = Q^{\pi_2} - V^{\pi_2}$ and noting that $V^{\pi_2}(s_t)$ is constant w.r.t.\ the sum over $v$:
\begin{align}
\sum_v \pi_1(v|s_t)\, A^{\pi_2}(s_t,v)
&= \sum_v \pi_1(v|s_t)\, Q^{\pi_2}(s_t,v) - V^{\pi_2}(s_t) \underbrace{\sum_v \pi_1(v|s_t)}_{=\,1} \nonumber \\
&= \sum_v \pi_1(v|s_t)\, Q^{\pi_2}(s_t,v) - \underbrace{\sum_v \pi_2(v|s_t)\, Q^{\pi_2}(s_t,v)}_{=\,V^{\pi_2}(s_t)} \nonumber \\
&= \sum_v \left(\pi_1(v|s_t) - \pi_2(v|s_t)\right) Q^{\pi_2}(s_t, v).
\end{align}
Finally, $Q^{\pi_2}(s_t, v) \in [0,1]$ because it is a conditional expectation of $R \in [0,1]$.
\end{proof}

\begin{theorem}[Suboptimality decomposition for OPD]
\label{thm:suboptimality}
Let $\pi_\theta$ be a student and $\pi_T$ a teacher, with $\pi^* = \arg\max_\pi J(\pi)$, $J(\pi) = \mathbb{E}_x[\mathbb{E}_{y \sim \pi}[R(x,y)]]$, $R \in [0,1]$, and response length $T$.
The student's suboptimality decomposes as:
\begin{equation}
    J(\pi^*) - J(\pi_\theta) \;\leq\;
    \underbrace{J(\pi^*) - J(\pi_T)}_{\text{teacher gap}}
    \;+\; T \cdot \underbrace{\sqrt{\bar{\mathcal{L}}_{\mathrm{OPD}} / 2}}_{\text{distillation error}},
    \label{eq:suboptimality}
\end{equation}
where $\bar{\mathcal{L}}_{\mathrm{OPD}} = \frac{1}{T}\sum_{t=1}^{T} \mathbb{E}_{s_t \sim \rho^t_{\pi_\theta}}\!\left[\KL\!\left(\pi_\theta(\cdot \mid s_t) \;\|\; \pi_T(\cdot \mid s_t)\right)\right]$ is the average per-token reverse KL under the student's own context distribution.
The bound identifies two independent error sources: a \emph{distillation error} that vanishes as OPD converges ($\pi_\theta \to \pi_T$), and a \emph{teacher gap} that persists regardless of optimization quality.
Because the $T$-dependent prefactor makes the bound a qualitative mechanism decomposition rather than a tight numerical certificate, its practical value is directional: the decomposition shows that, all else equal, a stronger teacher (smaller gap) admits a tighter bound on student suboptimality.
\end{theorem}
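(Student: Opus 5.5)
The plan is to split the suboptimality telescopically as
\[
J(\pi^*) - J(\pi_\theta) = \big(J(\pi^*) - J(\pi_T)\big) + \big(J(\pi_T) - J(\pi_\theta)\big).
\]
The first term is the teacher gap and is kept as is. All the work goes into bounding the second term by $T\sqrt{\bar{\mathcal{L}}_{\mathrm{OPD}}/2}$.

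For the second term I apply Lemma~\ref{lem:perf-diff} with $\pi_1 = \pi_\theta$ and $\pi_2 = \pi_T$. This ordering puts the expectation over contexts under the student's own distribution $\rho^t_{\pi_\theta}$, which is exactly where $\bar{\mathcal{L}}_{\mathrm{OPD}}$ is measured. The lemma gives
\[
J(\pi_T) - J(\pi_\theta) = -\sum_{t=1}^T \mathbb{E}_{s_t\sim\rho^t_{\pi_\theta}}\Big[\sum_v \big(\pi_\theta(v|s_t)-\pi_T(v|s_t)\big)Q^{\pi_T}(s_t,v)\Big].
\]

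Next I bound each inner sum. Because $Q^{\pi_T}\in[0,1]$, I can subtract the constant $\tfrac12$ from $Q$ without changing the sum, since the two distributions have equal total mass. The inner sum is then at most $\tfrac12\|\pi_\theta-\pi_T\|_1 = \mathrm{TV}(\pi_\theta(\cdot|s_t),\pi_T(\cdot|s_t))$. Even the cruder bound $\|\cdot\|_1$ together with the Pinsker form $\mathrm{TV}\le\sqrt{\KL/2}$ fits the target constant once the centering trick is used, so I take the centered version. This gives
\[
|J(\pi_T)-J(\pi_\theta)| \le \sum_t \mathbb{E}_{s_t}\big[\sqrt{\KL_t/2}\big],
\]
where $\KL_t$ is the reverse KL $\KL(\pi_\theta\|\pi_T)$ at $s_t$. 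Pinsker is symmetric in direction for TV, so the reverse KL is fine.

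Then I apply Jensen's inequality (concavity of the square root) twice. The first application moves the expectation inside the root at each $t$. The second is the averaging step $\sum_{t=1}^T\sqrt{a_t}\le T\sqrt{\tfrac1T\sum_t a_t}$. Together these give $T\sqrt{\bar{\mathcal{L}}_{\mathrm{OPD}}/2}$.

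The only delicate point is checking the constant: the centering of $Q$ is what yields TV rather than the $\ell_1$ norm. The rest is bookkeeping. I would also make explicit that $J$ includes the outer expectation over $x$, which is absorbed into $\rho^t$. The qualitative remarks in the statement follow immediately from the two-term form and need no separate proof.
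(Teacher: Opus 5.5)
Your proposal is correct and follows the paper's proof essentially step for step. The steps match: the same telescoping split, the performance difference lemma with $\pi_1=\pi_\theta$, $\pi_2=\pi_T$ so that contexts are drawn from $\rho^t_{\pi_\theta}$, centering $Q^{\pi_T}$ to get TV, Pinsker, Jensen at each position, and a final averaging over $t$. The differences are cosmetic: you center at $\tfrac12$ rather than at $(\max Q+\min Q)/2$, and you call the last step Jensen rather than Cauchy--Schwarz. One aside is muddled, since uncentered H\"older with $\|Q\|_\infty\le 1$ gives $2\,\mathrm{TV}$ and would lose a factor of $2$, but it does not affect the argument because you use the centered version.
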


\begin{proof}
Decompose the gap as $J(\pi^*) - J(\pi_\theta) = [J(\pi^*) - J(\pi_T)] + [J(\pi_T) - J(\pi_\theta)]$.
For the second term, apply Lemma~\ref{lem:perf-diff} with $\pi_1 = \pi_\theta$ and $\pi_2 = \pi_T$, which gives
$J(\pi_\theta) - J(\pi_T) = \sum_{t=1}^{T} \mathbb{E}_{s_t \sim \rho^t_{\pi_\theta}}[\sum_v (\pi_\theta(v \mid s_t) - \pi_T(v \mid s_t))\, Q^{\pi_T}(s_t, v)]$.
Multiplying by $-1$ flips both the performance difference and the inner difference:
\begin{equation}
    J(\pi_T) - J(\pi_\theta)
    = \sum_{t=1}^{T} \mathbb{E}_{s_t \sim \rho^t_{\pi_\theta}}\!\left[
    \sum_{v} \left(\pi_T(v \mid s_t) - \pi_\theta(v \mid s_t)\right)
    Q^{\pi_T}(s_t, v)\right].
\end{equation}
Note that this orientation places the expectation under the \emph{student's} own context distribution.
Since $Q^{\pi_T}(s_t, v) \in [0,1]$ (it is a conditional expectation of $R \in [0,1]$), the range $\max_v Q - \min_v Q \leq 1$.
Because $\sum_v (\pi_T(v|s_t) - \pi_\theta(v|s_t)) = 0$, we may subtract any constant from $Q$ inside the sum.
Choosing $c = (\max Q + \min Q)/2$ and applying H\"{o}lder's inequality:
$|\sum_v (\pi_T - \pi_\theta) Q| = |\sum_v (\pi_T - \pi_\theta)(Q - c)| \leq \|Q - c\|_\infty \cdot \|\pi_T - \pi_\theta\|_1 = \frac{\max Q - \min Q}{2} \cdot 2\,\mathrm{TV} \leq \mathrm{TV}$,
where $\mathrm{TV}(p,q) = \frac{1}{2}\sum_v |p(v) - q(v)|$. Therefore:
\begin{equation}
    J(\pi_T) - J(\pi_\theta) \leq \sum_{t=1}^{T} \mathbb{E}_{s_t \sim \rho^t_{\pi_\theta}}\!\left[\mathrm{TV}(\pi_T(\cdot|s_t), \pi_\theta(\cdot|s_t))\right].
\end{equation}
Applying Pinsker's inequality ($\mathrm{TV}(p,q) \leq \sqrt{\KL(p\|q)/2}$) to each term---valid in either argument order, since $\mathrm{TV}$ is symmetric---then Jensen's inequality ($\mathbb{E}[\sqrt{X}] \leq \sqrt{\mathbb{E}[X]}$ since $\sqrt{\cdot}$ is concave):
\begin{equation}
    \sum_{t=1}^{T} \mathbb{E}_{s_t \sim \rho^t_{\pi_\theta}}\!\left[\mathrm{TV}_t\right]
    \leq \sum_{t=1}^{T} \sqrt{\mathbb{E}_{s_t \sim \rho^t_{\pi_\theta}}[D_t] / 2}
    = T \cdot \frac{1}{T}\sum_{t=1}^{T} \sqrt{\mathbb{E}[D_t]/2},
\end{equation}
where $D_t = \KL(\pi_\theta(\cdot|s_t) \| \pi_T(\cdot|s_t))$.
By Cauchy--Schwarz, $\frac{1}{T}\sum_t \sqrt{a_t} \leq \sqrt{\frac{1}{T}\sum_t a_t}$ for $a_t = \mathbb{E}[D_t]/2 \geq 0$.
Therefore $\sum_t \sqrt{\mathbb{E}[D_t]/2} \leq T \cdot \sqrt{\bar{\mathcal{L}}_{\mathrm{OPD}}/2}$, giving the stated bound.
The teacher gap $J(\pi^*) - J(\pi_T) \geq 0$ by definition of $\pi^*$, with equality iff $\pi_T$ is itself optimal, i.e., $J(\pi_T) = J(\pi^*)$.
\end{proof}

\subsection{Extrapolation Gap and Convergence}
\label{app:extrap-gap}

\begin{proposition}[Safe extrapolation range under a linear trajectory]
\label{prop:extrapolation-gap}
Let $\varphi: \Pi \to \mathcal{Z}$ be a representation map such that the training trajectory is linear: $\varphi(\pi_{\theta_n}) = \varphi(\pi_{\theta_0}) + \alpha_n \cdot \mathbf{d}$ for a unit direction $\mathbf{d}$ and monotonically increasing coefficients $0 = \alpha_0 < \alpha_n < \alpha^*$, where $\varphi(\pi^*) = \varphi(\pi_{\theta_0}) + \alpha^* \cdot \mathbf{d}$.
The extrapolated teacher $\varphi(\pi_{\mathrm{future}}) = \varphi(\pi_{\theta_0}) + \beta \cdot \alpha_n \cdot \mathbf{d}$ is closer to $\pi^*$ than the student is, i.e.\
\begin{equation}
    \|\varphi(\pi_{\mathrm{future}}) - \varphi(\pi^*)\| < \|\varphi(\pi_{\theta_n}) - \varphi(\pi^*)\|,
\end{equation}
exactly when $1 < \beta < 2\alpha^*/\alpha_n - 1$. This is an immediate consequence of the linear-trajectory assumption---it defines the safe $\beta$ range rather than derives it from weaker premises---and its value lies in making two properties explicit: (i)~any $\beta \in (1, \alpha^*/\alpha_n)$ places the teacher strictly between the student and $\pi^*$ without overshooting, and (ii)~the upper limit $2\alpha^*/\alpha_n - 1$ shrinks as $\alpha_n \to \alpha^*$, so the safe range narrows as training converges.
\end{proposition}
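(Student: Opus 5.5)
The plan is to reduce everything to a one-dimensional computation along $\mathbf{d}$. Under the linear-trajectory assumption, all three representations $\varphi(\pi_{\theta_n})$, $\varphi(\pi_{\mathrm{future}})$ and $\varphi(\pi^*)$ lie on the line $\varphi(\pi_{\theta_0}) + \mathbb{R}\,\mathbf{d}$. Because $\|\mathbf{d}\| = 1$, the distances collapse to absolute differences of scalar coefficients:
\[
\|\varphi(\pi_{\mathrm{future}}) - \varphi(\pi^*)\| = |\beta\alpha_n - \alpha^*|, \qquad \|\varphi(\pi_{\theta_n}) - \varphi(\pi^*)\| = |\alpha_n - \alpha^*| = \alpha^* - \alpha_n ,
\]
where the last equality uses $0 < \alpha_n < \alpha^*$. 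I first record that the extrapolated teacher, defined as $\varphi(\pi_{\theta_0}) + \beta(\varphi(\pi_{\theta_n}) - \varphi(\pi_{\theta_0}))$, indeed equals $\varphi(\pi_{\theta_0}) + \beta\alpha_n\mathbf{d}$. This holds since the anchor is $\theta_0$ with $\alpha_0 = 0$.

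The claim is then the scalar inequality $|\beta\alpha_n - \alpha^*| < \alpha^* - \alpha_n$. This is equivalent to
\[
-(\alpha^* - \alpha_n) < \beta\alpha_n - \alpha^* < \alpha^* - \alpha_n .
\]
Dividing by $\alpha_n > 0$, the left inequality becomes $\beta > 1$ and the right becomes $\beta < 2\alpha^*/\alpha_n - 1$. Both directions hold simultaneously, which gives the ``exactly when'' characterization. The interval is nonempty because $\alpha^*/\alpha_n > 1$ implies $2\alpha^*/\alpha_n - 1 > 1$. I would note that $\beta$ ranging over all reals is allowed here, so the condition $\beta > 1$ comes out of the inequality rather than being assumed.

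For the two stated properties, the argument is as follows.

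(i) If $1 < \beta < \alpha^*/\alpha_n$, then $\alpha_n < \beta\alpha_n < \alpha^*$. So the teacher coefficient lies strictly between the student's and the optimum's, and there is no overshoot. This subinterval is contained in the safe range because $\alpha^*/\alpha_n < 2\alpha^*/\alpha_n - 1$, which is equivalent to $\alpha^*/\alpha_n > 1$.

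(ii) The upper limit $g(\alpha_n) = 2\alpha^*/\alpha_n - 1$ is strictly decreasing in $\alpha_n$ on $(0, \alpha^*)$, and it tends to $1$ as $\alpha_n \to \alpha^*$. Since $\alpha_n$ is monotonically increasing in $n$, the safe range $(1, g(\alpha_n))$ shrinks toward the empty interval as training converges.

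There is no real obstacle here. The only care needed is (a) confirming that the norm reduces to absolute values via $\|\mathbf{d}\| = 1$ and collinearity, and (b) keeping track of signs when dividing by $\alpha_n > 0$. I would also remark explicitly that the anchor is taken to be $\theta_0$, matching the statement's definition of $\pi_{\mathrm{future}}$, so that no separate displacement bookkeeping for a moving anchor is needed.
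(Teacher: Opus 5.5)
Your proposal is correct and follows essentially the same route as the paper: you collapse the three points to scalar coefficients along the unit direction $\mathbf{d}$, reduce the claim to $|\beta\alpha_n - \alpha^*| < \alpha^* - \alpha_n$, and solve it to get $1 < \beta < 2\alpha^*/\alpha_n - 1$, with (i) and (ii) read off directly. Your extra checks (that the interval is nonempty, that $(1,\alpha^*/\alpha_n)$ is contained in the safe range, and that the upper limit decreases strictly in $\alpha_n$) are small additions the paper leaves implicit.
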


\begin{proof}
The gap between the extrapolated teacher and $\pi^*$ is $\|\varphi(\pi_{\mathrm{future}}) - \varphi(\pi^*)\| = |\beta \alpha_n - \alpha^*|$.
The gap between the current student and $\pi^*$ is $\alpha^* - \alpha_n$.
The extrapolated teacher is closer whenever $|\beta \alpha_n - \alpha^*| < \alpha^* - \alpha_n$, which holds iff $1 < \beta < 2\alpha^*/\alpha_n - 1$.
For $\beta \in (1, \alpha^*/\alpha_n)$, we have $\beta \alpha_n < \alpha^*$, so the teacher lies strictly between the student and $\pi^*$.
For $\beta \in [\alpha^*/\alpha_n, 2\alpha^*/\alpha_n - 1)$, the teacher reaches or overshoots $\pi^*$ (with equality at $\beta = \alpha^*/\alpha_n$) but remains closer to it than the student is.
\end{proof}

\begin{corollary}[RISE contracts the suboptimality bound]
\label{cor:contraction}
Consider one RISE iteration: RLVR updates $\theta_n \to \theta_{n+1}'$ (advancing the trajectory coefficient $\alpha_n \to \alpha_{n+1}' < \alpha^*$), then OPD distills the extrapolated teacher $\pi_{\mathrm{future}}$ into $\theta_{n+1}'$ to yield $\theta_{n+1}$.
Under the assumptions of Proposition~\ref{prop:extrapolation-gap} (applied with $\pi_{\theta_{n+1}'}$ in the role of the current policy) and $L$-Lipschitz continuity of $J$ in $\varphi$-space, the Lipschitz suboptimality bound contracts:
\begin{equation}
    \underbrace{J(\pi^*) - J(\pi_{\theta_{n+1}})}_{\text{post-OPD}}
    \;\leq\; \underbrace{\gamma(\beta)}_{{<}\, 1}
    \cdot \underbrace{L(\alpha^* - \alpha_{n+1}')}_{\geq\; J(\pi^*) - J(\pi_{\theta_{n+1}'})\;\text{(pre-OPD)}}
    \;+\; T\sqrt{\bar{\mathcal{L}}_{\mathrm{OPD}}/2},
    \label{eq:contraction}
\end{equation}
where the contraction factor is governed by the teacher's distance to $\pi^*$,
\begin{equation}
    \gamma(\beta) = \frac{\bigl|\alpha^* - \beta\alpha_{n+1}'\bigr|}{\alpha^* - \alpha_{n+1}'},
    \label{eq:gamma}
\end{equation}
and $\gamma(\beta) < 1$ exactly when $\beta$ lies in the range $(1,\, 2\alpha^*/\alpha_{n+1}' - 1)$ established by Proposition~\ref{prop:extrapolation-gap}.
In the idealized limit $\bar{\mathcal{L}}_{\mathrm{OPD}} \to 0$ the residual term vanishes and the bound contracts by $\gamma(\beta) < 1$.
In practice, OPD runs for only a few gradient steps and deliberately does not converge to $\pi_{\mathrm{future}}$---it acts as a trust-region projection (\S\ref{sec:method}) that moves the policy \emph{toward} the extrapolated teacher while remaining anchored near $\pi_{\theta_{n+1}'}$.
The corollary's role is therefore directional: it identifies the extrapolated teacher as a descent direction for the bound rather than prescribing full convergence, which \S\ref{sec:exp-analysis} confirms is unnecessary and even harmful (the w/o-OPD variant that adopts $\pi_{\mathrm{future}}$ directly shows negligible gains).
\end{corollary}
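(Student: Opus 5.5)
The plan is to chain Theorem~\ref{thm:suboptimality} with the Lipschitz assumption and Proposition~\ref{prop:extrapolation-gap}. First I apply Theorem~\ref{thm:suboptimality} with student $\pi_{\theta_{n+1}}$ (the post-OPD policy) and teacher $\pi_T = \pi_{\mathrm{future}}$. This gives
\begin{equation*}
J(\pi^*) - J(\pi_{\theta_{n+1}}) \leq \bigl[J(\pi^*) - J(\pi_{\mathrm{future}})\bigr] + T\sqrt{\bar{\mathcal{L}}_{\mathrm{OPD}}/2},
\end{equation*}
where $\bar{\mathcal{L}}_{\mathrm{OPD}}$ is evaluated at the post-OPD student against $\pi_{\mathrm{future}}$. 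This already produces the residual term exactly as it appears in Eq.~\ref{eq:contraction}, so only the teacher gap remains to be controlled.

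Next I bound the teacher gap with $L$-Lipschitz continuity of $J$ in $\varphi$-space:
\begin{equation*}
J(\pi^*) - J(\pi_{\mathrm{future}}) \leq L\,\|\varphi(\pi^*) - \varphi(\pi_{\mathrm{future}})\|.
\end{equation*}
Under the linear-trajectory assumption, with $\pi_{\theta_{n+1}'}$ at coefficient $\alpha_{n+1}'$, the teacher sits at $\beta\alpha_{n+1}'\mathbf{d}$. Because $\mathbf{d}$ is a unit vector, this distance equals $|\alpha^* - \beta\alpha_{n+1}'|$. Multiplying and dividing by $\alpha^* - \alpha_{n+1}' > 0$ gives
\begin{equation*}
J(\pi^*) - J(\pi_{\mathrm{future}}) \leq \gamma(\beta)\cdot L(\alpha^* - \alpha_{n+1}').
\end{equation*}
The side claim that $L(\alpha^*-\alpha_{n+1}') \geq J(\pi^*) - J(\pi_{\theta_{n+1}'})$ follows from the same Lipschitz argument applied to $\pi_{\theta_{n+1}'}$.

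Finally, the inequality $\gamma(\beta)<1$ is the condition $|\alpha^* - \beta\alpha_{n+1}'| < \alpha^* - \alpha_{n+1}'$. This is exactly the inequality solved in the proof of Proposition~\ref{prop:extrapolation-gap}, with $\alpha_n$ replaced by $\alpha_{n+1}'$, so it holds iff $1<\beta<2\alpha^*/\alpha_{n+1}'-1$. The idealized-limit statement is then immediate by letting $\bar{\mathcal{L}}_{\mathrm{OPD}}\to 0$.

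The main obstacle is bookkeeping rather than any new estimate. The extrapolation in Proposition~\ref{prop:extrapolation-gap} is written relative to $\theta_0$, whereas RISE extrapolates relative to the anchor $\theta_n$. In that case the teacher sits at $\alpha_n + \beta(\alpha_{n+1}'-\alpha_n)$, not at $\beta\alpha_{n+1}'$. I would handle this by stating explicitly that the corollary adopts the $\theta_0$-anchored form of the proposition, which is the form $\gamma(\beta)$ encodes, and by remarking that the anchor-$\theta_n$ case gives the same structure with $\beta$ replaced by an effective scale $\beta_{\mathrm{eff}} = \bigl(\alpha_n + \beta(\alpha_{n+1}'-\alpha_n)\bigr)/\alpha_{n+1}'$. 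I would also confirm that the Lipschitz hypothesis is used only one-sidedly, and that Theorem~\ref{thm:suboptimality} places no requirement on how $\pi_{\theta_{n+1}}$ was obtained.
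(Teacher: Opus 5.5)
Your proposal is correct and follows the paper's proof essentially step for step: Lipschitz continuity bounds the pre-OPD gap by $L(\alpha^*-\alpha_{n+1}')$ and the teacher gap by $L|\alpha^*-\beta\alpha_{n+1}'| = \gamma(\beta)L(\alpha^*-\alpha_{n+1}')$. Theorem~\ref{thm:suboptimality}, applied with teacher $\pi_{\mathrm{future}}$ and student $\pi_{\theta_{n+1}}$, supplies the residual, and the range for $\gamma(\beta)<1$ is read off from Proposition~\ref{prop:extrapolation-gap}. Your anchor remark is a fair sharpening: the paper silently places the teacher at $\beta\alpha_{n+1}'\mathbf{d}$ (the $\theta_0$-anchored form), and your $\beta_{\mathrm{eff}}$ correctly captures what changes for the $\theta_n$-anchored teacher that RISE actually uses.
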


\begin{proof}
Before OPD, the post-RLVR student $\pi_{\theta_{n+1}'}$ lies at $\varphi$-distance $\alpha^* - \alpha_{n+1}'$ from $\pi^*$, so $L$-Lipschitz continuity of $J$ in $\varphi$-space gives
\begin{equation}
    J(\pi^*) - J(\pi_{\theta_{n+1}'}) \leq L(\alpha^* - \alpha_{n+1}').
\end{equation}

The extrapolated teacher sits at $\varphi$-distance $|\alpha^* - \beta\alpha_{n+1}'|$ from $\pi^*$.
Applying $L$-Lipschitz continuity again and factoring out the pre-OPD bound:
\begin{equation}
    J(\pi^*) - J(\pi_{\mathrm{future}}) \leq L\bigl|\alpha^* - \beta\alpha_{n+1}'\bigr|
    = \frac{\bigl|\alpha^* - \beta\alpha_{n+1}'\bigr|}{\alpha^* - \alpha_{n+1}'} \cdot L(\alpha^* - \alpha_{n+1}')
    = \gamma(\beta) \cdot L(\alpha^* - \alpha_{n+1}').
\end{equation}
Lipschitz continuity therefore constrains $J(\pi_{\mathrm{future}})$ through the teacher's distance to $\pi^*$ alone, irrespective of which side of $\pi^*$ the teacher falls on; this is what makes Eq.~\ref{eq:gamma} valid for overshooting as well as non-overshooting $\beta$.

We then apply Theorem~\ref{thm:suboptimality} with teacher $\pi_T = \pi_{\mathrm{future}}$ and student $\pi_\theta = \pi_{\theta_{n+1}}$, the checkpoint produced by OPD.
Substituting the teacher gap bounded above gives
\begin{equation}
    J(\pi^*) - J(\pi_{\theta_{n+1}})
    \;\leq\; \underbrace{J(\pi^*) - J(\pi_{\mathrm{future}})}_{\leq\, \gamma(\beta) L(\alpha^* - \alpha_{n+1}')}
    \;+\; T\sqrt{\bar{\mathcal{L}}_{\mathrm{OPD}}/2},
\end{equation}
which is Eq.~\ref{eq:contraction}.
Finally, $\gamma(\beta) < 1$ holds iff $|\alpha^* - \beta\alpha_{n+1}'| < \alpha^* - \alpha_{n+1}'$, which by Proposition~\ref{prop:extrapolation-gap} is exactly the condition $1 < \beta < 2\alpha^*/\alpha_{n+1}' - 1$.
Therefore, once the distillation error vanishes, the post-OPD bound is $\gamma(\beta)$ times the pre-OPD bound, hence strictly smaller.
\end{proof}

\begin{remark}[Limitations of the linear assumption]
\label{rem:linear-limitation}
Proposition~\ref{prop:extrapolation-gap} and Corollary~\ref{cor:contraction} assume a perfectly linear trajectory in $\varphi$-space.
In practice, linearity holds only approximately and locally: empirical studies of task vectors and linear mode connectivity~\citep{ilharco2022editing,frankle2020linear} show that interpolation and modest extrapolation preserve performance, but large extrapolation factors $\beta$ can exit the linear regime and degrade the teacher quality.
The bounds above should therefore be understood as characterizing the \emph{mechanism} by which extrapolation helps---a teacher closer to $\pi^*$ yields a tighter suboptimality bound---rather than as providing exact convergence rates.
In our experiments, we find that moderate values of $\beta$ consistently improve over self-distillation, consistent with this local-linearity picture.
\end{remark}

\subsection{How Low-Dimensional Are Our Trajectories?}
\label{app:linearity}

We quantify the gap between the linear assumption and the actual training dynamics, refining the low-rank picture reported in prior work~\citep{cai2025predictability,wang2026not} with a direct measurement on our own runs. For GRPO checkpoints of Qwen3-1.7B and Qwen3-8B on DAPOMath, we form the displacement $\delta_t = \theta_t - \theta_0$ at each saved step and compute the Gram matrix $\langle \delta_i, \delta_j \rangle$ over all parameter tensors; its eigenspectrum gives the principal components of the trajectory (Fig.~\ref{fig:trajectory-pca}). An exactly linear trajectory would be rank one, placing all variance in a single component. The leading component accounts for $68.2\%$ at \emph{both} scales, and the top three for $86.6\%$ and $88.5\%$ respectively. That the two agree to within $0.1\%$ on the leading component despite a $5\times$ difference in parameter count suggests this is a property of the training dynamics rather than of a particular model.

\begin{figure}[h]
    \centering
    \includegraphics[width=0.52\textwidth]{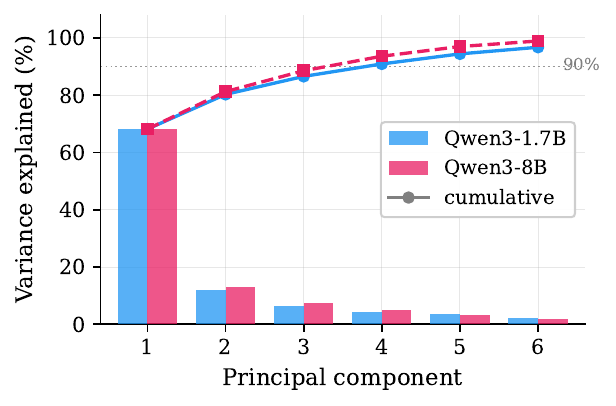}
    \vspace{-6pt}
    \caption{\looseness=-1 \textbf{The RLVR weight trajectory is low-dimensional.} Principal spectrum of the displacements $\delta_t = \theta_t - \theta_0$ from GRPO checkpoints on DAPOMath. Bars show variance explained per component; lines show the cumulative total. A perfectly linear trajectory would place $100\%$ on the first component; the observed $68.2\%$ reflects curvature along the path, while three components capture ${\sim}87\%$ at both scales, confining the trajectory to a low-dimensional subspace.}
    \label{fig:trajectory-pca}
    \vspace{-8pt}
\end{figure}

A single direction therefore does not capture the trajectory exactly: the residual $31.8\%$ lies orthogonal to the leading component. What the measurement does show is that the trajectory is confined to a genuinely low-dimensional subspace: three components capture ${\sim}87\%$ of the variance at both scales, out of a parameter space with billions of degrees of freedom. This confinement bounds the degree of non-linearity---within a rank-3 subspace the leading direction accounts for ${\sim}78\%$ of the subspace variance, limiting the magnitude of orthogonal curvature. For RISE's operating range ($\beta \leq 1.2$, i.e.\ extending $20\%$ beyond the current displacement), the error introduced by the linear approximation is therefore controlled by the small residual variance in the orthogonal components.

Fig.~\ref{fig:extrap-sweep} provides empirical confirmation: extrapolation gains persist out to $\beta \approx 1.3$ and the safe range contracts as the policy converges, consistent with Proposition~\ref{prop:extrapolation-gap}'s prediction even though its linear premise holds only approximately.

\subsection{Top-\texorpdfstring{$K$}{K} Approximation Bias}
\label{app:topk-bias}

As described in \S\ref{sec:method-design}, we approximate the logit-space extrapolated teacher by selecting $S = \text{Top}_K(\pi_{\theta_{n+1}'})$ and projecting the post-RLVR checkpoint $\pi_{\theta_{n+1}'}$ and the anchor $\pi_{\theta_n}$ onto the same $(K\!+\!1)$-simplex before applying the extrapolation (Eq.~\ref{eq:topk-extrapolation}).
This differs from the unbiased procedure, which would extrapolate over the full vocabulary \emph{first} and then select
$S^* = \text{Top}_K(\pi_{\text{future}})$.
The approximation error is captured by the set difference $S \triangle S^*$: tokens outside $\pi_{\theta_{n+1}'}$'s top-$K$ but with a large log-ratio $\log \pi_{\theta_{n+1}'} / \pi_{\theta_n}$ can be promoted into $S^*$ by extrapolation, yet are absent from $S$.

In practice this discrepancy is negligible.
With $K\!=\!100$, the top-$K$ tokens of $\pi_{\theta_{n+1}'}$ account for $>\!99\%$ of the probability mass under typical autoregressive LLM distributions, leaving a residual tail mass $\varepsilon = 1 - \sum_{v \in S}\pi_{\theta_{n+1}'}(v) < 0.01$.
This does not formally bound $|S \triangle S^*|$, since extrapolation with $\beta > 1$ can in principle amplify tail tokens with large log-ratios.
However, for the moderate $\beta$ values used in our experiments, almost all of $\pi_{\text{future}}$'s mass concentrates on tokens already in $S$, so the induced bias is negligible in practice.
For weight-space extrapolation, no such approximation arises: the forward pass through $\theta_{\text{future}}$ directly produces $\pi_{\text{future}}$, and top-$K$ is applied to the exact extrapolated distribution.

\section{Experiment Setup}
\label{sec:app-exp-setup}

This appendix supplements the setup summary in \S\ref{sec:experiments} with full training hyperparameters (\S\ref{sec:app-train-setup}), evaluation protocol details (\S\ref{sec:app-eval-setup}), and a description of the OPSD baselines and their shared privileged-teacher construction (\S\ref{sec:app-baseline}).

\subsection{Training Setup}
\label{sec:app-train-setup}

All runs use the VeRL framework on a single node with 8 GPUs, training for one epoch over the respective training set.
We use AdamW with a constant learning rate and no warmup, GRPO advantage estimation without standard-deviation normalization, and token-level importance-sampling correction (clip threshold $2.0$).
Prompts are truncated to $2{,}048$ tokens and responses to $8{,}192$ tokens.
All methods---GRPO, the three OPSD baselines, and both RISE variants---share the hyperparameters in Table~\ref{tab:app-train-hparams} within each configuration, so performance differences are attributable to the teacher construction rather than training dynamics.

\begin{table}[h]
\centering
\small
\setlength{\tabcolsep}{4pt}
\caption{Training hyperparameters. $n$ denotes the number of rollouts sampled per prompt. \emph{Train batch size} is the number of prompts sampled per iteration for rollout generation; \emph{mini-batch size} is the number of prompts per gradient update within each iteration (the train batch is split into $\text{train batch size}/\text{mini-batch size}$ gradient steps). The OPD phase uses the same mini-batch size as the RLVR phase. Qwen2.5-3B agentic runs follow GIGPO's setup~\citep{feng2025group}; see their paper for full details.}
\label{tab:app-train-hparams}
\scriptsize
\setlength{\tabcolsep}{3pt}
\begin{tabular}{lcccccc}
\toprule
& Qwen3-1.7B & Qwen3-1.7B-Base & Qwen3-8B & Qwen3-4B-Base & Qwen3-8B-Base & OLMo3-7B-Instruct-SFT \\
& (DAPOMath) & (DAPOMath) & (DAPOMath) & (multi-domain) & (code) & (OpenR1-Math) \\
\midrule
Learning rate      & 1e-6 & 1e-6 & 1e-6 & 1e-6 & 1e-6 & 1e-6 \\
Train batch size   & 64   & 64   & 64   & 64   & 64   & 128  \\
Mini-batch size    & 32   & 32   & 32   & 32   & 32   & 64   \\
Rollouts $n$       & 8    & 8    & 8    & 8    & 8    & 8    \\
Epochs             & 1    & 1    & 1    & 1    & 1    & 1    \\
Rollout temperature & 1.0 & 1.0 & 1.0 & 1.0 & 1.0 & 1.0 \\
KL coefficient     & 0    & 0    & 0    & 0    & 0    & 0    \\
\midrule
RISE $\beta_0$     & 1.2  & 1.2  & 1.2  & 1.2  & 1.2  & 1.2  \\
RISE $\beta_N$     & 1.0  & 1.0  & 1.0  & 1.0  & 1.0  & 1.0  \\
RISE $\eta$        & 0.1  & 0.1  & 0.1  & 0.1  & 0.1  & 1.0  \\
RISE $K$           & 100  & 100  & 100  & 100  & 20   & 100  \\
\bottomrule
\end{tabular}
\end{table}

\subsection{Evaluation Setup}
\label{sec:app-eval-setup}

\textbf{Benchmarks.~}
Mathematical reasoning is evaluated on MATH-500~\citep{lightman2024let}, AIME 2024/2025~\citep{aime_2024,aime_2025}, AMC 2023~\citep{amc2023}, Minerva~\citep{lewkowycz2022solving}, and OlympiadBench~\citep{he2024olympiadbench}, with GPQA-Diamond~\citep{rein2023gpqa}, IFEval~\citep{zhou2023instruction}, and MMLU-Pro~\citep{wang2024mmlu} as out-of-distribution checks. The multi-domain suite adds SuperGPQA~\citep{du2026supergpqa} and TheoremQA~\citep{chen2023theoremqa}. Code generation is evaluated on HumanEval+, MBPP+~\citep{evalplus}, and LiveCodeBench v6~\citep{jain2025livecodebench}.
Agentic tasks use ALFWorld~\citep{shridhar2020alfworld} (success rate: fraction of episodes completed) and WebShop~\citep{yao2022webshop}, which reports two metrics: \emph{Score} ($100\times$ average reward, where reward is a product of attribute recall, option selection, price satisfaction, and category match) and \emph{Acc} (success rate: fraction of episodes achieving a perfect reward of 1.0).

\textbf{Sampling protocol.~}
Evaluation samples are drawn stochastically with temperature $1.0$, $\text{top-}p = 1.0$, and $\text{top-}k = -1$ (i.e., unrestricted), matching the rollout distribution used during training.
The number of samples per prompt is set per benchmark according to its size and variance: $16$ for AIME 2024/2025, GPQA-Diamond, and Minerva, $8$ for OlympiadBench, and $4$ for MATH-500 and AMC 2023.
Deterministic benchmarks that admit a single correct completion---IFEval, MMLU-Pro, SuperGPQA, and TheoremQA---use a single sample.
All Qwen3 models are evaluated in non-thinking mode (thinking is disabled).
For code generation we use $4$ samples on HumanEval+, MBPP+, and LiveCodeBench.

We report two complementary metrics.
\emph{avg@$N$} is the mean accuracy across $N$ sampled completions, measuring expected single-sample quality.
\emph{pass@$N$} estimates the probability that at least one of $N$ independent draws is correct: for each problem we draw 1{,}000 bootstrap subsets of size $N$ (with replacement) from the $N$ sampled completions, record whether each subset contains a correct solution, and average; this per-problem estimate is then averaged across prompts.
Reporting both separates per-sample reliability from exploration breadth: a method can improve avg@$N$ while leaving pass@$N$ unchanged (sharpening the distribution) or improve both (genuinely expanding coverage).

\subsection{Baselines}
\label{sec:app-baseline}

All three OPSD baselines---GRPO+SDPO~\citep{hubotter2026reinforcement}, SDAR~\citep{lu2026self}, and RLSD~\citep{yang2026self}---share the same \emph{privileged teacher} and differ only in how its signal enters the update.

\textbf{Privileged teacher construction.~}
For each prompt, the model generates $n$ rollouts. If the group contains at least one correct solution, the teacher is the model re-prompted with that correct solution as privileged context, and its token distribution serves as the distillation target for all responses to that prompt. If no rollout in the group succeeds, the prompt contributes only the RLVR loss---no distillation signal is available.
The teacher is maintained as an exponential moving average of the student (rate $0.05$), following the recommended hyperparameters in \citet{hubotter2026reinforcement}.
Averaged over training, the teacher is active on ${\sim}85\%$ of prompts for Qwen3-8B and ${\sim}70\%$ for Qwen3-1.7B, so the baselines' limited gains are not attributable to low coverage.

\textbf{Integration strategies.~}
GRPO+SDPO adds a KL divergence toward the teacher's token distribution as an auxiliary loss alongside the policy gradient.
SDAR uses the teacher only as a \emph{gate}: the teacher--student probability gap scales the GRPO advantage per token, upweighting positions where the teacher is more confident than the student.
RLSD similarly uses the teacher's per-token signal to reweight advantage estimates, providing finer-grained credit assignment than a single sequence-level reward.

\textbf{Contrast with RISE.~}
All three baselines require privileged information---a correct solution the student did not produce---so their teacher is available only on problems where such a solution exists in the rollout group. RISE derives its teacher from the training trajectory itself and therefore applies uniformly to every prompt, regardless of whether any rollout succeeded.

\section{Experiment Results}
\label{sec:app-exp-results}

This appendix provides supplementary results for the experiments in \S\ref{sec:experiments}: Qwen3-1.7B-Base results (\S\ref{sec:app-1.7b-results}), per-metric convergence curves (\S\ref{sec:app-convergence-curves}), code generation convergence (\S\ref{sec:app-code}), grounding and OPD ablation details (\S\ref{sec:app-grounding-opd}), extrapolation strength and schedule ablations (\S\ref{sec:app-beta-schedule}--\S\ref{sec:app-anchor}), a compute-matched comparison (\S\ref{sec:app-compute-matched}), and a multi-seed reproducibility analysis (\S\ref{sec:app-seed-variance}).

\subsection{Qwen3-1.7B-Base Results}
\label{sec:app-1.7b-results}

Table~\ref{tab:app-1.7b} reports results for Qwen3-1.7B-Base on DAPOMath. RISE (logit) achieves the best in-domain average (29.2) and competitive OOD performance, while RISE (weight) leads on MATH-500 (62.4) and Minerva (20.7). The margin over GRPO is the narrowest of any configuration ($+1.2$ Math Avg), consistent with the capability ceiling of a base model that has not been instruction-tuned.

\begin{table}[h]
\centering
\caption{\textbf{Qwen3-1.7B-Base (DAPOMath).} Accuracy (\%) on math and OOD benchmarks. Best in \textbf{bold}, second best \underline{underlined}.}
\label{tab:app-1.7b}
\vspace{2pt}
\small
\setlength{\tabcolsep}{2.5pt}
\begin{tabular}{l|cccccc>{\columncolor{yellow!15}}c|ccc>{\columncolor{yellow!15}}c}
\toprule
\multirow{2}{*}{\textbf{Method}} & \multicolumn{7}{c|}{\textbf{In-Domain Math}} & \multicolumn{4}{c}{\textbf{OOD}} \\
& {\scriptsize MATH500} & {\scriptsize AIME24} & {\scriptsize AIME25} & {\scriptsize AMC23} & {\scriptsize Minerva} & {\scriptsize OlyBench} & {\scriptsize Avg.} & {\scriptsize GPQA} & {\scriptsize IFEval} & {\scriptsize MMLU} & {\scriptsize Avg.} \\
\midrule
Base & 40.0 & 4.4 & 2.9 & 27.5 & 11.6 & 19.8 & 17.7 & 12.7 & 14.4 & 8.0 & 11.7 \\
GRPO & 60.6 & 9.4 & 6.9 & 42.5 & 19.3 & 29.2 & 28.0 & \underline{26.5} & \underline{31.1} & 39.6 & 32.4 \\
GRPO+SDPO & \underline{61.3} & 8.3 & 6.7 & 41.9 & 20.1 & \textbf{32.3} & \underline{28.4} & \textbf{28.7} & 30.5 & 40.5 & \textbf{33.2} \\
SDAR & 55.6 & 9.6 & \textbf{8.3} & \underline{43.1} & 17.7 & 27.1 & 26.9 & 23.2 & 28.5 & \underline{42.5} & 31.4 \\
RLSD & 57.2 & \underline{11.9} & 7.5 & \underline{43.1} & 17.8 & 28.2 & 27.6 & 23.0 & 29.0 & 42.4 & 31.5 \\
\textbf{RISE (logit)} & 59.6 & \textbf{12.1} & \underline{8.1} & \textbf{45.6} & \underline{20.2} & 29.4 & \textbf{29.2} & 25.4 & \textbf{31.1} & \textbf{42.9} & \underline{33.1} \\
\textbf{RISE (weight)} & \textbf{62.4} & 10.4 & 5.8 & 38.8 & \textbf{20.7} & \underline{30.8} & 28.1 & 25.7 & 29.9 & 41.0 & 32.2 \\
\bottomrule
\end{tabular}
\end{table}

\subsection{Per-Metric Convergence Curves}
\label{sec:app-convergence-curves}

Figure~\ref{fig:app-conv-all} shows per-metric training curves for all model configurations discussed in \S\ref{sec:exp-results}.

\begin{figure}[ht]
\centering

\begin{subfigure}[t]{0.24\textwidth}
    \includegraphics[width=\linewidth]{figures/1.7b/qwen3_1.7b_aime24_avg16.pdf}
\end{subfigure}
\hfill
\begin{subfigure}[t]{0.24\textwidth}
    \includegraphics[width=\linewidth]{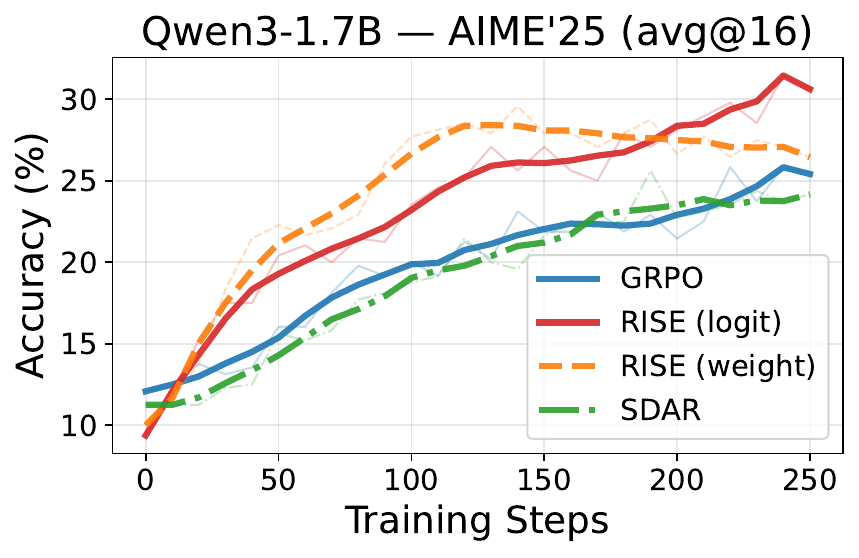}
\end{subfigure}
\hfill
\begin{subfigure}[t]{0.24\textwidth}
    \includegraphics[width=\linewidth]{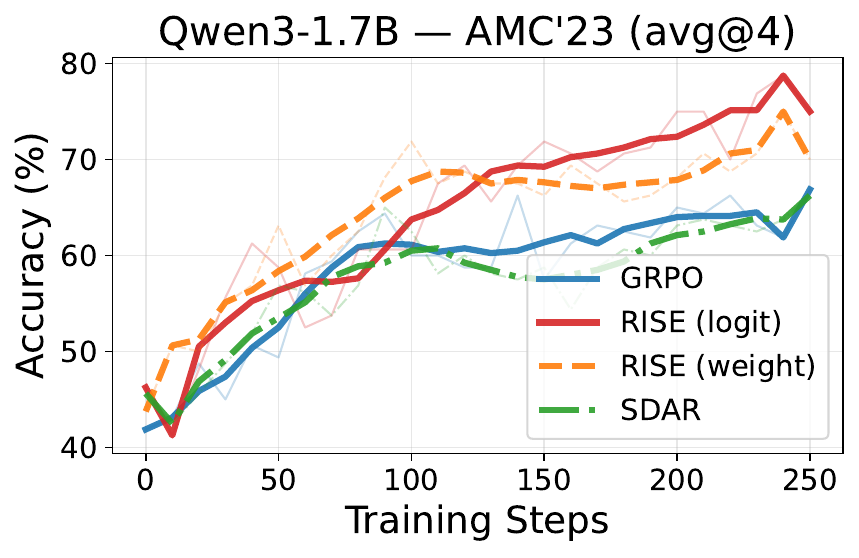}
\end{subfigure}
\hfill
\begin{subfigure}[t]{0.24\textwidth}
    \includegraphics[width=\linewidth]{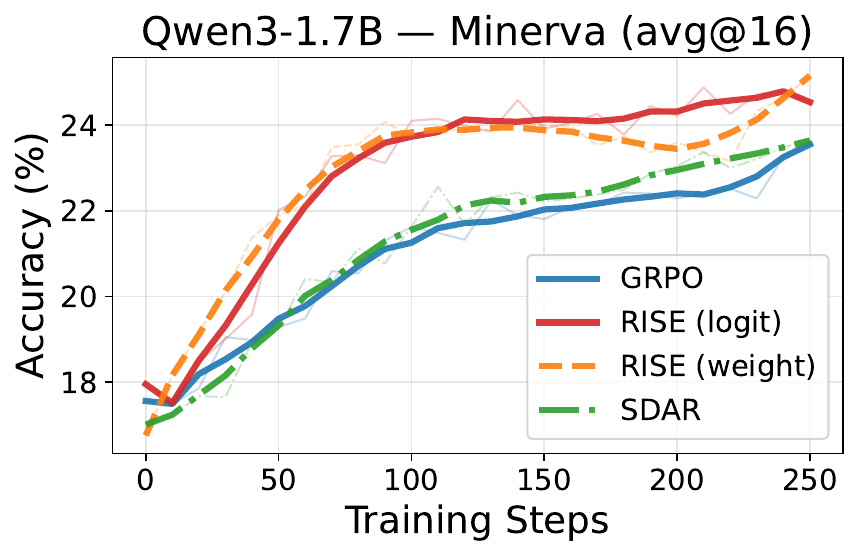}
\end{subfigure}

\vspace{2pt}
{\small (a) Qwen3-1.7B (DAPOMath)}
\label{fig:app-conv-1.7b}

\vspace{8pt}

\begin{subfigure}[t]{0.24\textwidth}
    \includegraphics[width=\linewidth]{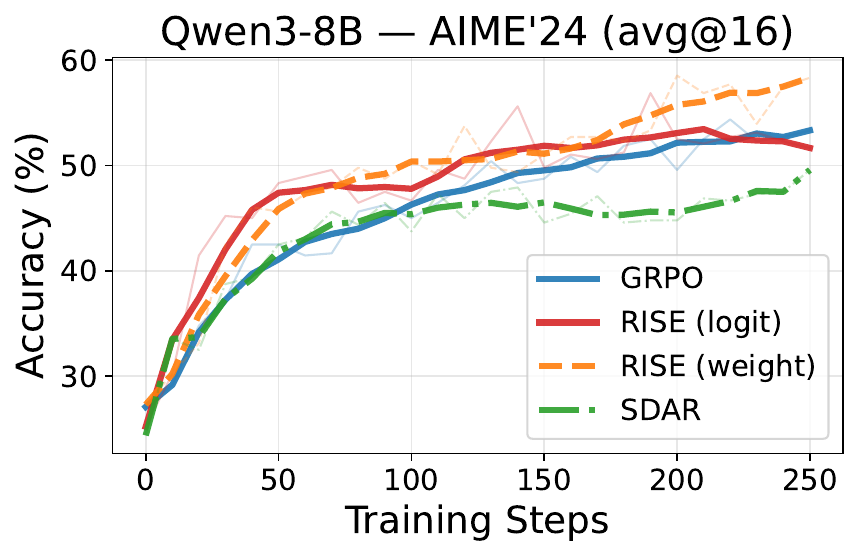}
\end{subfigure}
\hfill
\begin{subfigure}[t]{0.24\textwidth}
    \includegraphics[width=\linewidth]{figures/8b/qwen3_8b_aime25_avg16.pdf}
\end{subfigure}
\hfill
\begin{subfigure}[t]{0.24\textwidth}
    \includegraphics[width=\linewidth]{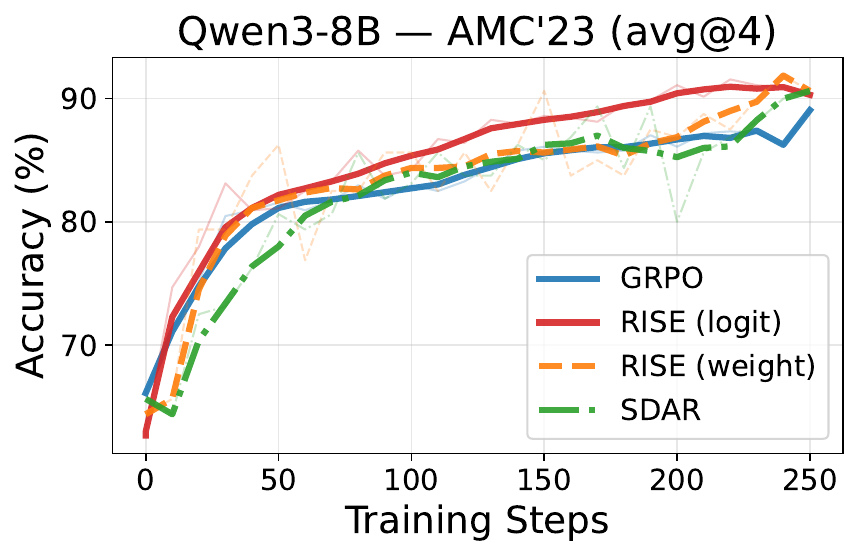}
\end{subfigure}
\hfill
\begin{subfigure}[t]{0.24\textwidth}
    \includegraphics[width=\linewidth]{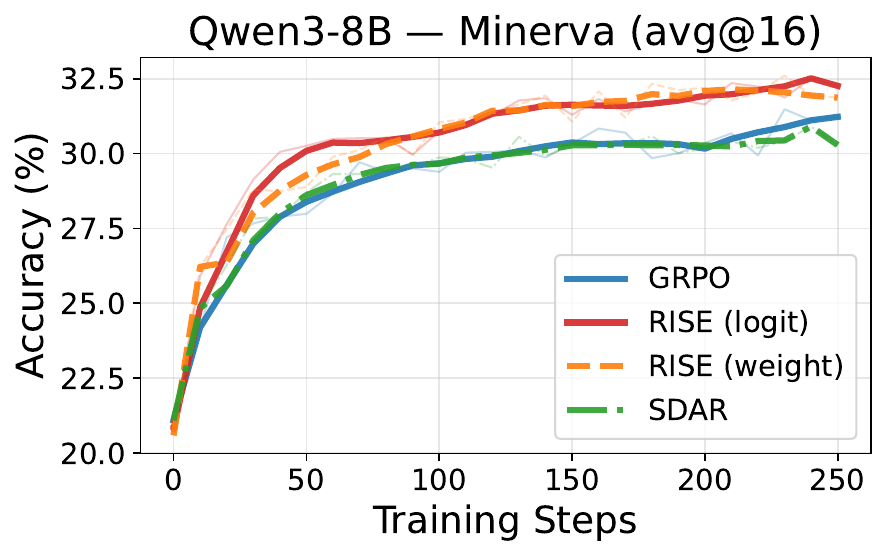}
\end{subfigure}

\vspace{2pt}
{\small (b) Qwen3-8B (DAPOMath)}
\label{fig:app-conv-8b}

\vspace{8pt}

\begin{subfigure}[t]{0.24\textwidth}
    \includegraphics[width=\linewidth]{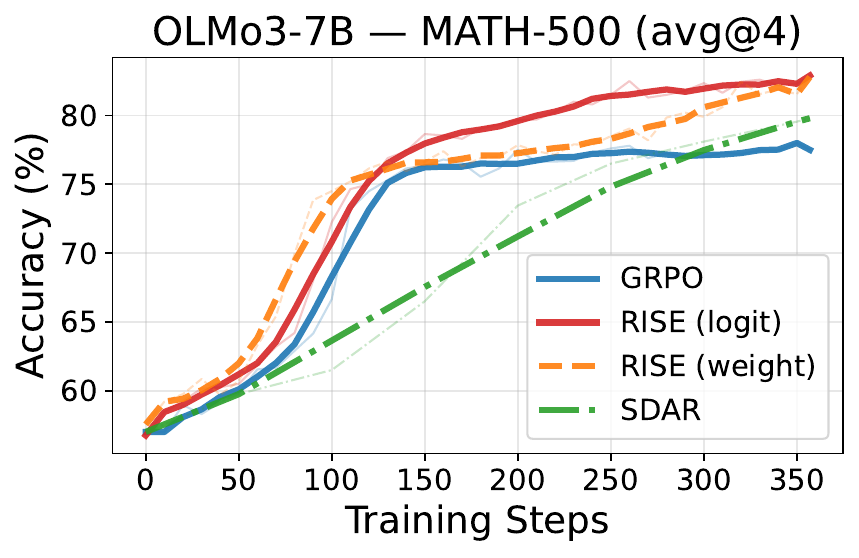}
\end{subfigure}
\hfill
\begin{subfigure}[t]{0.24\textwidth}
    \includegraphics[width=\linewidth]{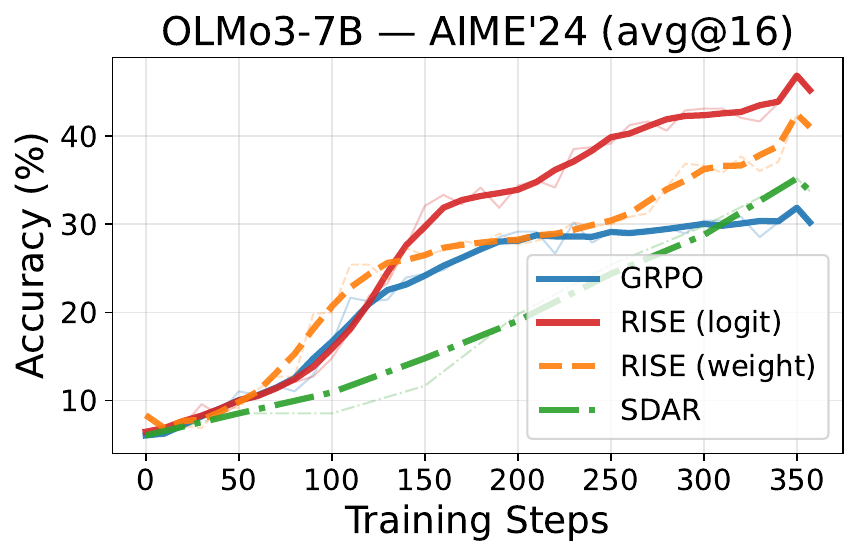}
\end{subfigure}
\hfill
\begin{subfigure}[t]{0.24\textwidth}
    \includegraphics[width=\linewidth]{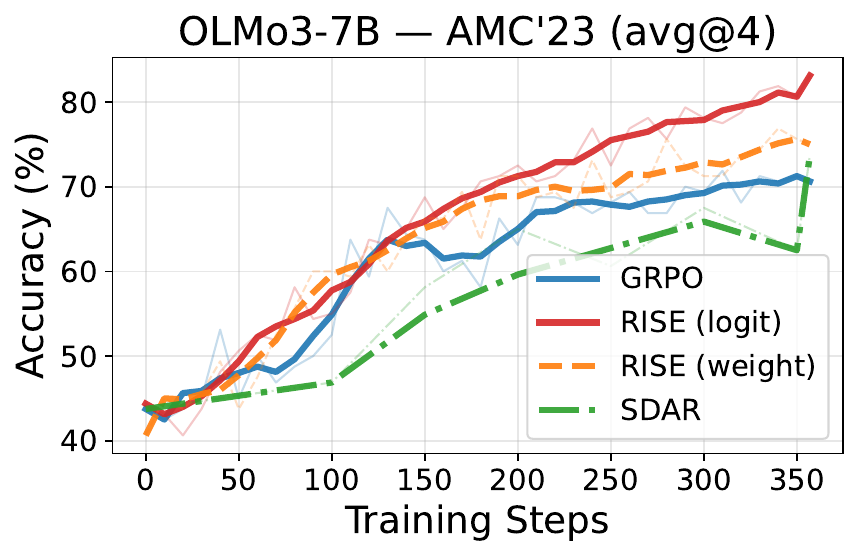}
\end{subfigure}
\hfill
\begin{subfigure}[t]{0.24\textwidth}
    \includegraphics[width=\linewidth]{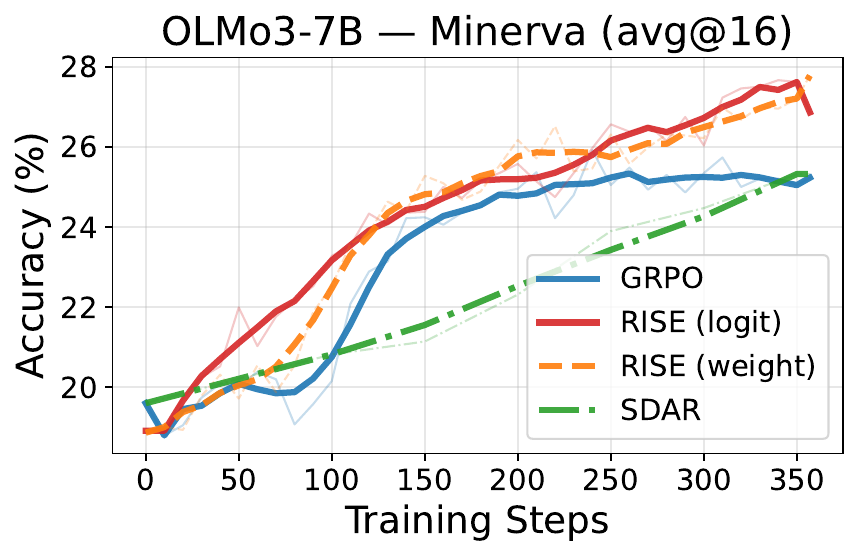}
\end{subfigure}

\vspace{2pt}
{\small (c) OLMo3-7B-Instruct-SFT (OpenR1Math)}
\label{fig:app-conv-olmo7b}

\vspace{8pt}

\begin{subfigure}[t]{0.235\textwidth}
    \includegraphics[width=\linewidth]{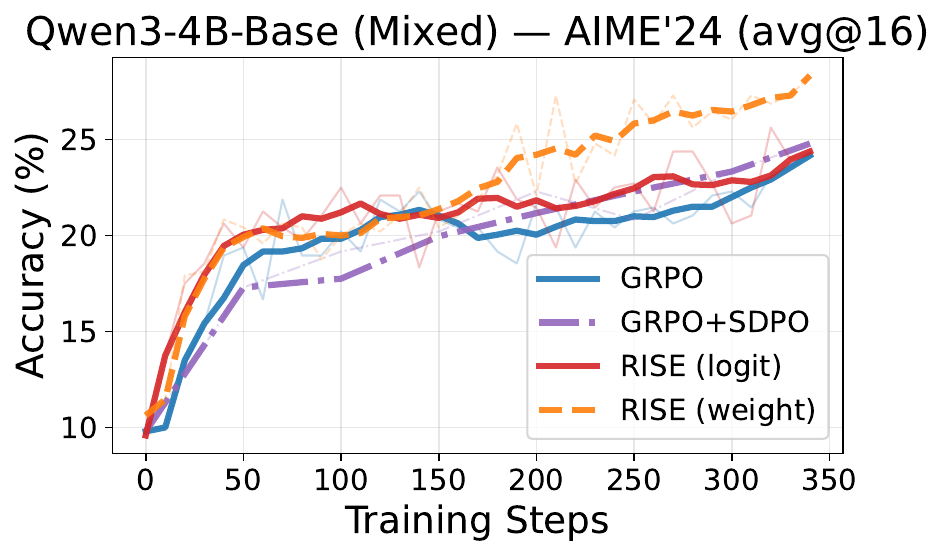}
\end{subfigure}
\hfill
\begin{subfigure}[t]{0.235\textwidth}
    \includegraphics[width=\linewidth]{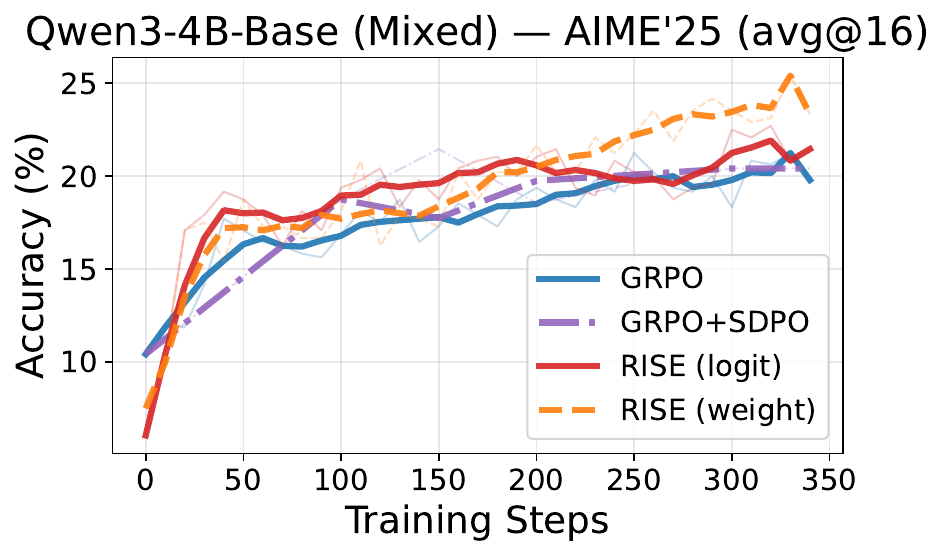}
\end{subfigure}
\hfill
\begin{subfigure}[t]{0.235\textwidth}
    \includegraphics[width=\linewidth]{figures/4b_mixed/qwen3_4b_mixed_amc23_avg4.pdf}
\end{subfigure}
\hfill
\begin{subfigure}[t]{0.275\textwidth}
    \includegraphics[width=\linewidth]{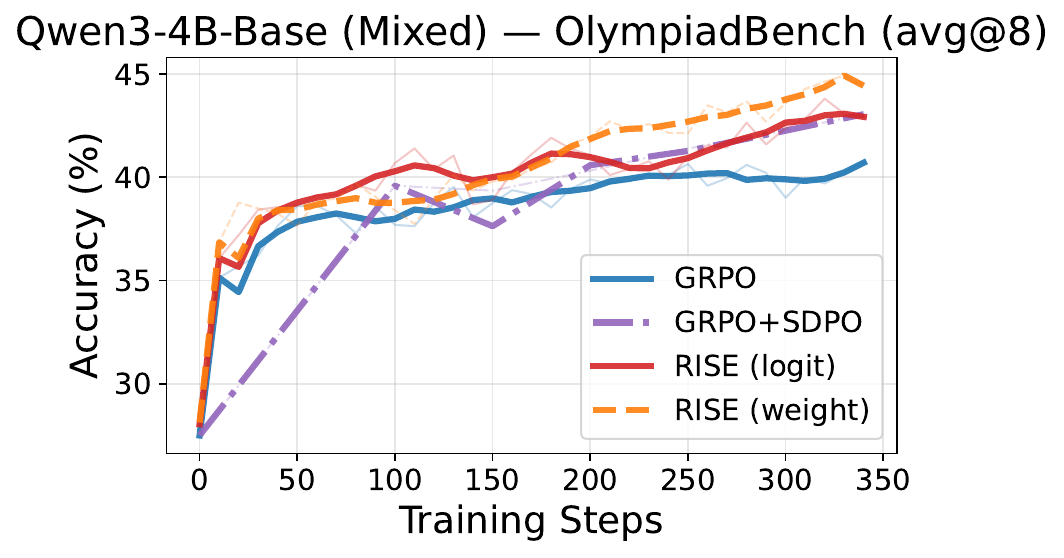}
\end{subfigure}

\vspace{2pt}
{\small (d) Qwen3-4B-Base (mixed math + STEM)}
\label{fig:app-conv-4b-mixed}

\caption{\textbf{Per-metric convergence curves} for all model configurations. Each row shows four representative benchmarks. RISE (both variants) consistently reaches higher accuracy in fewer training steps.}
\label{fig:app-conv-all}
\end{figure}

\subsection{Code Generation Convergence Curves}
\label{sec:app-code}

Figure~\ref{fig:app-code} shows avg@4 and pass@4 convergence curves for Qwen3-8B-Base trained on Skywork-OR1-Code, evaluated on HumanEval+, MBPP+, and LiveCodeBench. Both RISE variants converge faster than GRPO in early training across all three benchmarks.

\begin{figure}[h]
\centering
\begin{subfigure}[t]{0.32\linewidth}
    \centering
    \includegraphics[width=\linewidth]{figures/code_8b/qwen3_8b_code_humanevalplus_avg4.pdf}
    \caption{HumanEval+ (avg@4)}
\end{subfigure}
\hfill
\begin{subfigure}[t]{0.3\linewidth}
    \centering
    \includegraphics[width=\linewidth]{figures/code_8b/qwen3_8b_code_mbppplus_avg4.pdf}
    \caption{MBPP+ (avg@4)}
\end{subfigure}
\hfill
\begin{subfigure}[t]{0.34\linewidth}
    \centering
    \includegraphics[width=\linewidth]{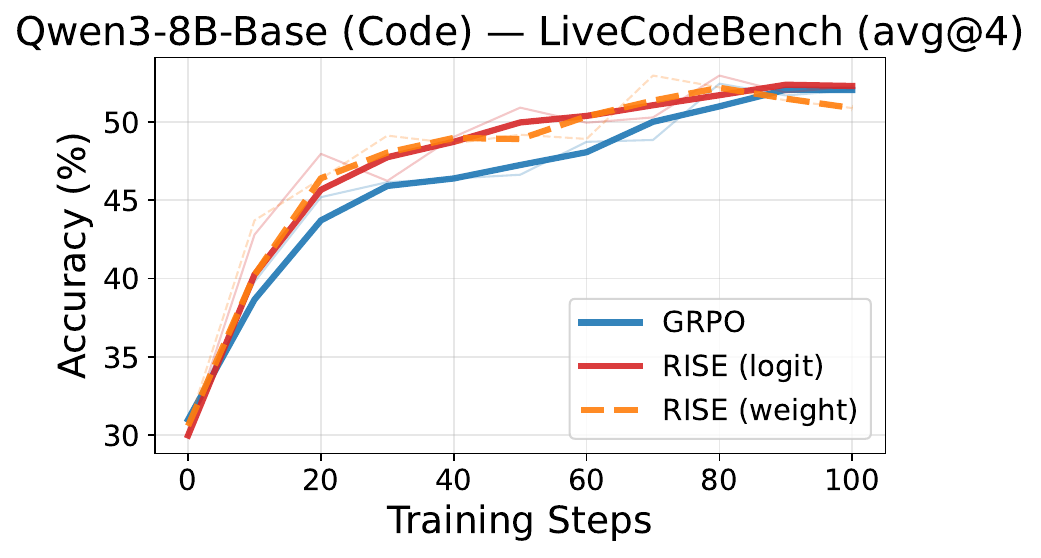}
    \caption{LiveCodeBench (avg@4)}
\end{subfigure}

\vspace{6pt}

\begin{subfigure}[t]{0.32\linewidth}
    \centering
    \includegraphics[width=\linewidth]{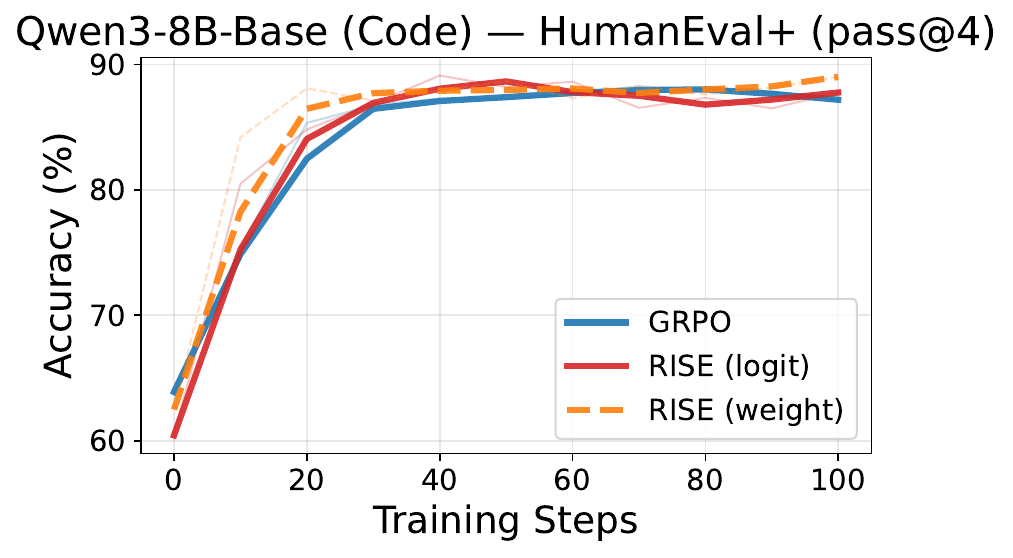}
    \caption{HumanEval+ (pass@4)}
\end{subfigure}
\hfill
\begin{subfigure}[t]{0.3\linewidth}
    \centering
    \includegraphics[width=\linewidth]{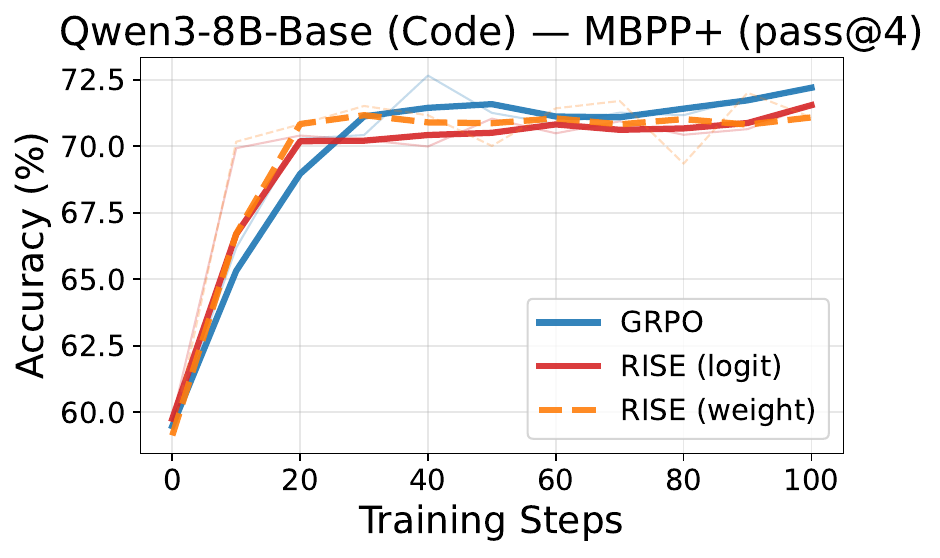}
    \caption{MBPP+ (pass@4)}
\end{subfigure}
\hfill
\begin{subfigure}[t]{0.34\linewidth}
    \centering
    \includegraphics[width=\linewidth]{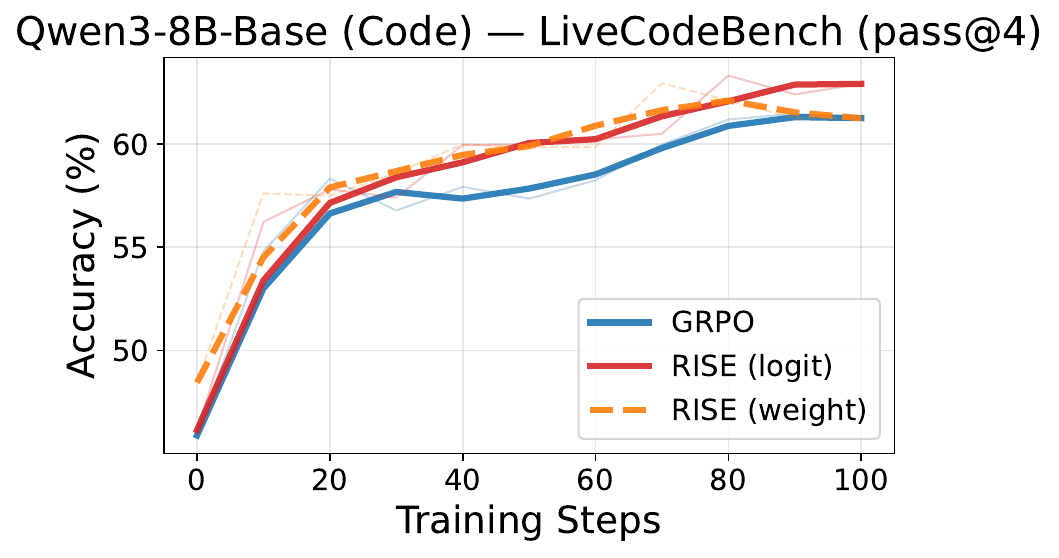}
    \caption{LiveCodeBench (pass@4)}
\end{subfigure}
\caption{\textbf{Code generation convergence curves (Qwen3-8B-Base).} Both RISE variants consistently converge faster than GRPO across all three benchmarks under both avg@4 and pass@4 metrics.}
\label{fig:app-code}
\end{figure}

\subsection{Average vs.\ Best-of-\texorpdfstring{$N$}{N} Accuracy}
\label{sec:app-avg-vs-pass}

Table~\ref{tab:avg-vs-pass} reports avg@16 and pass@16 on the AIME benchmarks, isolating whether RISE's gains come from sharpening the policy around solutions it already finds or from expanding the set of solvable problems. If distillation merely concentrated probability mass on existing solutions, avg@16 would improve while pass@16 stagnated or declined. Instead, RISE improves both, and at 1.7B the pass@16 gains exceed the avg@16 gains on AIME'24 ($+9.6$ vs.\ $+6.3$ for RISE (logit)), indicating genuinely broadened coverage. At 8B the coverage margins are smaller, consistent with a stronger base policy leaving less headroom; the one regression is RISE (logit) on AIME'25 pass@16 ($-0.6$), where the weight-space variant instead gains $+3.9$.

\begin{table}[h]
\centering
\caption{\textbf{Average vs.\ best-of-16 accuracy (\%) on AIME benchmarks.} $\Delta$ denotes improvement over GRPO. RISE improves pass@16 (coverage) alongside avg@16 (mean quality), with the largest coverage gains at 1.7B.}
\label{tab:avg-vs-pass}
\vspace{2pt}
\small
\setlength{\tabcolsep}{3.5pt}
\begin{tabular}{l|l|cc|cc}
\toprule
& & \multicolumn{2}{c|}{\textbf{AIME'24}} & \multicolumn{2}{c}{\textbf{AIME'25}} \\
\textbf{Model} & \textbf{Method} & avg@16 & pass@16 & avg@16 & pass@16 \\
\midrule
\multirow{3}{*}{Qwen3-8B}
& GRPO & 54.4 & 78.7 & 42.9 & 63.3 \\
& RISE (logit) & 56.9 \dt{+2.5} & 79.0 \dt{+0.3} & 46.7 \dt{+3.8} & 62.7 \dt{-0.6} \\
& RISE (weight) & 58.1 \dt{+3.7} & 81.5 \dt{+2.8} & 45.8 \dt{+2.9} & 67.2 \dt{+3.9} \\
\midrule
\multirow{3}{*}{Qwen3-1.7B}
& GRPO & 30.0 & 54.4 & 26.3 & 45.8 \\
& RISE (logit) & 36.3 \dt{+6.3} & 64.0 \dt{+9.6} & 33.3 \dt{+7.0} & 53.9 \dt{+8.1} \\
& RISE (weight) & 32.9 \dt{+2.9} & 57.0 \dt{+2.6} & 31.3 \dt{+5.0} & 49.2 \dt{+3.4} \\
\bottomrule
\end{tabular}
\end{table}

\subsection{Grounding and OPD Ablation}
\label{sec:app-grounding-opd}

Figure~\ref{fig:app-grounding-reward-entropy} plots training reward and policy entropy for the grounding ablation (\S\ref{sec:exp-analysis}). Without RLVR, mean training reward collapses from 0.56 to 0.00 by step~60, tracking the accuracy cliff in Fig.~\ref{fig:analysis-grounding}. Policy entropy spikes to 0.78 at step~49---just before the final crash---then drops to near zero, consistent with the policy degenerating into repetitive non-terminating generation. GRPO and RISE show stable entropy throughout.

\begin{figure}[h]
\centering
\includegraphics[width=0.85\textwidth]{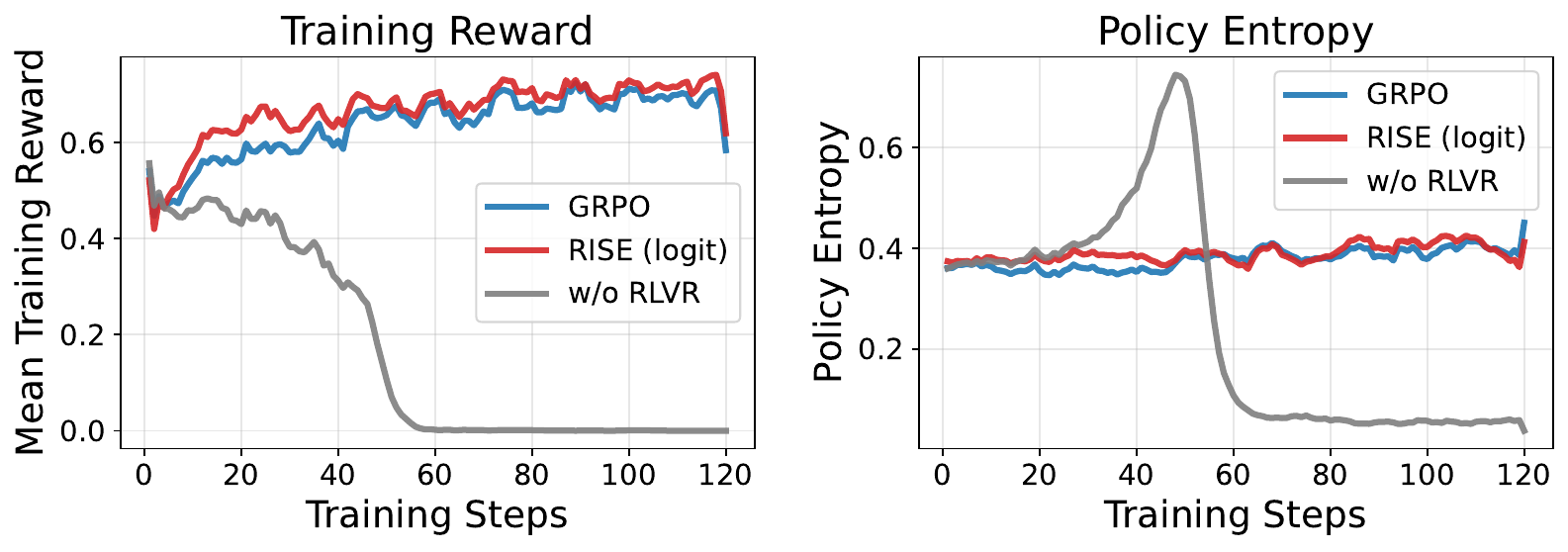}
\caption{\textbf{Grounding ablation: training reward and policy entropy.} Same three Qwen3-8B runs as Fig.~\ref{fig:analysis-grounding}. Without RLVR (grey), training reward collapses to zero (left) while policy entropy spikes then crashes (right), confirming the phase-transition nature of the collapse.}
\label{fig:app-grounding-reward-entropy}
\end{figure}

Table~\ref{tab:app-no-opd} reports the full without-OPD ablation results discussed in \S\ref{sec:exp-analysis}. Across all three model scales, removing the OPD phase (i.e., directly adopting $\theta_{\text{future}}$ without distillation) leaves Math Avg essentially unchanged relative to GRPO (60.0$\to$60.3 at 8B, 45.4$\to$45.6 at 1.7B, 28.0$\to$27.7 at 1.7B-Base), forfeiting the gains that RISE achieves. Per-benchmark accuracy shifts---e.g., the no-OPD arm gains on OlympiadBench at 8B but loses on the AIME pair, while the pattern reverses at 1.7B---but these redistributions do not translate into consistent improvement without the distillation phase.

\begin{table}[h]
\centering
\caption{\textbf{Without-OPD ablation.} Accuracy (\%) on math benchmarks. The w/o OPD variant adopts $\theta_{\text{future}}$ directly without distillation. All runs are configuration-matched to their RISE counterpart.}
\label{tab:app-no-opd}
\vspace{2pt}
\small
\setlength{\tabcolsep}{3pt}
\begin{tabular}{l|l|cccccc>{\columncolor{yellow!15}}c}
\toprule
\textbf{Model} & \textbf{Method} & {\scriptsize MATH500} & {\scriptsize AIME24} & {\scriptsize AIME25} & {\scriptsize AMC23} & {\scriptsize Minerva} & {\scriptsize OlyBench} & {\scriptsize Avg.} \\
\midrule
\multirow{4}{*}{Qwen3-8B}
& GRPO & 83.8 & 54.4 & 42.9 & 89.4 & 31.5 & 57.9 & 60.0 \\
& RISE (logit) & 84.8 & 56.9 & 46.7 & 91.6 & 32.5 & 62.4 & 62.5 \\
& RISE (weight) & 84.4 & 58.1 & 45.8 & 91.9 & 32.6 & 63.3 & 62.7 \\
& w/o OPD & 84.6 & 51.7 & 39.8 & 90.6 & 31.2 & 64.1 & 60.3 \\
\midrule
\multirow{4}{*}{Qwen3-1.7B}
& GRPO & 75.6 & 30.0 & 26.3 & 65.6 & 23.9 & 51.1 & 45.4 \\
& RISE (logit) & 75.2 & 36.3 & 33.3 & 78.8 & 24.9 & 52.9 & 50.2 \\
& RISE (weight) & 77.3 & 32.9 & 31.3 & 75.0 & 25.7 & 53.3 & 49.2 \\
& w/o OPD & 72.0 & 32.3 & 27.3 & 70.0 & 24.6 & 47.4 & 45.6 \\
\midrule
\multirow{4}{*}{\shortstack[l]{Qwen3-1.7B\\-Base}}
& GRPO & 60.6 & 9.4 & 6.9 & 42.5 & 19.3 & 29.2 & 28.0 \\
& RISE (logit) & 59.6 & 12.1 & 8.1 & 45.6 & 20.2 & 29.4 & 29.2 \\
& RISE (weight) & 62.4 & 10.4 & 5.8 & 38.8 & 20.7 & 30.8 & 28.1 \\
& w/o OPD & 58.8 & 10.6 & 5.8 & 43.1 & 19.4 & 28.6 & 27.7 \\
\bottomrule
\end{tabular}
\end{table}

\subsection{Extrapolation Strength and Decay Schedule}
\label{sec:app-beta-schedule}

Table~\ref{tab:app-ablation-detail} reports per-benchmark results for the $\beta_0$ and decay schedule ablations summarised in Table~\ref{tab:ablation-beta-schedule}. All runs use logit-space extrapolation.

\begin{table}[h]
\centering
\caption{\textbf{Detailed $\beta_0$ and decay schedule ablation results.} Accuracy (\%) on individual math benchmarks. The top block varies $\beta_0$ with linear decay; the middle block varies the schedule with $\beta_0\!=\!1.2$. Both blocks use Qwen3-8B. The bottom block varies $\beta_0$ on Qwen3-1.7B-Base (schedule noted in parentheses).}
\label{tab:app-ablation-detail}
\vspace{2pt}
\small
\setlength{\tabcolsep}{3pt}
\begin{tabular}{l|cccccc>{\columncolor{yellow!15}}c}
\toprule
\textbf{Configuration} & {\scriptsize MATH500} & {\scriptsize AIME24} & {\scriptsize AIME25} & {\scriptsize AMC23} & {\scriptsize Minerva} & {\scriptsize OlyBench} & {\scriptsize Avg.} \\
\midrule
\rowcolor{gray!15} \multicolumn{8}{l}{\textbf{Qwen3-8B --- $\beta_0$ sensitivity (linear decay)}} \\
$\beta_0 = 1.2$ & 84.8 & 56.9 & 46.7 & 91.6 & 32.5 & 62.4 & \textbf{62.5} \\
$\beta_0 = 1.3$ & 83.3 & 56.9 & 47.3 & 92.5 & 32.1 & 61.9 & 62.3 \\
$\beta_0 = 1.5$ & 83.5 & 55.2 & 47.9 & 91.2 & 31.9 & 61.6 & 61.9 \\
\midrule
\rowcolor{gray!15} \multicolumn{8}{l}{\textbf{Qwen3-8B --- decay schedule ($\beta_0 = 1.2$)}} \\
Linear & 84.8 & 56.9 & 46.7 & 91.6 & 32.5 & 62.4 & 62.5 \\
Cosine & 84.3 & 58.5 & 47.7 & 91.8 & 32.0 & 63.8 & \textbf{63.0} \\
Exponential & 82.4 & 57.5 & 47.1 & 91.2 & 31.7 & 61.8 & 62.0 \\
Fixed ($\beta\!=\!\beta_0$) & 83.8 & 56.2 & 44.4 & 90.6 & 31.6 & 63.9 & 61.8 \\
\midrule
\rowcolor{gray!15} \multicolumn{8}{l}{\textbf{Qwen3-1.7B-Base}} \\
$\beta_0 = 1.2$ (linear) & 59.6 & 12.1 & 8.1 & 45.6 & 20.2 & 29.4 & 29.2 \\
$\beta_0 = 1.2$ (cosine) & 58.6 & 10.8 & 8.3 & 42.5 & 19.8 & 29.7 & 28.3 \\
$\beta_0 = 1.3$ (linear) & 61.0 & 11.7 & 8.3 & 46.9 & 19.1 & 29.4 & \textbf{29.4} \\
$\beta_0 = 1.5$ (linear) & 59.2 & 11.0 & 6.9 & 45.6 & 19.5 & 28.6 & 28.5 \\
\bottomrule
\end{tabular}
\end{table}

\subsection{Resample vs.\ Rollout Reuse}
\label{sec:app-resample}

Table~\ref{tab:app-resample} reports per-benchmark results for the resample ablation discussed in \S\ref{sec:exp-ablations}. All runs use logit-space extrapolation with $\beta_0\!=\!1.2$ and linear decay.

\begin{table}[h]
\centering
\caption{\textbf{Resample vs.\ rollout reuse.} Accuracy (\%) on individual math benchmarks. ``Resample'' generates fresh rollouts from $\pi_{\theta_{n+1}'}$ for OPD; ``Rollout reuse'' reuses RLVR rollouts.}
\label{tab:app-resample}
\vspace{2pt}
\small
\setlength{\tabcolsep}{3pt}
\begin{tabular}{l|l|cccccc>{\columncolor{yellow!15}}c}
\toprule
\textbf{Model} & \textbf{Variant} & {\scriptsize MATH500} & {\scriptsize AIME24} & {\scriptsize AIME25} & {\scriptsize AMC23} & {\scriptsize Minerva} & {\scriptsize OlyBench} & {\scriptsize Avg.} \\
\midrule
\multirow{2}{*}{Qwen3-8B}
& Rollout reuse & 84.8 & 56.9 & 46.7 & 91.6 & 32.5 & 62.4 & 62.5 \\
& Resample & 83.7 & 58.5 & 47.9 & 90.6 & 31.3 & 63.1 & 62.5 \\
\midrule
\multirow{2}{*}{\shortstack[l]{Qwen3-1.7B\\-Base}}
& Rollout reuse & 59.6 & 12.1 & 8.1 & 45.6 & 20.2 & 29.4 & 29.2 \\
& Resample & 59.6 & 11.9 & 7.5 & 43.8 & 18.8 & 29.0 & 28.4 \\
\bottomrule
\end{tabular}
\end{table}

\subsection{Anchor Dynamics ($\eta$) Ablation}
\label{sec:app-anchor}

Table~\ref{tab:app-anchor} reports per-benchmark results for the anchor ablation discussed in \S\ref{sec:exp-ablations}. All runs use $\beta_0\!=\!1.2$. On OLMo3-7B-Instruct-SFT, the drop from EMA smoothing is substantial ($-6.3$ logit, $-2.7$ weight), likely because GRPO on OLMo produces large, stable per-step displacements that do not benefit from smoothing.

\begin{table}[h]
\centering
\caption{\textbf{Anchor ablation.} Accuracy (\%) on individual math benchmarks. $\eta\!=\!1$ uses the previous checkpoint as anchor; $\eta\!<\!1$ uses an EMA anchor. EMA helps on Qwen3-1.7B but degrades on OLMo3-7B-Instruct-SFT.}
\label{tab:app-anchor}
\vspace{2pt}
\small
\setlength{\tabcolsep}{3pt}
\begin{tabular}{l|cccccc>{\columncolor{yellow!15}}c}
\toprule
\textbf{Configuration} & {\scriptsize MATH500} & {\scriptsize AIME24} & {\scriptsize AIME25} & {\scriptsize AMC23} & {\scriptsize Minerva} & {\scriptsize OlyBench} & {\scriptsize Avg.} \\
\midrule
\rowcolor{gray!15} \multicolumn{8}{l}{\textbf{Qwen3-1.7B --- Logit-space}} \\
$\eta = 1.0$ & 77.4 & 29.4 & 26.0 & 71.3 & 24.8 & 48.6 & 46.2 \\
$\eta = 0.95$ & 75.4 & 32.7 & 29.8 & 73.1 & 25.0 & 50.2 & 47.7 \\
$\eta = 0.3$ & 74.6 & 33.3 & 29.4 & 76.9 & 25.5 & 50.6 & 48.4 \\
$\eta = 0.1$ & 75.2 & 36.3 & 33.3 & 78.8 & 24.9 & 52.9 & \textbf{50.2} \\
\midrule
\rowcolor{gray!15} \multicolumn{8}{l}{\textbf{Qwen3-1.7B --- Weight-space}} \\
$\eta = 1.0$ & 75.8 & 31.3 & 29.2 & 67.5 & 24.5 & 50.0 & 46.4 \\
$\eta = 0.3$ & 76.3 & 31.3 & 29.0 & 73.1 & 24.8 & 52.5 & 47.8 \\
$\eta = 0.1$ & 77.3 & 32.9 & 31.3 & 75.0 & 25.7 & 53.3 & \textbf{49.2} \\
\midrule
\rowcolor{gray!15} \multicolumn{8}{l}{\textbf{OLMo3-7B-Instruct-SFT --- Logit-space}} \\
$\eta = 1.0$ & 82.9 & 46.9 & 36.0 & 83.1 & 27.6 & 61.9 & \textbf{56.4} \\
$\eta = 0.1$ & 78.4 & 31.3 & 32.3 & 76.9 & 25.4 & 56.2 & 50.1 \\
\midrule
\rowcolor{gray!15} \multicolumn{8}{l}{\textbf{OLMo3-7B-Instruct-SFT --- Weight-space}} \\
$\eta = 1.0$ & 82.8 & 42.5 & 32.5 & 76.9 & 27.8 & 59.6 & \textbf{53.7} \\
$\eta = 0.1$ & 80.6 & 35.0 & 31.0 & 75.0 & 26.6 & 57.9 & 51.0 \\
\bottomrule
\end{tabular}
\end{table}

\subsection{Compute-Matched Comparison}
\label{sec:app-compute-matched}

Table~\ref{tab:app-compute-matched} reports per-benchmark results for the compute-matched comparison discussed in \S\ref{sec:exp-ablations}. GRPO-2$\times$ performs a second inner-loop gradient pass on the same rollouts within each iteration, matching the total gradient budget of RISE's RLVR + OPD phases.

\begin{table}[h]
\centering
\caption{\textbf{Compute-matched comparison: RISE vs.\ GRPO-2$\times$.} Accuracy (\%) on math benchmarks. GRPO-2$\times$ performs a second inner-loop gradient pass on the same rollouts within each iteration, matching RISE's total gradient budget at equal sampling cost.}
\label{tab:app-compute-matched}
\vspace{2pt}
\small
\setlength{\tabcolsep}{3pt}
\begin{tabular}{l|l|cccccc>{\columncolor{yellow!15}}c}
\toprule
\textbf{Model} & \textbf{Method} & {\scriptsize MATH500} & {\scriptsize AIME24} & {\scriptsize AIME25} & {\scriptsize AMC23} & {\scriptsize Minerva} & {\scriptsize OlyBench} & {\scriptsize Avg.} \\
\midrule
\multirow{4}{*}{Qwen3-8B}
& GRPO & 83.8 & 54.4 & 42.9 & 89.4 & 31.5 & 57.9 & 60.0 \\
& GRPO-2$\times$ & 84.5 & 53.5 & 42.3 & 92.5 & 31.4 & 58.9 & 60.5 \\
& RISE (logit) & 84.8 & 56.9 & 46.7 & 91.6 & 32.5 & 62.4 & 62.5 \\
& RISE (weight) & 84.4 & 58.1 & 45.8 & 91.9 & 32.6 & 63.3 & \textbf{62.7} \\
\midrule
\multirow{4}{*}{Qwen3-1.7B}
& GRPO & 75.6 & 30.0 & 26.3 & 65.6 & 23.9 & 51.1 & 45.4 \\
& GRPO-2$\times$ & 74.5 & 32.9 & 30.6 & 70.6 & 25.6 & 49.7 & 47.3 \\
& RISE (logit) & 75.2 & 36.3 & 33.3 & 78.8 & 24.9 & 52.9 & \textbf{50.2} \\
& RISE (weight) & 77.3 & 32.9 & 31.3 & 75.0 & 25.7 & 53.3 & 49.2 \\
\bottomrule
\end{tabular}
\end{table}

\subsection{Reproducibility Across Random Seeds}
\label{sec:app-seed-variance}

Table~\ref{tab:seed-variance} reports mean $\pm$ standard deviation across three random seeds for Qwen3-1.7B and Qwen3-8B on DAPOMath. Table~\ref{tab:main-math} reports a single representative seed; the multi-seed statistics confirm that RISE's gains are well outside seed-to-seed variance at both scales. At 1.7B, RISE (logit) improves Math Avg by $+4.5$ over GRPO ($49.6$ vs.\ $45.1$), roughly $10\times$ the per-method standard deviation ($\leq 0.6$). At 8B, RISE (weight) improves Math Avg by $+2.2$ over GRPO ($62.4$ vs.\ $60.1$) and by $+2.8$ over SDAR ($59.6$), again far exceeding the per-method standard deviation ($\leq 0.6$). Individual competition benchmarks (AIME, AMC) exhibit larger variance due to small problem sets, but the aggregate Math Avg is stable across all methods.

\begin{table}[h]
\centering
\caption{\textbf{Multi-seed reproducibility (DAPOMath): mean $\pm$ std across 3 random seeds.} Standard deviations are shown in \dt{gray}. RISE's Math Avg gains over GRPO far exceed seed-to-seed variance at both scales.}
\label{tab:seed-variance}
\vspace{2pt}
\scriptsize
\setlength{\tabcolsep}{2pt}
\begin{tabular}{l|ccccccc|cccc}
\toprule
\multirow{2}{*}{\textbf{Method}} & \multicolumn{7}{c|}{\textbf{In-Domain Math}} & \multicolumn{4}{c}{\textbf{OOD}} \\
& {\tiny MATH500} & {\tiny AIME24} & {\tiny AIME25} & {\tiny AMC23} & {\tiny Minerva} & {\tiny OlyBench} & {\tiny Avg.} & {\tiny GPQA} & {\tiny IFEval} & {\tiny MMLU} & {\tiny Avg.} \\
\midrule
\rowcolor{gray!15} \multicolumn{12}{l}{\textbf{Qwen3-8B (DAPOMath)}} \\
GRPO & 84.0 \dt{±0.5} & 53.1 \dt{±1.2} & 42.5 \dt{±0.3} & 88.5 \dt{±2.1} & 31.4 \dt{±0.4} & 61.3 \dt{±2.7} & 60.1 \dt{±0.2} & 56.3 \dt{±0.5} & 82.2 \dt{±0.5} & 71.6 \dt{±1.2} & 70.0 \dt{±0.4} \\
SDAR & 84.7 \dt{±0.4} & 48.9 \dt{±1.0} & 39.1 \dt{±1.2} & 90.0 \dt{±1.3} & 30.6 \dt{±0.6} & 64.2 \dt{±0.7} & 59.6 \dt{±0.6} & 55.3 \dt{±0.3} & 83.2 \dt{±0.6} & 71.4 \dt{±1.5} & 69.9 \dt{±0.6} \\
\textbf{RISE (weight)} & 84.9 \dt{±0.5} & 55.8 \dt{±1.7} & 44.1 \dt{±1.2} & 92.8 \dt{±0.6} & 32.3 \dt{±0.2} & 64.4 \dt{±0.9} & \textbf{62.4} \dt{±0.2} & 58.1 \dt{±0.7} & 84.1 \dt{±0.5} & 72.3 \dt{±1.0} & 71.5 \dt{±0.4} \\
\midrule
\rowcolor{gray!15} \multicolumn{12}{l}{\textbf{Qwen3-1.7B (DAPOMath)}} \\
GRPO & 75.4 \dt{±0.1} & 28.2 \dt{±1.3} & 26.2 \dt{±0.3} & 66.5 \dt{±1.6} & 24.1 \dt{±0.5} & 50.4 \dt{±0.6} & 45.1 \dt{±0.4} & 31.2 \dt{±0.8} & 68.7 \dt{±0.2} & 54.4 \dt{±0.7} & 51.4 \dt{±0.1} \\
\textbf{RISE (logit)} & 75.3 \dt{±0.2} & 34.5 \dt{±1.5} & 31.7 \dt{±1.5} & 78.8 \dt{±2.0} & 25.0 \dt{±0.1} & 52.4 \dt{±0.7} & \textbf{49.6} \dt{±0.4} & 34.3 \dt{±0.5} & 69.1 \dt{±1.0} & 55.5 \dt{±1.5} & 53.0 \dt{±0.7} \\
\textbf{RISE (weight)} & 76.5 \dt{±0.6} & 32.3 \dt{±0.8} & 30.2 \dt{±0.8} & 74.8 \dt{±0.5} & 25.1 \dt{±0.5} & 51.7 \dt{±1.3} & 48.4 \dt{±0.6} & 35.9 \dt{±1.1} & 68.7 \dt{±1.4} & 55.3 \dt{±1.5} & 53.3 \dt{±1.0} \\
\bottomrule
\end{tabular}
\end{table}

\end{document}